\documentclass{article}

\usepackage[preprint]{neurips_2026}
\usepackage{amsmath}
\usepackage{amssymb}
\usepackage{mathtools}
\usepackage{amsthm}
\usepackage{microtype}
\usepackage{graphicx}
\usepackage{subcaption}
\usepackage{booktabs} % for professional tables

\usepackage{mathrsfs}
\usepackage{amssymb}
\usepackage{bm}
\usepackage{amsmath}
\usepackage{dsfont}
\usepackage{booktabs}
\usepackage{epstopdf}
\usepackage{multirow}
\usepackage{comment}
\usepackage{endnotes}
\usepackage{hyperref}

\usepackage{bibentry}
\usepackage[normalem]{ulem}
\usepackage{graphicx}
\usepackage{enumerate}
\usepackage{enumitem}
\usepackage[most]{tcolorbox}
\usepackage{subcaption}
\usepackage{etoc}
\usepackage{xcolor,xspace}
\definecolor{mydarkred}{rgb}{0.6,0,0}
\definecolor{myblue}{HTML}{268BD2}
\usepackage{inconsolata}
\usepackage[table]{xcolor}
\newcommand{\ourmodel}{{\texttt{\textbf{TrajEvolve}}}\xspace}
\usepackage{hyperref}
\hypersetup{
  colorlinks=true,
  linkcolor=mydarkred,
  citecolor=myblue
}
\usepackage{multirow}
\usepackage{wrapfig}
\usepackage{booktabs}
\usepackage{graphicx}
\usepackage[capitalize,noabbrev]{cleveref}
\usepackage{algorithm}
\usepackage{algorithmic}
\usepackage{thmtools}
\usepackage{thm-restate}

\usepackage{xcolor}
\usepackage{booktabs}
\usepackage{makecell}
\definecolor{NecessaryColor}{HTML}{C0392B}    % red, matches "Ours" in our figures
\definecolor{RedundantColor}{HTML}{7F7F7F}    % gray
\newcommand{\necTag}{\textcolor{NecessaryColor}{\textbf{Necessary}}}
\newcommand{\redTag}{\textcolor{RedundantColor}{\textit{Redundant}}}

\theoremstyle{plain}
\newtheorem{theorem}{Theorem}[section]
\newtheorem{proposition}[theorem]{Proposition}
\newtheorem{lemma}[theorem]{Lemma}

\theoremstyle{definition}
\newtheorem{definition}[theorem]{Definition}

\theoremstyle{remark}
\newtheorem{remark}[theorem]{Remark}

\usepackage[utf8]{inputenc} % allow utf-8 input
\usepackage[T1]{fontenc}    % use 8-bit T1 fonts
\usepackage{hyperref}       % hyperlinks
\usepackage{url}            % simple URL typesetting
\usepackage{booktabs}       % professional-quality tables
\usepackage{amsfonts}       % blackboard math symbols
\usepackage{nicefrac}       % compact symbols for 1/2, etc.
\usepackage{microtype}      % microtypography
\usepackage{xcolor}         % colors
\usepackage{graphicx}
\usepackage{hyperref}

\title{TrajEvolve: Trajectory Evolution for Reinforcement Learning 
     with Hindsight Credit Assignment}

\author{%
  Yi Wen$^{1}$\thanks{Both authors contributed equally to this work.},
  Hao Chen$^{2}$\footnotemark[1],
  Wanyu Wang$^{1}$, Maolin Wang$^{1}$, Pengyue Jia$^{1}$, Derong Xu$^{1}$, \\
  \textbf{Huize Tan$^{3}$, Yingyi Zhang$^{1}$, Wenlin Zhang$^{1}$,
  Weihong Luo$^{2}$, Xiku Du$^{2}$, Xiangyu Zhao$^{1}$} \\
  $^{1}$City University of Hong Kong, $^{2}$Tencent, $^{3}$Nanjing University \\
  \texttt{yiwen23-c@my.cityu.edu.hk},
  \texttt{xianzhao@cityu.edu.hk}
}

\begin{document}

\maketitle

\begin{abstract}
The Reinforcement Learning (RL) paradigm has
achieved remarkable success in enhancing the reasoning capabilities of Large Language Models
(LLMs). However, when applied to long-horizon tasks, it faces severe challenges: i) Due to the high task complexity, the model struggles to sample enough successful trajectories, resulting in slow convergence. ii) Relying solely on sparse final rewards causes all steps to be updated indiscriminately, encouraging redundant step generation. When both the quantity and quality of successful trajectories become difficult to guarantee, the performance is severely constrained. This dilemma stems from the limitations: existing RL paradigms rely entirely on the reasoning capability of base models, lacking the ability to actively optimize trajectories. To address the above issues, we propose a novel trajectory evolution paradigm with hindsight credit assignment, termed \ourmodel. TrajEvolve not only relies on the trajectory inferred by the model but also generates ``evolutionary trajectories'' by refining low-quality steps in historical trajectories, thereby  exposing the policy to high-quality compositions that lie within the base model's support but are rarely sampled within practical training budgets.  Furthermore, TrajEvolve reformulates step-level continuous credit assignment as a binary classification task, whether each step is necessary for task completion. This simplification not only reduces the learning difficulty but also enables step labels to be obtained through counterfactual verification, achieving precise identification of redundant steps. Extensive experiments on two datasets demonstrate the superiority of the proposed method\footnote{\url{https://github.com/wenyiwy99/TrajEvolve}}.
\end{abstract}

\vspace{-0.5 em}
\section{Introduction}

Large Language Models (LLMs)~\citep{gpt4,qwen3} have achieved remarkable breakthroughs in language understanding and generation. Among post-training techniques for LLMs, Reinforcement Learning (RL), which iteratively improves models through exploration and reward-driven optimization, has emerged as one of the most influential paradigms. Group Relative Policy Optimization~\citep{rl_grpo} and its variants, which generate multiple rollout trajectories and leverage inter-group reward comparisons for policy optimization, have gained significant attention.

Despite great success achieved, RL paradigms face severe challenges when applied to long-horizon decision-making tasks~\citep{rl_gigpo,alfworld}. Such tasks typically require the model to interact with external environments over dozens of steps, with reward signals provided only at the final outcome. The impact of any individual action may not manifest until many steps later, substantially increasing the difficulty of reinforcement learning. They face two typical dilemmas in this setting: i) Due to high task complexity and vast exploration space, the model struggles to sample sufficient successful trajectories, resulting in sparse learning signals and slow convergence. ii) Relying solely on sparse final rewards causes all steps to be updated indiscriminately, encouraging the generation of redundant steps in successful trajectories. When neither the quantity nor the quality of trajectories can be guaranteed, the performance of RL paradigms is severely constrained.

These issues fundamentally stem from a critical limitation of existing RL paradigms: \textit{\uline{their performance is bounded by the base model's reasoning capability, lacking the ability to actively optimize trajectories}}. Recent studies~\citep{rl_best} have pointed out that RL merely improves the sampling efficiency of correct reasoning paths already within the base model's repertoire, rather than eliciting fundamentally new reasoning patterns. This raises a natural question: 
\definecolor{cvprblue}{rgb}{0.21,0.49,0.74}
\begin{tcolorbox}[colframe=black!50, colback=cvprblue!8, boxrule=1.5pt, arc=2mm, top=4pt, bottom=4pt, left=4pt, right=4pt,  boxsep=1pt]
\raisebox{-0.2\baselineskip}{\includegraphics[height=1\baselineskip]{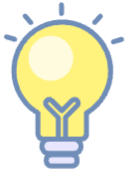}}
\textit{Can we proactively construct higher-quality trajectories during training to help models to overcome the sampling barrier that on-policy RL inherently faces in long-horizon tasks?}
\end{tcolorbox}

The key challenge in addressing this problem~\citep{rl_arpo} lies in accurately identifying the valuable steps, which is essentially a step-level credit assignment problem. Existing credit assignment methods can be categorized into two types: i) Heuristic-based signals~\citep{rl_empg}, which rely on indicators such as LLM output confidence that have no direct correlation with task outcomes. Such signals may reinforce confident but incorrect outputs. ii) Outcome-based signals~\citep{rl_gigpo, rl_tree-grpo}, which estimate credit through multiple rollouts on intermediate steps, evaluating credit with final rewards. These methods focus solely on which steps are likely to lead to success, but such biased estimations cannot identify steps that are redundant yet do not affect the final outcome.  In summary, existing methods struggle to determine the true contribution of individual steps to the task, let alone identify redundant steps in the success trajectory. Moreover, accurate credit assignment remains extremely challenging given the vast solution space of long-horizon tasks.

To address these challenges, we propose \ourmodel, a novel trajectory evolution for reinforcement learning with hindsight credit assignment. TrajEvolve not only relies on the trajectory inferred by the model but also generates ``evolutionary trajectories'' by refining low-quality steps in historical trajectories, thereby exposing the policy to high-quality compositions that lie within the base model's support but are rarely sampled within practical training budgets. Notably, TrajEvolve reformulates continuous credit assignment as a binary classification of whether each step is necessary for task. This simplification compresses the infinite solution space into a finite subspace and can be verified through counterfactual experiments, thereby reducing learning difficulty and ensuring sufficient labeled data for Discriminator training.  Specifically, TrajEvolve obtains step-level redundancy labels through hindsight verification and trains a discriminator model to evolve historical trajectories. The evolved trajectories serve as high-quality samples and are jointly utilized with policy-generated trajectories for reinforcement learning optimization.  The main contributions of the paper are as follows:
\vspace{-0.5em}
\begin{itemize}[leftmargin=*]
    \item We propose a trajectory evolution paradigm that not only trains on the model-inferred trajectory but also generates ``Evolutionary Trajectories'' by refining low-quality steps in historical trajectories, thereby exposing the policy to high-quality compositions that are rarely sampled within training.
    \item We reformulate continuous credit assignment as a binary classification of whether each step is necessary for task completion. This simplification compresses the infinite solution space into a verifiable subspace, substantially reducing difficulty and achieving effective credit assignment.
    \item Experiments on ALFWorld and WebShop validate TrajEvolve's effectiveness.

\end{itemize}
\vspace{-1.3em}

% \begin{figure}[t]
% \centering
% \includegraphics[width=0.99\columnwidth]{fig/compare.pdf}
% \caption{ .}
% \label{fig:intro}
% \end{figure}

 \begin{figure*}[t]
\centering
\setlength{\abovecaptionskip}{0pt}
\includegraphics[width=1\linewidth]{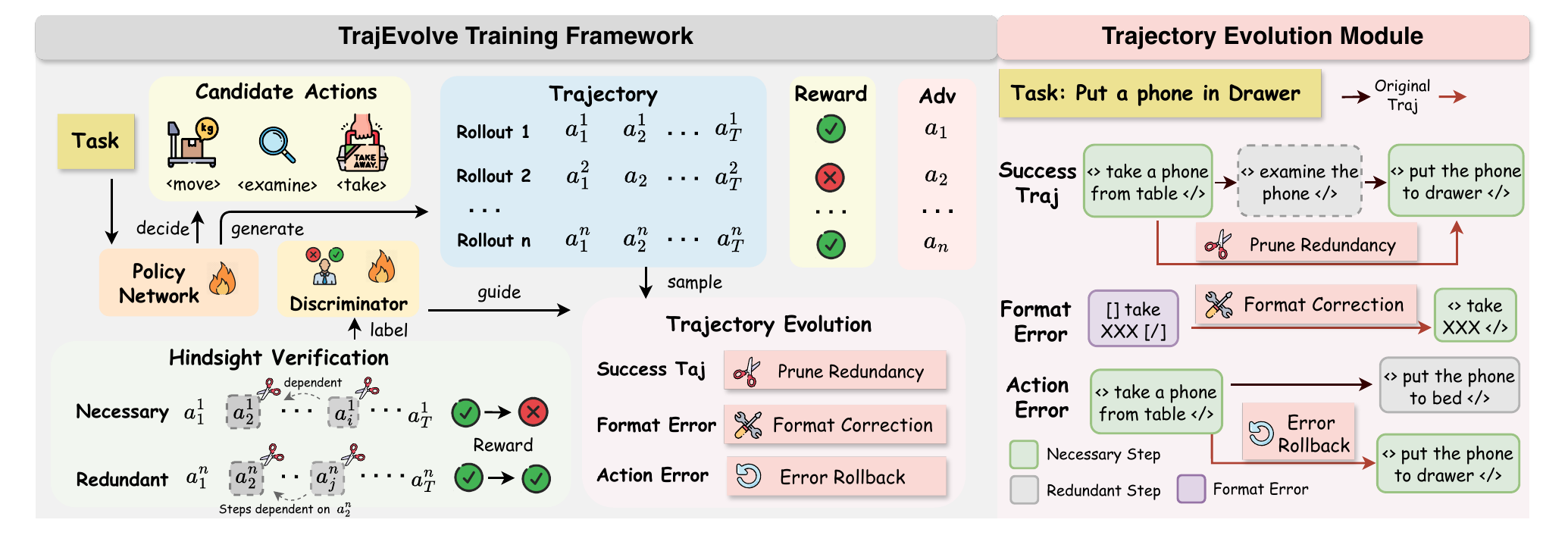}
\caption{Overview of the TrajEvolve training framework. \textbf{Left:} Training pipeline with hindsight verification and trajectory evolution module. \textbf{Right:} Three trajectory evolution strategies.}
\label{fig:intro}
\vspace{-1.3em}
\end{figure*}
\section{Preliminaries}
\vspace{-0.5em}
\textbf{Problem Setup.} Following~\citep{rl_gigpo}, we consider a general multi-step decision-making setting where an LLM agent interacts with an environment based on a task description $x$. At each time step $t = 1, 2, \ldots, T$, the agent observes a state $s_t = (o_t, \mathcal{A}_t)$, comprising an observation $o_t$ and the set of available actions $\mathcal{A}_t$, then selects an action $a_t \in \mathcal{A}_t$ according to its policy $\pi_\theta(a_t \mid o_t, \mathcal{A}_t)$. The environment subsequently transitions to state $s_{t+1}$. A complete $T$-step episode forms a trajectory $\tau = \{(s_1, a_1), (s_2, a_2), \ldots, (s_T, a_T)\}$. Notably, only a sparse binary reward $r \in \{0, 1\}$ is provided at episode termination in our setting,  indicating task success or failure. 
% This sparse and delayed feedback makes credit assignment particularly challenging in long-horizon tasks.

\textbf{Group-based RL.} Group-based reinforcement learning~\citep{rl_grpo} bypasses auxiliary critic networks by computing advantages through within-group normalization. Given a task $x$, the policy generates $N$ candidate trajectories $\{\tau_1, \ldots, \tau_N\}$ with scalar rewards $R(\tau_i)$. Advantages are computed as:$A(\tau_i) = \frac{R(\tau_i) - \mu}{\sigma}$, 
where $\mu$ and $\sigma$ denote the group mean and standard deviation, respectively. This yields a memory-efficient and scalable framework. 

\textbf{Credit Assignment.} Group-based methods assign uniform credit to all steps based solely on terminal rewards, which becomes problematic where steps contribute unequally. Credit assignment addresses this by deriving step-level scores:
\begin{equation}
    c_t = g(s_t, a_t, \tau, R(\tau)), \quad t = 1, \ldots, T.
\end{equation}
where $c_t$ is the credit score for step $t$, $s_t$ and $a_t$ denote the state and action, $\tau$ represents the complete trajectory, and $R(\tau)$ is the terminal reward. However, existing methods attempt to estimate precise credit scores for every step, yet the vast solution space inherent to long-horizon tasks renders exact credit quantification exceedingly difficult.

 \vspace{-0.5 em}

\section{TrajEvolve}
\vspace{-0.5em}
In this section, we present \ourmodel for step-level credit assignment in long-horizon agent tasks. We begin by reformulating the credit assignment problem from continuous score estimation to binary necessity classification, which substantially reduces the solution space and enables reliable label acquisition (\S\ref{sec:reformulation}). We then describe a hindsight verification procedure that obtains ground-truth necessity labels through counterfactual execution in the environment (\S\ref{sec:verification}). Next, we integrate the obtained labels into policy optimization via redundancy-aware reward shaping and step-level advantage adjustment (\S\ref{sec:optimization}). Finally, we introduce a trajectory evolution mechanism that refines historical trajectories by eliminating redundant steps and correcting errors, enabling the model to learn from improved trajectories beyond its current reasoning capability (\S\ref{sec:evolution}).
\vspace{-5 pt}

\subsection{Problem Reformulation: Continuous to Discrete}
\vspace{-5 pt}
\label{sec:reformulation}
%  given a trajectory $\tau = \{(s_t, a_t)\}_{t=1}^{T}$, we define:
% \begin{equation}
%     y_t = \mathbb{I}\left[\mathcal{V}(\tau \setminus \{(s_t, a_t)\}) < \mathcal{V}(\tau)\right], \quad t = 1, \ldots, T,
% \end{equation}
% where $\mathcal{V}(\cdot)$ denotes the task success indicator and $y_t \in \{0, 1\}$ represents the necessity label for step $t$.

Existing credit assignment methods either rely on predefined scoring approaches or train implicit process reward models~\citep{prime,rl_istar} to estimate step-level contributions. However, these approaches fundamentally attempt to predict precise continuous credit values over an infinite solution space, which becomes intractable for long-horizon tasks. Moreover, since supervision signals are only derived from final results, the feedback is inherently delayed, making it difficult to accurately attribute credit to individual steps.

We observe that for effective policy learning, the critical signal is not the precise credit magnitude, but whether each step contributes positively to task success. This is particularly pronounced in long-horizon tasks, where exploratory interactions with the environment introduce numerous redundant steps that outcome-based methods fail to properly process. This insight motivates us to reformulate credit assignment as a binary \textit{necessary} or \textit{redundant} classification problem. Formally, given a trajectory $\tau = \{(s_t, a_t)\}_{t=1}^{T}$, we define a step-wise  discriminator $\mathcal{D}_\phi: \mathcal{S} \times \mathcal{A} \times \mathcal{T} \rightarrow \{0, 1\}$:
\begin{equation}
    \hat{y}_t = \mathcal{D}_\phi(s_t, a_t, \tau),
\end{equation}
where $\hat{y}_t = 1$ indicates that step $t$ is predicted as \textit{necessary}, and $\hat{y}_t = 0$ as \textit{redundant}.

We emphasize the superiority of reformulation as follows: i) \textit{\uline{it compresses the infinite solution space from continuous values to a finite hypothesis space}}, substantially reducing learning complexity and enabling even small models to achieve strong performance. ii) \textit{\uline{Unlike continuous credit scores whose ground truth is practically unobtainable, binary credit labels can be reliably acquired through hindsight verification}}, ensuring sufficient supervision for training the discriminator to identify redundant versus necessary steps. We detail the verification procedure in the next subsection.
\vspace{-0.5 em}

\subsection{Hindsight Verification on Environment-Changing Steps}
\label{sec:verification}

A key advantage of the reformulation is that credit labels can be reliably acquired through hindsight verification in the environment. In this section, we describe how to obtain ground-truth credit labels.

\textbf{Action Categorization.} We distinguish two types of actions in 
environment-interactive tasks: \textit{environment-changing actions} that 
alter the environment state (e.g., \textit{take}, \textit{move}), and 
\textit{information-gathering actions} that only query the environment 
without modifying it (e.g.,  \textit{examine}). 
Hindsight verification is well-defined only for the former: 
removing an information-gathering step neither breaks downstream 
feasibility nor alters task outcomes, so its necessity cannot be 
determined by environment replay. We therefore apply hindsight 
verification exclusively to environment-changing steps and assign 
$y_t = 1$ to all information-gathering steps by default. In ALFWorld 
and WebShop, environment-changing actions account for the vast 
majority of steps; the extension to environments with substantial 
information-gathering steps (e.g., ScienceWorld) is discussed in 
Appendix~\ref{sec:scienceworld}.

To determine whether a step is necessary, we must account for its downstream effects.

\begin{definition}[Dependent Steps]
    Given a successful trajectory $\tau = \{(s_t, a_t)\}_{t=1}^{T}$ and a step index $t$, we define the \textit{dependent step set} $\mathcal{D}(t)$ as the set of subsequent steps whose actions become infeasible when step $t$ is removed:
    \begin{equation}
    \mathcal{D}(t) = \left\{ j : j > t, \; a_j \notin \mathcal{A}(s_j^{-t}) \right\},
\end{equation}
where $s_j^{-t}$ denotes the state at step $j$ when step $t$ is skipped, and $\mathcal{A}(s)$ is the set of admissible actions at state $s$.
\end{definition}

\begin{remark}
Accurate computation of the dependent step set $\mathcal{D}(t)$ is critical for subsequent credit labeling. Errors in identifying dependent steps can cause information leakage or incorrect credit labels inheritance via Proposition \ref{pro:transitivity}, ultimately misleading the entire training process.
\end{remark}

\begin{remark}
    In environment-interactive scenarios, identifying dependent steps is straightforward. If removing step $t$ causes a subsequent action to become infeasible, that action is dependent on step $t$. Action feasibility can be directly read from environment feedback—task execution environments such as ALFWorld explicitly return an execution-status signal, making this judgment trivial.
\end{remark}

Besides the aforementioned definitions, we also require the following properties to determine steps:

\begin{proposition}[Necessity]
\label{pro:necessity}
Step $t$ is \textbf{necessary} only if removing step $t$ along with its dependent steps causes task failure in a successful trajectory $\tau$:
\begin{equation}
    \mathcal{V}\left(\tau \setminus \left(\{t\} \cup \mathcal{D}(t)\right)\right) = 0,
\end{equation}
where $\mathcal{V}(\cdot) \in \{0, 1\}$ is the task success indicator. 
\end{proposition}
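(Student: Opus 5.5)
The plan is a contrapositive argument. Suppose $\mathcal{V}(\tau \setminus (\{t\} \cup \mathcal{D}(t))) = 1$ for a successful trajectory $\tau$. I will show that step $t$ is then not necessary, in the sense that some admissible execution reaches the goal without it. Necessity can therefore only hold when this value is $0$.

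The first step is to pin down what ``necessary'' means operationally. Step $t$ is necessary if every successful execution obtained from $\tau$ by deleting steps must contain step $t$. In other words, no feasible sub-trajectory of $\tau$ that omits $t$ achieves $\mathcal{V}=1$. This is the notion the binary label $y_t$ is meant to capture.

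The second step is to check that the truncated sequence $\tau' := \tau \setminus (\{t\} \cup \mathcal{D}(t))$ is a well-defined, admissible sub-trajectory of $\tau$. By the definition of $\mathcal{D}(t)$, every later action $a_j$ with $j \notin \mathcal{D}(t)$ satisfies $a_j \in \mathcal{A}(s_j^{-t})$, so it remains executable after step $t$ is skipped.

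The main obstacle is that $s_j^{-t}$ is defined by removing only step $t$, not step $t$ together with all of $\mathcal{D}(t)$. Removing the dependent steps could in principle render further steps infeasible; these are second-order dependents. I would handle this in one of two ways:
\begin{itemize}
\item Read $\mathcal{D}(t)$ as closed under this cascading, by iteratively adding any step that becomes infeasible. The fixed point still satisfies the definition with respect to the actual replayed states.
\item Argue directly that the replay used to evaluate $\mathcal{V}(\tau')$ in the environment only succeeds if every remaining action is executed. A value $\mathcal{V}(\tau')=1$ therefore already certifies feasibility of $\tau'$.
\end{itemize}
I would adopt the second reading, since $\mathcal{V}$ is computed by environment replay, which returns failure on infeasible actions as noted in the preceding remark.

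The final step follows from there. If $\mathcal{V}(\tau')=1$, then $\tau'$ is a feasible successful execution that omits step $t$, so step $t$ is not necessary. Taking the contrapositive gives the claimed condition $\mathcal{V}(\tau \setminus (\{t\} \cup \mathcal{D}(t))) = 0$ whenever $t$ is necessary.

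I would add a brief remark that the converse is not claimed. Failure of $\tau'$ does not rule out some other deletion pattern succeeding without $t$. This is consistent with the ``only if'' phrasing of Proposition \ref{pro:necessity}.
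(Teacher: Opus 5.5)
The paper gives no proof of this proposition. It is asserted, alongside Sufficiency and Transitivity, as a property motivating Algorithm~\ref{alg:forward_ablation}, and the only supporting text is the Remark that the converse fails. So your proposal does not compete with an argument of the authors; it supplies the missing formalization. With your definition (step $t$ is necessary iff every successful sub-trajectory of $\tau$ contains $t$), the contrapositive is correct. It also agrees with the paper's cabinet example: the second \emph{go to cabinet} is redundant because \emph{go to cabinet} $\to$ \emph{open cabinet} $\to$ \emph{take clothes} succeeds.

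Be aware that the definition does all the work. Nothing specific to $\mathcal{D}(t)$ is used, since the same reasoning gives $\mathcal{V}(\tau \setminus S) = 0$ for every $S \ni t$. You should also state explicitly that the claim is about this semantic notion and not about the label $y_t$ produced by Algorithm~\ref{alg:forward_ablation}. For that label the claim can fail, because the algorithm replays $\tau \setminus (\{t\} \cup \mathcal{R} \cup \mathcal{D}(t))$ rather than $\tau \setminus (\{t\} \cup \mathcal{D}(t))$. Suppose steps $t_1 < t_2$ each suffice to enable a later step $t_3$ that success requires, and nothing else depends on them. Then $t_1$ is labeled redundant and enters $\mathcal{R}$. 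At $t_2$ one has $\mathcal{D}(t_2) = \emptyset$ and the replay of $\tau \setminus \{t_1, t_2\}$ fails, so $y_{t_2} = 1$, yet $\mathcal{V}(\tau \setminus \{t_2\}) = 1$.

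The one step that does not hold as written is your feasibility argument. Replay does not return failure on an inexecutable action. In ALFWorld such an action produces ``Nothing happens'' and the episode continues, which is exactly how the paper's case study reads off $\mathcal{D}(t)$. The remark you cite only says feasibility is observable from environment feedback. Hence $\mathcal{V}(\tau') = 1$ does not certify that every action of $\tau'$ executed. The repair is easy, and either of two routes works:
\begin{itemize}
\item Drop ``feasible'' from the definition. Your first phrasing, ``every successful execution obtained from $\tau$ by deleting steps contains $t$'', never needed it, and with it your second step and the discussion of second-order dependents become unnecessary.
\item Note that an inexecutable action leaves the state unchanged. Deleting all such actions from $\tau'$ then yields a sub-trajectory with the same state sequence and outcome, which is feasible, successful, and still omits $t$.
\end{itemize}
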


\begin{remark}
The converse of Proposition~\ref{pro:necessity} does not hold: task failure upon removal does not imply the step is necessary Consider a trajectory: \textit{go to cabinet} $\rightarrow$ \textit{open cabinet} $\rightarrow$ \textit{go to bathroom} $\rightarrow$ \textit{go to cabinet} $\rightarrow$ \textit{take clothes}. Both \textit{go to bathroom} and the second \textit{go to cabinet} are redundant, yet removing the latter alone still causes failure. 
\end{remark}
%: 

\begin{proposition}[Sufficiency]
\label{pro:sufficiency}
Step $t$ is \textbf{redundant} if removing step $t$ along with its dependent steps preserves task success in a successful trajectory $\tau$:
\begin{equation}
    \mathcal{V}\left(\tau \setminus \left(\{t\} \cup \mathcal{D}(t)\right)\right) = 1.
\end{equation}
 
\end{proposition}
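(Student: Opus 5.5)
The plan is to argue directly from the definitions of necessity and redundancy, treating the sufficiency direction as the contrapositive counterpart of Proposition~\ref{pro:necessity}. I take ``step $t$ is necessary'' to mean that every successful completion of the task from the given trajectory must retain the effect of step $t$. Equivalently, no trajectory obtained from $\tau$ by deleting step $t$ can succeed. ``Redundant'' is its negation. The argument therefore consists of exhibiting, under the hypothesis $\mathcal{V}(\tau \setminus (\{t\}\cup\mathcal{D}(t)))=1$, an explicit successful trajectory that omits step $t$.

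\textbf{First}, I would check that the pruned sequence $\tau' = \tau \setminus (\{t\}\cup\mathcal{D}(t))$ is a well-formed trajectory, i.e.\ every remaining action is admissible when replayed. By the definition of $\mathcal{D}(t)$, each surviving index $j>t$ satisfies $a_j \in \mathcal{A}(s_j^{-t})$, so it remains executable once step $t$ is skipped. Steps before $t$ are unaffected. \textbf{Second}, the hypothesis says $\tau'$ achieves $\mathcal{V}=1$. Hence $\tau'$ is a witness: a successful execution of the same task in which step $t$ (and only steps whose feasibility hinged on it) is absent. \textbf{Third}, I would conclude that step $t$ is not needed for task completion. Thus $y_t=0$, which is exactly redundancy. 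This is also logically consistent with Proposition~\ref{pro:necessity}: if $t$ were necessary, that proposition would force $\mathcal{V}(\tau')=0$, contradicting the hypothesis. So redundancy follows by contraposition as well.

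\textbf{Main obstacle.} The delicate point is the first step. Removing $t$ also removes the steps in $\mathcal{D}(t)$, and admissibility of each surviving $a_j$ is evaluated in the state $s_j^{-t}$ of the original trajectory minus step $t$ alone. It is not evaluated in the state reached after additionally deleting the dependent steps. To close this gap, I would define $\mathcal{D}(t)$ iteratively (closure under ``becomes infeasible after the previous removals''). Alternatively, I would lean on the paper's operational convention that $\mathcal{V}(\tau')$ is computed by actual replay in the environment, so that any infeasibility would already register as $\mathcal{V}(\tau')=0$. A second subtlety is that the dependent steps removed alongside $t$ are not themselves certified redundant. The claim only concerns step $t$, so I would state explicitly that the conclusion is about $t$ alone.
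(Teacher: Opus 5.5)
The paper gives no proof of this proposition. It is posited as a property the method ``requires'' and serves as the labeling rule $y_t\leftarrow 0$ in Algorithm~\ref{alg:forward_ablation}, so it is effectively the definition of \emph{redundant}. You instead fix an explicit meaning: $t$ is redundant iff some successful sub-trajectory of $\tau$ omits $t$. Under that meaning the hypothesis $\mathcal{V}(\tau')=1$ is itself the witness. Equivalently, as you note, if ``redundant $=$ not necessary'', the statement is just the contrapositive of Proposition~\ref{pro:necessity}. Your argument is correct and makes explicit what the paper leaves implicit. It also agrees with the remark after Proposition~\ref{pro:necessity}: the second \emph{go to cabinet} there is redundant because it can be dropped together with \emph{go to bathroom}.

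Your ``main obstacle'' does not need handling. The witness requires only $\mathcal{V}(\tau')=1$, which is the hypothesis. In any case, inexecutable actions in these environments are no-ops (``Nothing happens.''). So replaying $\tau\setminus\{t\}$ and replaying $\tau'$ visit the same states, and every surviving action is admissible without any iterative closure of $\mathcal{D}(t)$.

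What your reading buys is a tautology. What it costs is that redundancy becomes a per-step property rather than a joint one, which is not how the rest of the paper uses the labels. Take $\tau=$ (go to cabinet, go to drawer, go to cabinet, open cabinet, take clothes):
\begin{itemize}
\item Both \emph{go to cabinet} steps are redundant under your definition, so the necessary steps alone, (open cabinet, take clothes), no longer form a successful trajectory. This breaks Redundancy Elimination.
\item Algorithm~\ref{alg:forward_ablation}, which conditions on the already-removed set $\mathcal{R}$, labels step~3 necessary.
\item Proposition~\ref{pro:transitivity} fails, since $\mathcal{D}(3)=\{4,5\}$ contains the indispensable final step.
\end{itemize}
You should state explicitly that your proof establishes the claim for this per-step notion, not for the greedy labels the algorithm actually computes.
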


\begin{proposition}[Transitivity]
\label{pro:transitivity}
The dependent steps of a redundant step are all redundant: if $y_t = 0$, then $y_j = 0$ for all $j \in \mathcal{D}(t)$.
\end{proposition}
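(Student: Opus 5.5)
The plan is to argue directly from the counterfactual characterization of redundancy in Proposition~\ref{pro:sufficiency}, using the fact that dependent-step sets are closed under the dependency relation. Let step $t$ be redundant, so by Proposition~\ref{pro:sufficiency} the pruned trajectory $\tau' = \tau \setminus (\{t\} \cup \mathcal{D}(t))$ still succeeds, $\mathcal{V}(\tau') = 1$. Fix any $j \in \mathcal{D}(t)$. The goal is to show that removing $j$ together with its own dependent set $\mathcal{D}(j)$ also preserves success. Proposition~\ref{pro:sufficiency} then gives $y_j = 0$.

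\textbf{Step 1: transitive closure of dependency.} I first claim $\mathcal{D}(j) \subseteq \{t\} \cup \mathcal{D}(t)$ up to steps that remain feasible once $t$ is absent. More usefully, I will show the following. If $k \in \mathcal{D}(j)$ survives in $\tau'$, then $a_k$ is feasible at its state in $\tau'$, since $\tau'$ is an executable successful trajectory. The argument uses the definition of $\mathcal{D}(\cdot)$: an action in $\tau'$ is admissible because $\tau'$ was replayed successfully, and the environment's execution-status feedback guarantees every retained action executed.

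\textbf{Step 2: build a successful witness.} I then consider the trajectory $\tau'' = \tau \setminus (\{j\} \cup \mathcal{D}(j))$. The natural route is to compare $\tau''$ with $\tau'$. Since $\tau'$ already omits $j$ and succeeds, the steps that $\tau'$ keeps but $\tau''$ drops lie in $\mathcal{D}(j)\setminus(\{t\}\cup\mathcal{D}(t))$. Steps that $\tau''$ keeps but $\tau'$ drops lie in $(\{t\}\cup\mathcal{D}(t))\setminus\mathcal{D}(j)$.

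A cleaner alternative avoids comparing $\tau''$ directly. It reads ``redundant'' in the hindsight sense: a step is redundant if some successful trajectory omits it, with all retained actions feasible. Under that reading, $\tau'$ itself is a successful execution omitting $j$, and the conclusion is immediate. I would adopt this reading and justify that it matches Proposition~\ref{pro:sufficiency}'s label semantics. It is also consistent with the preceding remark: the second \emph{go to cabinet} is redundant even though removing it alone fails, so necessity is not determined by single-step removal.

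\textbf{Main obstacle.} The hard part is Step 2 under the literal criterion of Proposition~\ref{pro:sufficiency}. Removing $j$ while retaining $t$ is a different counterfactual, and monotonicity of $\mathcal{V}$ under step removal does not hold in general. The extra step $t$ might in principle interfere with success once $j$ is gone. I would first try to show that $t$ retained without $j$ cannot break success, since $t$ executed successfully in the original $\tau$. Failing that, I would state the proposition with respect to the minimal-successful-subtrajectory notion of redundancy, under which the transitivity follows as above.
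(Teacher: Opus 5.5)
There is a genuine gap, and it is in your first inference. From ``$t$ is redundant'' you conclude, citing Proposition~\ref{pro:sufficiency}, that $\tau'=\tau\setminus(\{t\}\cup\mathcal{D}(t))$ succeeds. That proposition only gives ``success $\Rightarrow$ redundant.'' Its converse is equivalent to the converse of Proposition~\ref{pro:necessity}, which the remark after it explicitly rejects: the cabinet example is exactly a redundant $t$ with $\mathcal{V}(\tau\setminus(\{t\}\cup\mathcal{D}(t)))=0$. So that remark argues against your step; it does not support your reading. Without $\tau'$, neither your main route nor your ``cleaner alternative'' gets started.

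In fact the unrestricted statement holds under none of the readings you consider. Take an agent that holds one object at a time and the successful trajectory (1) go to countertop~1, (2) take apple~1, (3) move apple~1 to countertop~1, (4) take knife~1, (5) go to sidetable~1, (6) move knife~1 to sidetable~1. Then $\mathcal{D}(2)=\{3\}$ and $\tau\setminus\{2,3\}$ succeeds, so $y_2=0$ and the proposition demands $y_3=0$. But skipping step~3 leaves the apple in hand, so $\mathcal{D}(3)=\{4,6\}$ and $\tau\setminus\{3,4,6\}$ fails. This has several consequences:
\begin{itemize}
\item Your Step~2 target is false here.
\item The idea ``$t$ executed in $\tau$, so retaining it without $j$ cannot break success'' is false.
\item Under the literal criterion, the proposition itself fails at the pair $(2,3)$.
\item Under the existential labeling, the minimal-subtrajectory labeling, or the labels of Algorithm~\ref{alg:forward_ablation}, one has $y_3=0$, yet $4\in\mathcal{D}(3)$ is necessary. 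No successful subtrajectory omits it, and the algorithm's replay of $\tau\setminus\{2,3,4,6\}$ fails. So the proposition fails at the pair $(3,4)$.
\end{itemize}
Your Step~1 containment is also wrong: $t\notin\mathcal{D}(j)$ since $t<j$, and here $4\in\mathcal{D}(3)\setminus\mathcal{D}(2)$.

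The paper gives no derivation of this proposition. It is listed among the properties ``we require'' and serves as a labeling rule in Algorithm~\ref{alg:forward_ablation}. The rule is invoked only right after $t$ has been certified by a successful replay of $\tau\setminus(\{t\}\cup\mathcal{R}\cup\mathcal{D}(t))$. Steps that inherit a $0$ are skipped by the loop and never fed back into the rule.

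In that restricted form the justification is one line, and it is essentially your ``cleaner alternative.'' The certifying replay is a successful trajectory that omits every $j\in\mathcal{D}(t)$, and it becomes the new $\tau\setminus\mathcal{R}$. Hence every step labeled $0$ is absent from a single successful subtrajectory, the final $\tau\setminus\mathcal{R}$.

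To get a correct proof:
\begin{itemize}
\item Replace the hypothesis $y_t=0$ by ``$t$ was certified redundant by a successful replay omitting $\{t\}\cup\mathcal{D}(t)$,'' together with previously removed steps.
\item Conclude that the steps in $\mathcal{D}(t)$ are redundant in the sense of being absent from that witness.
\item Drop Step~1 and the literal Step~2 route, which cannot be completed.
\end{itemize}
The example above shows that the rule cannot be iterated from inherited labels.
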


\begin{remark}
The dependent steps of a necessary step are not necessarily necessary.
\end{remark}

% \textbf{Forward Ablation Algorithm.} Based on these properties, we design a forward ablation procedure to efficiently label all steps. Let $\mathcal{R}$ denote the set of confirmed redundant steps, initialized as $\mathcal{R} = \emptyset$. For each step $t = 1, \ldots, T$ in order:
% \begin{enumerate}
%     \item If $t \in \mathcal{R}$, skip (already labeled as redundant).
%     \item Execute the modified trajectory $\tau \setminus (\{t\} \cup \mathcal{R})$ in the environment.
%     \item Compute $\mathcal{D}(t)$ by identifying steps with infeasible actions.
%     \item If task succeeds: $y_t = 0$, and update $\mathcal{R} \leftarrow \mathcal{R} \cup \{t\} \cup \mathcal{D}(t)$.
%     \item If task fails: $y_t = 1$ (step $t$ is necessary).
% \end{enumerate}

% This procedure requires at most $T$ environment rollouts per trajectory, and leverages Proposition 2 to avoid redundant verification of dependent steps.

\textbf{Discriminator Training.} With the credit labels $\{y_t\}_{t=1}^{T}$ obtained via Algorithm~\ref{alg:forward_ablation}, we train the discriminator $\mathcal{D}_\phi$ via supervised fine-tuning on an LLM backbone. Given a prompt $\mathbf{x}_t$ encoding the trajectory context and the target step $(s_t, a_t)$, the model outputs a binary judgment. The training objective minimizes the cross-entropy loss:
\begin{equation}
    \mathcal{L}_{\text{SFT}} = -\mathbb{E}_{(\mathbf{x}_t, y_t) \sim \mathcal{D}_{\text{train}}} \left[ \log P_\phi(\hat{y}_t = y_t \mid \mathbf{x}_t) \right],
\end{equation}
where $\mathcal{D}_{\text{train}}$ denotes the training set constructed from verified trajectories.

\begin{wrapfigure}{r}{0.6\linewidth}
\vspace{0em}
\centering
\footnotesize
\captionsetup{aboveskip=0pt, belowskip=2pt}

\hrule height 0.8pt
\vspace{2pt}

\refstepcounter{algorithm}
\label{alg:forward_ablation}
{\raggedright
\noindent\textbf{Algorithm~\thealgorithm:} Hindsight Verification for Credit Labeling\par}

\vspace{1pt}
\hrule height 0.4pt
\vspace{2pt}

\begin{algorithmic}[1]
\REQUIRE Successful Trajectory $\tau$, Environment $\mathcal{E}$
\ENSURE Credit Labels $\{y_t\}_{t=1}^{T}$
\STATE Initialize Redundant Set $\mathcal{R} \leftarrow \emptyset$
\FOR{$t = 1$ to $T$}
    \IF{$t \in \mathcal{R}$}
        \STATE $y_t \leftarrow 0$ \hfill $\triangleright$ Proposition~\ref{pro:transitivity}
        \STATE \textbf{continue}
    \ENDIF
    \STATE Compute $\mathcal{D}(t) \leftarrow \{j > t : a_j \notin \mathcal{A}(s_j^{-t})\}$
    \STATE Execute $\tau \setminus (\{t\} \cup \mathcal{R} \cup \mathcal{D}(t))$ in $\mathcal{E}$
    \IF{task succeeds}
        \STATE $y_t \leftarrow 0$ \hfill $\triangleright$ Proposition~\ref{pro:sufficiency}
        \STATE $\mathcal{R} \leftarrow \mathcal{R} \cup \{t\} \cup \mathcal{D}(t)$ \hfill $\triangleright$ Proposition~\ref{pro:transitivity}
    \ELSE
        \STATE $y_t \leftarrow 1$ \hfill $\triangleright$ Proposition~\ref{pro:necessity}
    \ENDIF
\ENDFOR
\end{algorithmic}

\vspace{2pt}
\hrule height 0.8pt
\vspace{-1em}
\end{wrapfigure}
\textbf{Discriminator Pre-training.} During early RL training, the policy model struggles to generate sufficient successful trajectories, resulting in sparse credit labels. To address this cold-start problem, we pre-train the discriminator before RL optimization. Specifically, we use Qwen2.5-14B-Instruct to generate approximately 2,000 successful trajectories, which are processed through Algorithm~\ref{alg:forward_ablation} to obtain credit labels.

Notably, unlike conventional SFT that requires high-quality trajectories from strong teacher models, our paradigm does not require optimal or concise trajectories. Trajectories with  redundant steps provide richer negative samples. This property reduces dependency on powerful LLMs.
\vspace{-0.5 em}

\subsection{Policy Optimization with Credit Labels}
\label{sec:optimization}
Building upon the credit labels, we integrate step-level credit into the policy optimization.

\textbf{Redundancy-Aware Reward.}
Conventional methods often apply uniform penalties based on trajectory length or repeated actions, which may inadvertently discourage longer but correct trajectories and lead to suboptimal performance. To address this limitation, we propose a redundancy-aware reward mechanism. Specifically, for each successful trajectory $\tau_i$, the shaped reward is computed as:
\begin{equation}
\label{equation:reward}
\small
    \tilde{R}(\tau_i) = R(\tau_i) - \alpha \cdot \rho(\tau_i),
\end{equation}
where $R(\tau_i)$ is the original task reward, $\rho(\tau_i)$ denotes the redundancy ratio defined as the proportion of redundant steps in the trajectory, and $\alpha \geq 0$ controls the penalty strength. 

Instead of penalizing indiscriminately, our formulation applies targeted penalties only to trajectories with genuinely redundant steps, thereby preserving high rewards for efficient executions.

\textbf{Step-Level Advantage.}
Given a task $x$, the policy $\pi_{\theta_{\text{old}}}$ generates a group of $N$ complete trajectories $\{\tau_1, \tau_2, \ldots, \tau_N\}$ under identical initial conditions. Each trajectory $\tau_i$ receives the shaped reward $\tilde{R}(\tau_i)$ as described above. To evaluate the relative quality of each trajectory within the group, we compute an episode-level advantage $A^E(\tau_i)$ as follows:
\begin{equation}
\label{equation:grpo_adv}
\small
    A^E(\tau_i) = \frac{\tilde{R}(\tau_i) - \text{mean}(\{\tilde{R}(\tau_j)\}_{j=1}^{N})}{\text{std}(\{\tilde{R}(\tau_j)\}_{j=1}^{N})}.
\end{equation}

While the episode-level advantage captures trajectory-wide performance, it assigns uniform credit to all steps. To provide finer-grained guidance, we introduce a step-level advantage $A^S(a_t^{(i)})$ derived from the discriminator predictions. The final advantage for each action combines both levels:
\begin{equation}
\label{equation:trajevolve}
    A(a_t^{(i)}) = A^E(\tau_i) + \lambda \cdot A^S(a_t^{(i)}),
\end{equation}
where $\lambda > 0$ is a weighting coefficient, and $A^S(a_t^{(i)}) = \mathbb{I}[y_t = 1]$ indicates whether step $t$ is necessary. For failed trajectories, we encourage the correct prefix by applying the bonus to steps preceding the first erroneous step $t_{\textnormal{err}}$, i.e., the first step that is either invalid or predicted as redundant by the discriminator. This hierarchical advantage structure provides both trajectory-wide and step-wise supervision, enabling more effective credit assignment in long-horizon tasks.

\textbf{Policy Update.}
We optimize the policy as follows:
\begin{equation}
\label{equation:policy}
    \mathcal{J}(\theta) = \mathbb{E}\Big[ \frac{1}{T} \sum_{t=1}^{T} \min( 
         \rho_\theta(a_t)  A(a_t), \text{clip}(\rho_\theta(a_t), 1 \pm \epsilon) A(a_t)
    ) \Big],
\end{equation}
where $\rho_\theta(a_t) = \frac{\pi_\theta(a_t | s_t)}{\pi_{\theta_{\text{old}}}(a_t | s_t)}$ is the importance sampling ratio.

\vspace{-0.5em}
\subsection{Trajectory Evolution}
\label{sec:evolution}

Existing RL paradigms' performance~\citep{rl_rlver} is inherently bounded by the base model's reasoning capability. In this section, we introduce a trajectory evolution method that actively refines historical trajectories during training.

\textbf{Evolution Strategies.}
Let $\mathcal{H} = \{\tau_1, \tau_2, \ldots\}$ denote the historical trajectory sampled during training. Based on the step-level credit labels obtained, we apply different evolution strategies:
\vspace{-0.3em}
\begin{itemize}[leftmargin=*]
    \item \textbf{Redundancy Elimination}: Given a successful trajectory $\tau$ with credit labels $\{y_t\}_{t=1}^T$, we construct an evolved trajectory $\tau^*$ by retaining only the necessary steps:
    $$
        \tau^* = \{(s_t, a_t) \in \tau \mid y_t = 1\}.
    $$
    \item \textbf{Format Correction}: When a trajectory fails due to an invalid action format at step $t_{\text{err}}$, we map the erroneous action to the nearest valid action in the admissible action space $\mathcal{A}(s_{t_{\text{err}}})$.
    
    \item \textbf{Error Rollback}: When a trajectory fails due to an incorrect action at step $t_{\text{err}}$, we retain all steps before $t_{\text{err}}$ and allow the policy to regenerate, providing opportunities to explore alternative actions.
\end{itemize}

\textbf{Training with Evolved Trajectories.}
The evolved trajectories are re-executed in the environment to obtain updated rewards, and subsequently incorporated into the policy optimization alongside newly sampled trajectories. This augmentation enriches the training distribution with high-quality, non-redundant trajectories that may not be sampled by the current policy, thus   exposing the policy to high-quality compositions that are rarely sampled within practical training budgets.

\section{Theoretical Analysis}
\label{sec:theory}
 \vspace{-0.5 em}
In this section, we provide a theoretical analysis showing that, within a stylized model of group-relative policy optimization, on-policy RL faces a redundancy lower bound in long-horizon tasks, and that trajectory evolution can lift this barrier.
  \vspace{-0.5 em}
  
\paragraph{Problem Setup.}
In this section, we consider the classic GRPO training framework. At each iteration, the policy $\pi_\theta$ generates a group of $N$ trajectories for each task. Each trajectory $\tau = \{(s_t, a_t)\}_{t=1}^T$ receives a shaped reward $\tilde{R}(\tau) = R(\tau) - \alpha \cdot \rho(\tau)$, where $\rho(\tau)$ denotes the redundancy ratio. We define a \emph{clean trajectory} as a successful trajectory with zero redundancy, i.e., $R(\tau)=1, \rho(\tau)=0$.
 
\begin{restatable}[Per-Step Redundancy Model]{assumption}{redundancyasmp}
\label{asmp:redundancy}
There exists $\beta \in (0,1]$ such that, along every rollout, a $\beta$-fraction of the states $s_t$ are \emph{redundancy-prone}: under the policy $\pi_{\theta_k}$ at iteration $k$, the action taken at each such state is redundant independently with probability $p_k$, i.e.,
\begin{equation}
    P_{\pi_{\theta_k}}(r_t = 1 \mid s_t) = p_k,
\end{equation}
while actions at the remaining states are never redundant, where $r_t$ is the redundancy indicator, $r_t=1$ if step $t$ is redundant and $r_t=0$ if necessary. For the initial policy $\pi_{\theta_0}$, $p_0 > 0$.
\end{restatable}

\begin{remark}
The condition $p_0 > 0$ is mild since the LLM-based policy outputs a softmax distribution over all admissible actions, where every action retains positive probability, and in long-horizon interactive tasks multiple plausible but unnecessary actions exist at most states; $\beta$ accounts for the states at which no redundant action is available. 
\end{remark}

% This assumption is reasonable for LLM-based policies. Since the policy outputs a softmax distribution over all admissible actions, every action---including redundant ones such as revisiting explored locations or repeating queries---retains positive probability. In long-horizon interactive tasks, multiple plausible but unnecessary actions exist at most states, making $p_0 > 0$ a mild condition.

\begin{restatable}[Gradient Smoothness]{assumption}{smoothness}
\label{asmp:smooth}
Let $m_{\textnormal{clean}}^{(k)}$ denote the number of clean trajectories in the training group at iteration $k$. There exists a bounded, positive, non-decreasing function $h:\{1,\dots,N-1\}\to(0,h_{\max}]$, extended by $h(0)=h(N)=0$, such that, conditional on the group composition, the expected per-iteration reduction of the per-step redundancy probability $p_k$ satisfies
\begin{equation}
E\big[\,p_k - p_{k+1} \mid m_{\textnormal{clean}}^{(k)} = m\,\big] = h(m)\,\eta,
\qquad 0 \le m \le N.
\end{equation}
\end{restatable}

\begin{remark}
This assumption captures two properties: (i) the total gradient effect is bounded by $h_{\max}\eta$ per iteration, enforced by gradient clipping and learning rates; (ii) the gradient signal for reducing redundancy is governed by the number of clean trajectories, since clean trajectories provide a direct comparison between redundant and non-redundant behavior in the advantage computation. 
\end{remark}

\begin{restatable}[Redundancy Lower Bound for On-Policy RL]{theorem}{redundancy}
\label{thm:main}
Under Assumptions~\ref{asmp:redundancy} and~\ref{asmp:smooth}, for any training iteration $K \leq K^*$, the expected redundancy ratio of the trained policy satisfies:
\begin{equation}
\label{eq:main_bound}
    E\!\left[\rho(\tau_K)\right] \geq  \frac{\beta p_0}{2},
\end{equation}
where $K^* = p_0 / \big(2 N h_{\max}\,\eta\,(1 - p_0/2)^{\beta T}\big)$ and $T$ is the length of trajectory $\tau$.
\end{restatable}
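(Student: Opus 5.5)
The plan is to track $p_k$ and show that, up to iteration $K^*$, it cannot fall below $p_0/2$ in expectation. The bound on $E[\rho(\tau_K)]$ then follows from Assumption~\ref{asmp:redundancy}. Under that assumption, a rollout from $\pi_{\theta_K}$ has $\beta T$ redundancy-prone states, each redundant independently with probability $p_K$. Hence $E[\rho(\tau_K)\mid p_K]=\beta p_K$. Taking expectations, $E[\rho(\tau_K)]=\beta E[p_K]$, so it suffices to prove $E[p_K]\ge p_0/2$ for $K\le K^*$.

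\textbf{Step 1: one-step decrement bound.} Write $p_0-p_K=\sum_{k<K}(p_k-p_{k+1})$. By Assumption~\ref{asmp:smooth} and the tower property,
\[
E[p_k-p_{k+1}]=\eta\,E[h(m_{\textnormal{clean}}^{(k)})].
\]
Since $h(0)=0$ and $h\le h_{\max}$, we have $h(m)\le h_{\max}\mathbb{I}[m\ge1]\le h_{\max}m$. Therefore
\[
E[p_k-p_{k+1}]\le \eta h_{\max}E[m_{\textnormal{clean}}^{(k)}].
\]
A trajectory is clean only if none of its $\beta T$ redundancy-prone steps is redundant, which conditionally on $p_k$ happens with probability at most $(1-p_k)^{\beta T}$. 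Summing over the group gives $E[m_{\textnormal{clean}}^{(k)}\mid p_k]\le N(1-p_k)^{\beta T}$.

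\textbf{Step 2: stopping / bootstrap argument.} This is the main obstacle. The clean probability $(1-p_k)^{\beta T}$ grows as $p_k$ decreases, so we cannot simply plug in $p_0$. On the event $\{p_k\ge p_0/2\}$, the clean probability is at most $(1-p_0/2)^{\beta T}$, which gives a per-iteration drift of at most $\eta h_{\max}N(1-p_0/2)^{\beta T}=p_0/(2K^*)$. I would define the stopping time $\kappa=\min\{k:p_k<p_0/2\}$ and consider the stopped process $p_{k\wedge\kappa}$. Before stopping, each expected decrement is at most $p_0/(2K^*)$, so summing gives
\[
E[p_{K\wedge\kappa}]\ge p_0-K\,p_0/(2K^*)\ge p_0/2 \quad\text{for } K\le K^*.
\]

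\textbf{Step 3: transferring to $E[p_K]$.} Passing from the stopped quantity to $E[p_K]$ needs care, since after $\kappa$ the process could drop further. Within the stylized model, the cleanest route is to treat $p_k$ as its deterministic mean trajectory, as the assumption is phrased in conditional expectation. Then an induction applies. Suppose $p_j\ge p_0/2$ for all $j<k$. Then $p_k\ge p_0-k\,p_0/(2K^*)\ge p_0/2$ whenever $k\le K^*$, so the hypothesis propagates and $\kappa>K^*$.

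If a fully stochastic statement is desired, I would handle it in one of two ways. The first is to use $p_k\ge0$ together with an optional-stopping bound on the decrements. The second is to note that $p\mapsto(1-p)^{\beta T}$ is convex, so Jensen's inequality goes the wrong way; I would therefore state the result for the mean-field recursion, which I expect is the intended reading. Combining this with $E[\rho(\tau_K)]=\beta E[p_K]$ yields $E[\rho(\tau_K)]\ge\beta p_0/2$.
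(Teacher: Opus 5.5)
Your committed argument (Step 1, the mean-field induction in Step 3, then $E[\rho(\tau_K)]=\beta p_K$) is essentially the paper's proof. Lemma~\ref{lem:locking} takes the first iteration with $p_{k^*}<p_0/2$, bounds each earlier decrement by $h_{\max}\eta\,P(m_{\textnormal{clean}}^{(k)}\ge 1)\le N h_{\max}\eta(1-p_0/2)^{\beta T}$ via Lemma~\ref{lem:sampling} (the same bound your $h(m)\le h_{\max}m$ step yields), and sums these decrements while treating $p_k$ as deterministic, just as you do. The stopped-process detour in Step 2 and the fully stochastic variants are therefore not needed to match the paper, which implicitly takes the same mean-field reading.
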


\begin{remark}
As the task horizon $T$ increases, $K^*$ grows exponentially since its denominator $(1 - p_0/2)^{\beta T}$ vanishes, meaning that even reaching the redundancy floor $\beta p_0/2$ requires exponentially more iterations for longer-horizon tasks. 
\end{remark}

\begin{restatable}{proposition}{trajevolve}
\label{prop:evolve}
Suppose at each iteration, trajectory evolution injects $m_{\textnormal{evolve}} \in \{1,\dots,N-1\}$ clean trajectories into the training group alongside $N - m_{\textnormal{evolve}}$ on-policy rollouts. Under Assumption~\ref{asmp:smooth}, for any target $\varepsilon \in (0, p_0)$, the expected redundancy satisfies
\begin{equation}
    E\!\left[\rho(\tau_K^{\textnormal{evolve}})\right] \leq \beta\varepsilon
    \quad \textnormal{for all } K \geq \frac{p_0 - \varepsilon}{\delta_\varepsilon},
    \qquad
    \delta_\varepsilon := h(m_{\textnormal{evolve}})\,\eta\,\Big(1 - (1-\varepsilon)^{\beta T (N - m_{\textnormal{evolve}})}\Big) > 0.
\end{equation}
\end{restatable}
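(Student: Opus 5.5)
The plan is to track the scalar sequence $p_k$ and show that it drifts downward by at least $\delta_\varepsilon$ per iteration in expectation, for as long as it stays above $\varepsilon$. The conversion to $\rho$ comes from Assumption~\ref{asmp:redundancy}. A rollout has $\beta T$ redundancy-prone states, each redundant independently with probability $p_k$, and no other redundant steps. Hence $E[\rho(\tau_K)\mid p_K] = \beta p_K$, so it suffices to show $E[p_K]\le\varepsilon$ for $K\ge (p_0-\varepsilon)/\delta_\varepsilon$, or more precisely a bound of that form on the stopped process.

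\textbf{Step 1: a per-iteration drift bound.} Fix iteration $k$ with $p_k\ge\varepsilon$. The group contains $m_{\textnormal{evolve}}$ injected clean trajectories, so $m^{(k)}_{\textnormal{clean}}\ge m_{\textnormal{evolve}}$. Since $h$ is non-decreasing on $\{1,\dots,N-1\}$, $h(m^{(k)}_{\textnormal{clean}})\ge h(m_{\textnormal{evolve}})$ whenever $m^{(k)}_{\textnormal{clean}}\le N-1$. The only bad case is $m^{(k)}_{\textnormal{clean}}=N$, where $h(N)=0$. That case requires all $N-m_{\textnormal{evolve}}$ on-policy rollouts to be clean. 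In particular each must have zero redundant steps, which has probability $(1-p_k)^{\beta T}$ per rollout (success is additionally needed, which only shrinks it). By independence across rollouts,
\[
P\big(m^{(k)}_{\textnormal{clean}}=N\mid p_k\big)\le (1-p_k)^{\beta T(N-m_{\textnormal{evolve}})}\le (1-\varepsilon)^{\beta T(N-m_{\textnormal{evolve}})}.
\]
Averaging Assumption~\ref{asmp:smooth} over the group composition then gives $E[p_k-p_{k+1}\mid p_k]\ge h(m_{\textnormal{evolve}})\eta\,(1-(1-\varepsilon)^{\beta T(N-m_{\textnormal{evolve}})})=\delta_\varepsilon$ on $\{p_k\ge\varepsilon\}$. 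Positivity of $\delta_\varepsilon$ follows from $h>0$ on $\{1,\dots,N-1\}$, $\varepsilon>0$ and $N-m_{\textnormal{evolve}}\ge1$.

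\textbf{Step 2: from drift to hitting $\varepsilon$.} Let $\kappa=\inf\{k:p_k\le\varepsilon\}$. The process $p_{k\wedge\kappa}+\delta_\varepsilon(k\wedge\kappa)$ is a supermartingale. Since $p\ge0$, optional stopping gives $\delta_\varepsilon E[\kappa]\le p_0$ (the classical additive-drift theorem). The sharper statement with $p_0-\varepsilon$ follows from bounding the overshoot below by $\varepsilon$, or equivalently from a deterministic reading of the drift. In the paper's stylized spirit, one can treat $p_k$ as tracking its expectation. Then $p_k\le p_0-k\delta_\varepsilon$ while $p_k\ge\varepsilon$, so $p_K\le\varepsilon$ once $K\ge(p_0-\varepsilon)/\delta_\varepsilon$.

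\textbf{Step 3: persistence below $\varepsilon$.} We still need that $p$ stays at or below $\varepsilon$ after hitting it. Here we use that the expected change is non-positive for every $m$, because $h\ge0$, so $E[p_{k+1}\mid p_k]\le p_k$ always. Thus $p_k$ is a supermartingale and $E[p_K]$ is non-increasing in $K$. Combining this with Step 2 gives $E[\rho(\tau^{\textnormal{evolve}}_K)]=\beta E[p_K]\le\beta\varepsilon$.

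\textbf{Main obstacle.} The delicate step is Step 2. Turning a conditional drift valid only above $\varepsilon$ into a bound on $E[p_K]$ at a fixed $K$, with the exact constant $(p_0-\varepsilon)/\delta_\varepsilon$, does not follow rigorously from optional stopping alone. The first approach is to define $q_k=\max(p_k,\varepsilon)$ and show $E[q_{k+1}]\le E[q_k]-\delta_\varepsilon P(p_k>\varepsilon)$. If that is too weak, the fallback is to interpret Assumption~\ref{asmp:smooth} as governing the mean trajectory of $p_k$, as the lower-bound theorem implicitly does. Under that interpretation the linear-decrease argument is immediate.
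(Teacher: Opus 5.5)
Your proposal follows the paper's proof essentially line for line: the same drift bound obtained by excluding only the event $m_{\textnormal{clean}}^{(k)}=N$ via the sampling-barrier estimate and the monotonicity of $h$, the same ``non-increasing, so it stays below $\varepsilon$'' persistence step, and, for Step~2, exactly your fallback, since the paper simply asserts $p_k\le p_0-k\delta_\varepsilon$ as long as $p_j\ge\varepsilon$ for all $j<k$, i.e.\ it treats $p_k$ as moving deterministically by its expected drift. Your caution about Step~2 is warranted, and the $q_k=\max(p_k,\varepsilon)$ route cannot close it in general: suppose every rollout succeeds and, conditional on $m$, $p_k$ drops from $p_0$ to $0$ with probability $h(m)\eta/p_0$ and otherwise stays at $p_0$; then both assumptions hold, yet for small $\eta$ and large $\beta T$ one gets $E[p_K]\approx p_0e^{-(1-\varepsilon/p_0)}>\varepsilon$ at $K\approx(p_0-\varepsilon)/\delta_\varepsilon$, so the fixed-$K$ bound rests on that mean-field reading, in your proof and in the paper's alike.
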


\begin{remark}[Exponential vs.\ Polynomial]
Within this model, the number of iterations TrajEvolve needs to reach redundancy $\beta\varepsilon$ is at most $(p_0 - \varepsilon)/\delta_\varepsilon$, which is polynomial in $(p_0, 1/\varepsilon, 1/\eta, 1/h(m_{\textnormal{evolve}}))$ and, since $\delta_\varepsilon$ is non-decreasing in $T$, bounded independently of the horizon $T$. In contrast, the barrier $K^*$ of Theorem~\ref{thm:main} grows exponentially in $T$. 
\end{remark}
\vspace{-0.4 em}

Due to the space limits, the detailed proof of the above theorem can be found in Appendix~\ref{app:theorem}.

\vspace{-0.8 em}

\section{Experiment}
\vspace{-0.7 em}
\label{sec:experiment}
\subsection{Experiment Setup}
\vspace{-0.5 em}
\textbf{Benchmarks.}
We evaluate our method on two benchmarks: \textit{ALFWorld} \citep{alfworld} is a text-based embodied environment where agents receive natural language goals and must accomplish them through multi-turn interaction. It comprises 3,827 task instances across six categories of household activities: Pick \& Place (Pick), Examine in Light (Look), Clean \& Place (Clean), Heat \& Place (Heat), Cool \& Place (Cool), and Pick Two \& Place (Pick2), requiring compositional reasoning including object identification and state manipulation. \textit{WebShop} \citep{webshop} simulates realistic online shopping scenarios with 1.18 million real-world products and 12,087 crowd-sourced instructions. Agents must interact with a simulated HTML-based e-commerce website to search, navigate through product pages, and purchase items matching user specifications.  This benchmark presents challenges in understanding instructions and strategic exploration across a vast catalog.

\textbf{Baselines.}
We compare our approach against three categories of competitive baselines. (1) \textit{Closed-source LLMs}: GPT-4o \citep{gpt-4o} and Gemini-2.5-Pro \citep{gemini2.5}, which represent frontier models in reasoning capability. (2) \textit{Prompting agents}: ReAct \citep{react} and Reflexion \citep{reflexion}, which leverage in-context prompting and self-reflection mechanisms to guide multi-step decision-making. (3) \textit{RL training methods}: PPO \citep{rl_ppo}, which is a classic actor-critic algorithm that relies on a separate value function for step value prediction; RLOO \citep{rl_rloo} and GRPO \citep{rl_grpo} eliminate the critic by computing relative advantages across multiple sampled rollouts; and GiGPO \citep{rl_gigpo} introduces a two-level structure combining episode-level advantages with step-level advantages via anchor state grouping for fine-grained credit assignment.
\definecolor{cvprblue}{rgb}{0.21,0.49,0.74}

\textbf{Training details.} We use Qwen2.5-1.5B/7B-Instruct as our base models. For fair comparison, all RL training methods share identical hyperparameter configurations, with the rollout group size $N$ set to 8. For the evolution module, we perform trajectory evolution every 5 training epochs. Specifically, for each task, we select 2 trajectories from the rollout samples for subsequent evolution operations, prioritizing trajectories with higher invalid action ratios and longer sequence lengths, as these typically contain more room for improvement. The hyperparameter $\lambda$ is fixed as 0.5. For the discriminator, we use Qwen2.5-1.5B-Instruct as the backbone and pre-train it for 3 epochs before RL optimization. During RL training, the discriminator is fine-tuned via LoRA and updated alternately with the policy network, training for 1 epoch on credit labels derived from trajectories at each iteration. 
\vspace{-0.5em}
\subsection{Overall Evaluation}
\vspace{-0.5 em}
Table \ref{table:performance} shows the experiment results. We summarize our findings as follows:
\vspace{-0.5em}
\begin{itemize}[leftmargin=*]
    \item Closed-source LLMs and prompting-based methods exhibit limited performance on multi-step decision-making tasks. Despite their strong general capabilities, GPT-4o and Gemini-2.5-Pro achieve only 45.3\% and 57.8\% success rates on ALFWorld, respectively. Open-source prompting agents such as ReAct and Reflexion show marginal improvements over vanilla prompting but still underperform significantly, highlighting the inherent difficulty of long-horizon control.
    \item RL training substantially improves agent performance, with group-based methods demonstrating competitive efficiency. PPO brings notable gains across both benchmarks, achieving 64.8\% on ALFWorld and 57.0\% on WebShop with the 1.5B model. GiGPO, with its hierarchical advantage estimation, consistently outperforms other RL baselines, validating the importance of fine-grained credit assignment in multi-step tasks.
    \item Our method achieves the best overall performance across both model scales and benchmarks. On the 1.5B model, our approach attains 92.2\% on ALFWorld and 71.9\% on WebShop, surpassing GiGPO by 5.5\% and 4.7\%, respectively.  These results demonstrate that our trajectory evolution mechanism provides complementary benefits to fine-grained credit assignment, enabling more effective policy optimization for LLM agents.
\end{itemize}
\vspace{-0.9 em}

\begin{table*}[t]
\fontsize{8}{9}\selectfont 
\centering
\caption{Performance Comparison on ALFWorld and Webshop benchmarks. Results are averaged over 3 random seeds. For ALFWorld, we report the success rate (\%) for the results. For WebShop, we report both the average score and the success rate (\%).} 
\label{table:performance}
\vspace{-0.5 em}
\resizebox{\linewidth}{!}{
\renewcommand\arraystretch{1}
\tabcolsep=0.07cm
\scalebox{0.8}{
\begin{tabular}{ccccccccccc}
\toprule
  \multirow{2}{*}{Type} & \multirow{2}{*}{Method} & \multicolumn{7}{c}{ALFWorld}                    & \multicolumn{2}{c}{WebShop} \\ \cmidrule(lr){3-9} \cmidrule(lr){10-11}
                      &                         & Pick & Look & Clean & Heat & Cool & Pick2 & All & Score        & Succ.                \\ \midrule 
\rowcolor{cvprblue!11} \multicolumn{11}{c}{\textit{Closed-Source Model}}   \\
Prompting          & GPT-4o               & 81.3                 & 25.0                 & 36.3                 & 38.5                 & 40.7                 & 23.1                 & 45.3                 & 30.8                 & 18.8                 \\
Prompting          & Gemini-2.5-Pro       & 83.3                 & 42.9                 & 48.6                 & 54.5                 & 64.7                 & 42.9                 & 57.8                 & 42.8                 & 26.6                 \\ \midrule 
\rowcolor{blue!11} \multicolumn{11}{c}{\textit{Qwen2.5-1.5B-Instruct}} \\
Prompting          & Qwen2.5              & 11.1                 & 0.0                  & 7.1                  & 0.0                  & 0.0                  & 7.7                  & 5.5                  & 22.4                 & 5.5                  \\
Prompting          & ReAct                & 17.4                 & 20.5                 & 15.7                 & 6.2                  & 7.7                  & 2.0                  & 13.3                 & 40.1                 & 10.9                 \\
Prompting          & Reflexion            & 35.3                 & 22.2                 & 21.7                 & 13.6                 & 19.4                 & 3.7                  & 21.9                 & 55.8                 & 21.9                 \\
RL Training        & PPO                  & 64.5±2.5             & 72.7±4.8             & 73.1±4.7             & 73.3±5.1             & 41.2±4.0             & 64.3±2.7             & 64.8±3.9             & 75.9±4.1             & 57.0±2.3             \\
RL Training        & RLOO                 & 90.3±3.9             & 54.5±6.0             & 88.5±5.7             & 86.7±6.8             & 47.1±5.5             & 60.7±4.6             & 74.2±3.1             & 80.6±5.1             & 66.7±5.1             \\ 
RL Training        & GRPO                 & 87.1±2.4             & 81.8±4.6             & 84.6±4.4             & 80.0±5.5             & 70.6±4.7             & 64.3±3.8             & 78.1±2.3             & 76.7±3.2             & 58.3±4.3             \\ 
RL Training        & GiGPO                & 94.8±2.5             & \textbf{95.0±4.4}             & 85.8±5.1             & 86.7±6.2             & 82.4±5.1             & 82.6±4.4             & 86.7±2.3             & 80.9±2.1             & 67.2±3.9             \\
RL Training        & Ours                 & \textbf{95.2±2.2} & 87.5±3.2 & \textbf{95.0±3.3} & \textbf{92.9±5.6} & \textbf{91.2±3.7} & \textbf{91.6±2.9} & \textbf{92.2±1.6} & \textbf{83.7±1.4} & \textbf{71.9±1.6}   \\ \midrule

\rowcolor{blue!11} \multicolumn{11}{c}{\textit{Qwen2.5-7B-Instruct}}\\
Prompting          & Qwen2.5              & 31.0                 & 30.3                 & 4.3                  & 10.1                 & 4.4                  & 2.8                  & 14.1                 & 26.0                 & 7.8                  \\
Prompting          & ReAct                & 48.5                 & 35.4                 & 34.3                 & 13.2                 & 18.2                 & 17.6                 & 31.2                 & 46.2                 & 19.5                 \\
Prompting          & Reflexion            & 62.0                 & 41.6                 & 44.9                 & 30.9                 & 36.3                 & 23.8                 & 43.0                 & 58.1                 & 28.9                 \\
RL Training        & PPO                  & 90.3±3.4             & 90.9±3.9             & 84.6±6.9             & 73.3±5.0             & 64.7±5.2             & 71.4±4.2             & 79.7±3.9             & 81.9±2.0               & 67.4±2.3               \\
RL Training        & RLOO                 & 90.3±3.8             & 90.9±4.5             & 88.5±6.0             & 86.7±6.2             & 70.6±4.3             & 64.3±3.8             & 81.5±2.7             & 83.7±2.5             & 76.0±3.9             \\ 
RL Training        & GRPO                 & 96.8±2.7             & 96.4±3.4             & 84.6±7.5             & 86.7±6.8             & 82.4±5.2             & 78.6±4.1             & 87.5±3.1             & 80.8±3.9             & 65.9±4.6             \\ 
RL Training        & GiGPO                & 96.8±2.8             & 96.7±3.0             & 92.3±6.4             & 86.7±5.9             & 88.2±5.1    & 71.4±3.6             & 88.3±2.3             & 82.8±2.5             & 68.8±3.1             \\ 
RL Training        & Ours                 & \textbf{98.5±1.4} & \textbf{98.3±2.9} & \textbf{96.1±4.5} & \textbf{92.9±4.7} & \textbf{89.5±4.4} & \textbf{97.8±3.7} & \textbf{95.3±1.6} & \textbf{90.7±2.2} & \textbf{82.0±2.3}   \\ \bottomrule 
\end{tabular}}}
\end{table*}

\begin{figure*}[t]
    \centering
\setlength{\abovecaptionskip}{3pt}
    \begin{subfigure}[t]{0.245\linewidth}
        \centering
        \includegraphics[width=\linewidth]{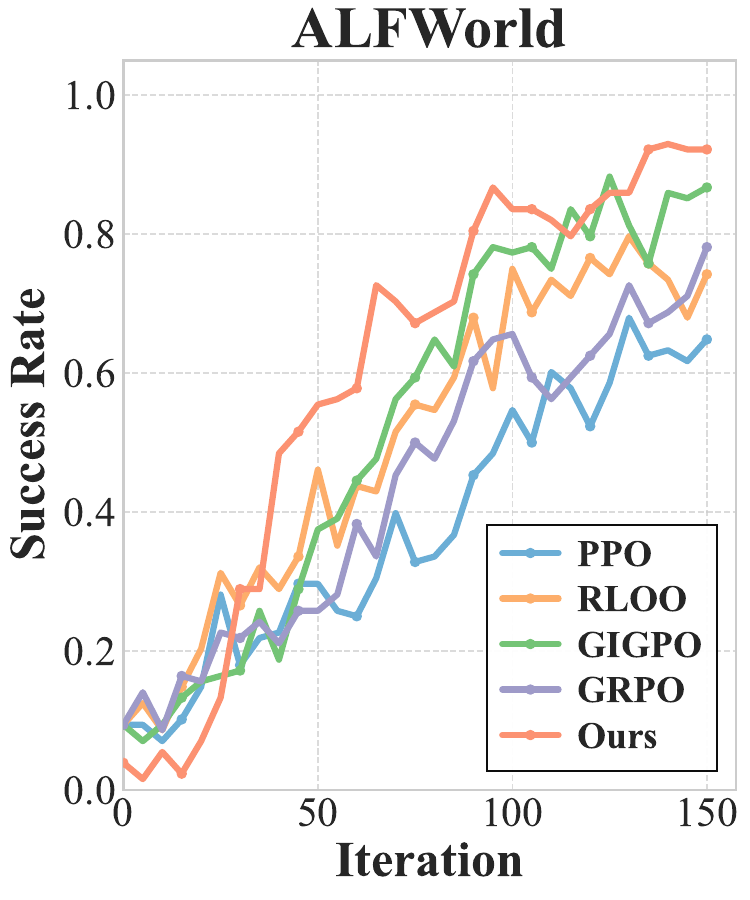}
        \caption{Qwen2.5-1.5B-Instruct}
        \label{fig:alfworld_success_1.5b}
    \end{subfigure}
    \begin{subfigure}[t]{0.245\linewidth}
        \centering
        \includegraphics[width=\linewidth]{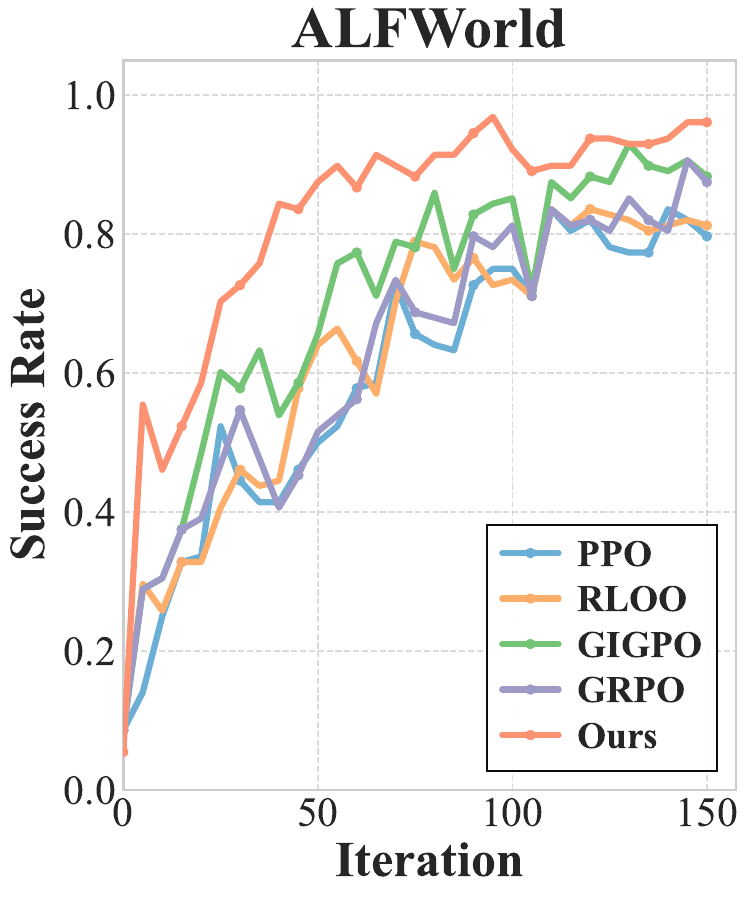}
        \caption{Qwen2.5-7B-Instruct}
        \label{fig:alfworld_success_7b}
    \end{subfigure}
        \begin{subfigure}[t]{0.245\linewidth}
        \centering
        \includegraphics[width=\linewidth]{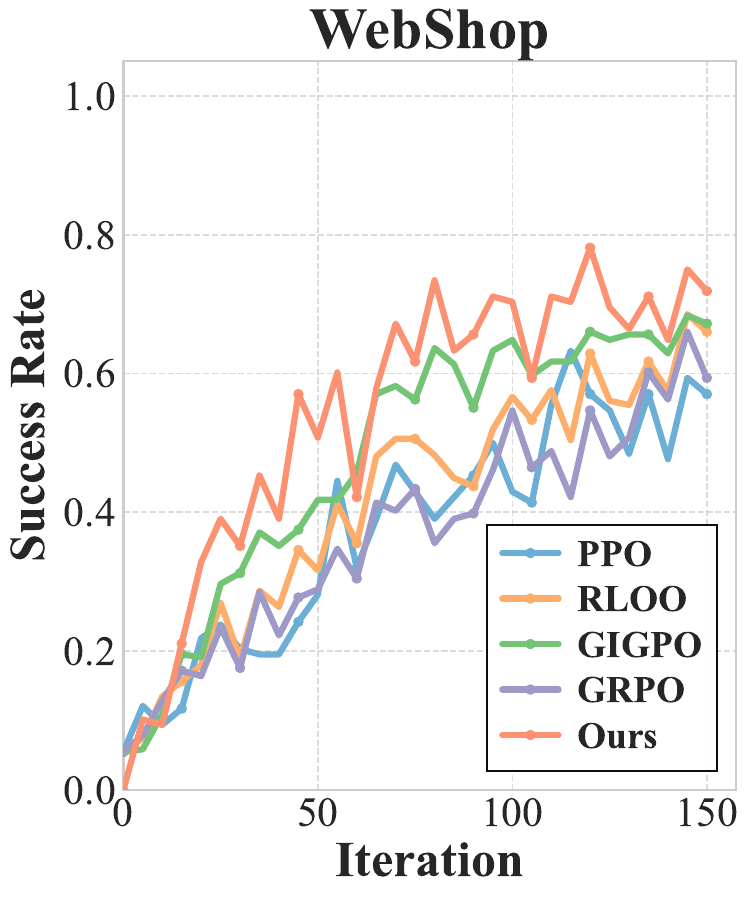}
        \caption{Qwen2.5-1.5B-Instruct}
        \label{fig:webshop_success_1.5b}
    \end{subfigure}
    \begin{subfigure}[t]{0.245\linewidth}
        \centering
        \includegraphics[width=\linewidth]{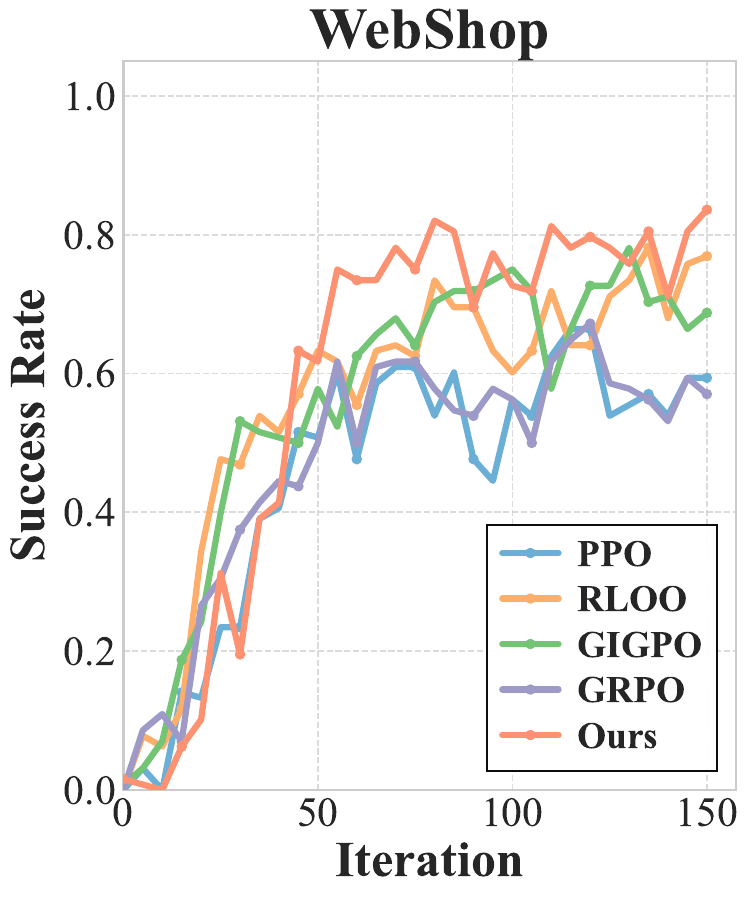}
        \caption{Qwen2.5-7B-Instruct}
        \label{fig:webshop_success_7b}
    \end{subfigure}
    \caption{Validation success rate during training on ALFWorld and WebShop dataset.}
    \vspace{-1em}
    \label{fig:success}
\end{figure*}

\begin{figure*}[t]
    \centering
\setlength{\abovecaptionskip}{3pt}
    \begin{subfigure}[t]{0.245\linewidth}
        \centering
        \includegraphics[width=\linewidth]{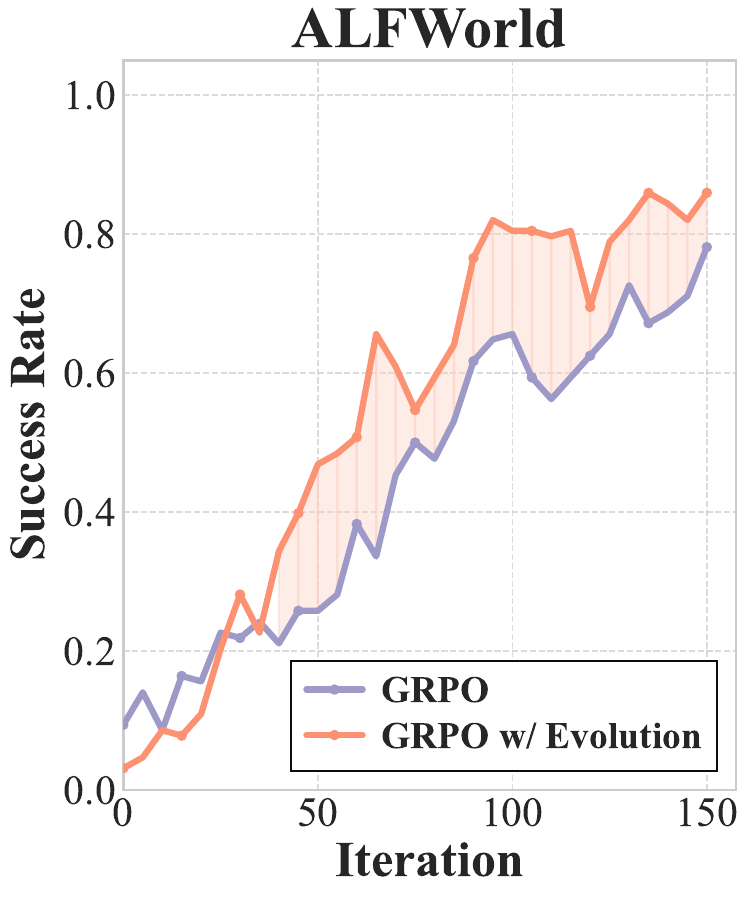}
    \end{subfigure}
    \begin{subfigure}[t]{0.245\linewidth}
        \centering
        \includegraphics[width=\linewidth]{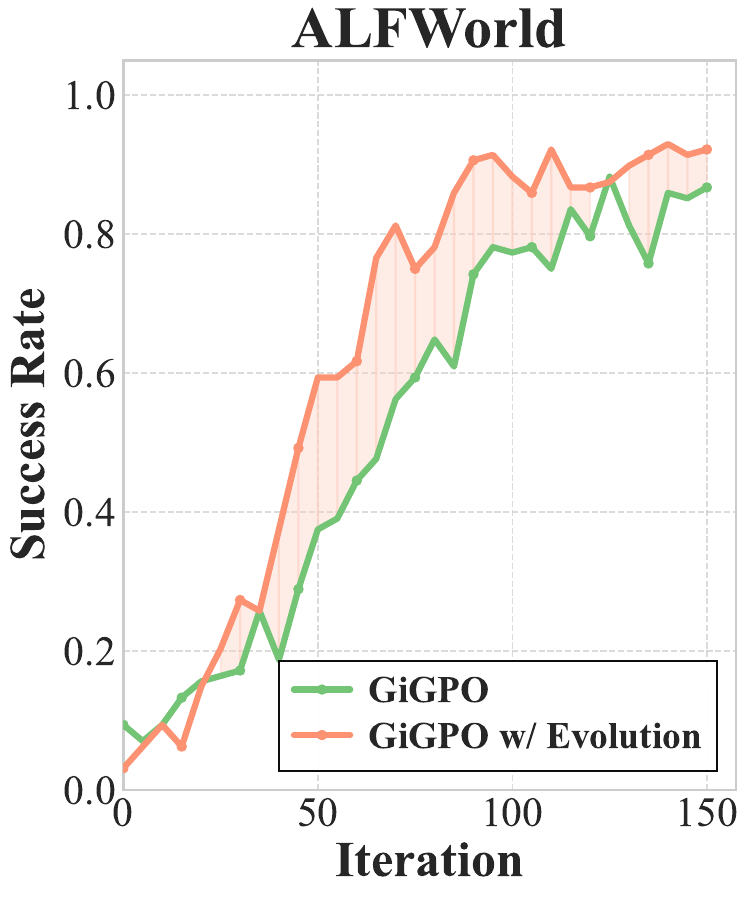}
    \end{subfigure}
        \begin{subfigure}[t]{0.245\linewidth}
        \centering
        \includegraphics[width=\linewidth]{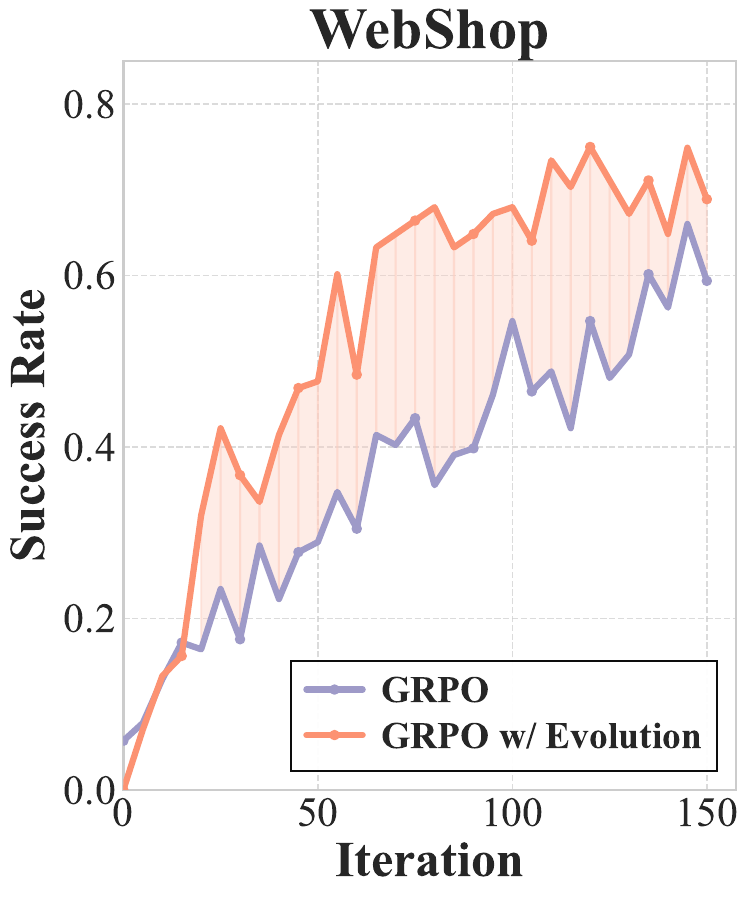}
    \end{subfigure}
    \begin{subfigure}[t]{0.245\linewidth}
        \centering
        \includegraphics[width=\linewidth]{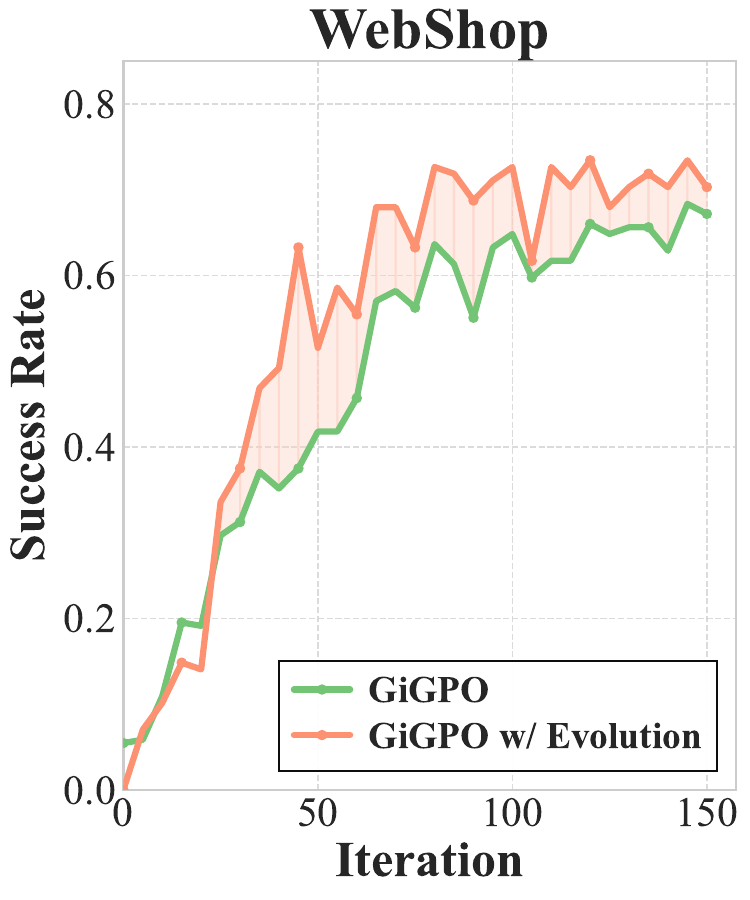}
    \end{subfigure}
    \caption{Effect of integrating the trajectory evolution module with different RL baselines. The shaded area indicates the performance gain from adding our evolution module. }
     \vspace{-0.7 em}
    \label{fig:evolve}
\end{figure*}

\subsection{Training Analysis}
\vspace{-0.5 em}
Figure~\ref{fig:success} presents the validation success rate curves during training, which can show how the training dynamics reflect the effectiveness of our evolution paradigm. We summarize our findings as follows: 
\vspace{-0.5em}
\begin{itemize}[leftmargin=*]
    \item Faster convergence stems from alleviating the sparse signal dilemma. In the early training stages, our method exhibits notably steeper learning curves compared to baselines. This acceleration directly addresses a fundamental limitation of standard RL: the base model struggles to sample sufficient successful trajectories in complex environments, resulting in sparse learning signals. By generating evolutionary trajectories through hindsight refinement, our approach provides additional high-quality training samples that are not bounded by the model's current reasoning capability, thereby enriching the gradient signal and enabling more efficient policy updates.
    \item Sustained performance gains validate the trajectory evolution paradigm. Throughout training, our method consistently maintains a performance advantage over GiGPO and other baselines. Compared to existing RL methods can only improve sampling efficiency of reasoning paths already within the base model's repertoire, our trajectory evolution paradigm actively constructs improved trajectories by eliminating redundant steps. This enables the model to learn from trajectories that are rarely sampled within practical training budgets.
    \item The advantage is more pronounced on WebShop due to higher task complexity. WebShop involves a much larger action space and noisier, lengthier observations, where standard RL methods suffer from indiscriminate updates that encourage redundant steps. Our binary classification approach—determining whether each step is necessary rather than estimating continuous credit scores—compresses the infinite solution space into a verifiable subspace, making credit assignment tractable even in such challenging scenarios. This explains why the performance gap between our method and baselines is particularly evident on WebShop.
\end{itemize}
\vspace{-0.5 em}

\subsection{Compatibility of the Evolution Module}
Figure~\ref{fig:evolve} demonstrates that our trajectory evolution module serves as a general enhancement applicable to different RL baselines. On both WebShop and ALFWorld, integrating the evolution module yields consistent improvements for both GRPO and GiGPO. Notably, the gain is more substantial when combined with GRPO, which lacks fine-grained credit assignment, and the evolved trajectories compensate for this limitation by directly providing refined training samples. For GiGPO, which already performs step-level credit assignment, the evolution module still brings complementary benefits, indicating that actively constructing higher-quality trajectories offers advantages beyond what credit assignment alone can achieve. The consistent patterns across both benchmarks validate the generalizability of our approach.
\vspace{-0.2 em}

\begin{wraptable}{r}{0.58\linewidth}
\fontsize{8}{9}\selectfont 
\centering
\vspace{-1 em}
\caption{Redundancy analysis on ALFWorld and WebShop. ``Avg. Steps'' denotes mean steps across all trajectories. ``Succ. Steps'' measures mean steps for successful trajectories only. Lower values indicate more efficient execution.}
\label{table:length}
\vspace{-1 em}
\resizebox{\linewidth}{!}{
\renewcommand\arraystretch{1}
\tabcolsep=0.07cm
\scalebox{0.8}{
\begin{tabular}{ccccc}
\toprule
\multirow{2}{*}{Model} & \multicolumn{2}{c}{ALFWorld}   & \multicolumn{2}{c}{WebShop}   \\ \cmidrule{2-5} 
                       & Avg.   Steps   & Succ. Steps   & Avg. Steps    & Succ. Steps   \\ \midrule
PPO                    & 17.45          & 10.63         & 8.94          & 8.61          \\ 
RLOO                   & 15.20          & 10.06         & 7.98          & 7.11          \\ 
GRPO                   & 15.36          & 11.26         & 10.97         & 12.61         \\ 
GiGPO                  & 11.58          & 8.76          & 6.16          & 5.33          \\ 
Ours                   & \textbf{9.63}  & \textbf{7.91} & \textbf{5.89} & \textbf{5.15} \\ \bottomrule 
\end{tabular}}}
\end{wraptable}
\subsection{Trajectory Redundancy Analysis}
\vspace{-5 pt}
Existing RL methods typically rely solely on sparse final rewards, which update all steps indiscriminately and may inadvertently encourage redundant actions. To investigate whether our method produces more efficient behaviors, we conduct trajectory length analysis in Table~\ref{table:length}. Since average trajectory length is confounded by success rate, as failed trajectories may be either prolonged by repeated invalid attempts or cut short by premature termination, we additionally report ``Succ. Steps’’, measuring the mean length of successful trajectories only, to enable fair comparison of execution efficiency.
As shown in Table~\ref{table:length}, our method achieves the shortest trajectories on both metrics across all benchmarks. This demonstrates that our binary credit assignment effectively identifies and eliminates redundant steps, enabling the model to complete tasks with more concise action sequences.

\begin{wrapfigure}{r}{0.6\textwidth}
    \centering
    \includegraphics[width=1\linewidth]{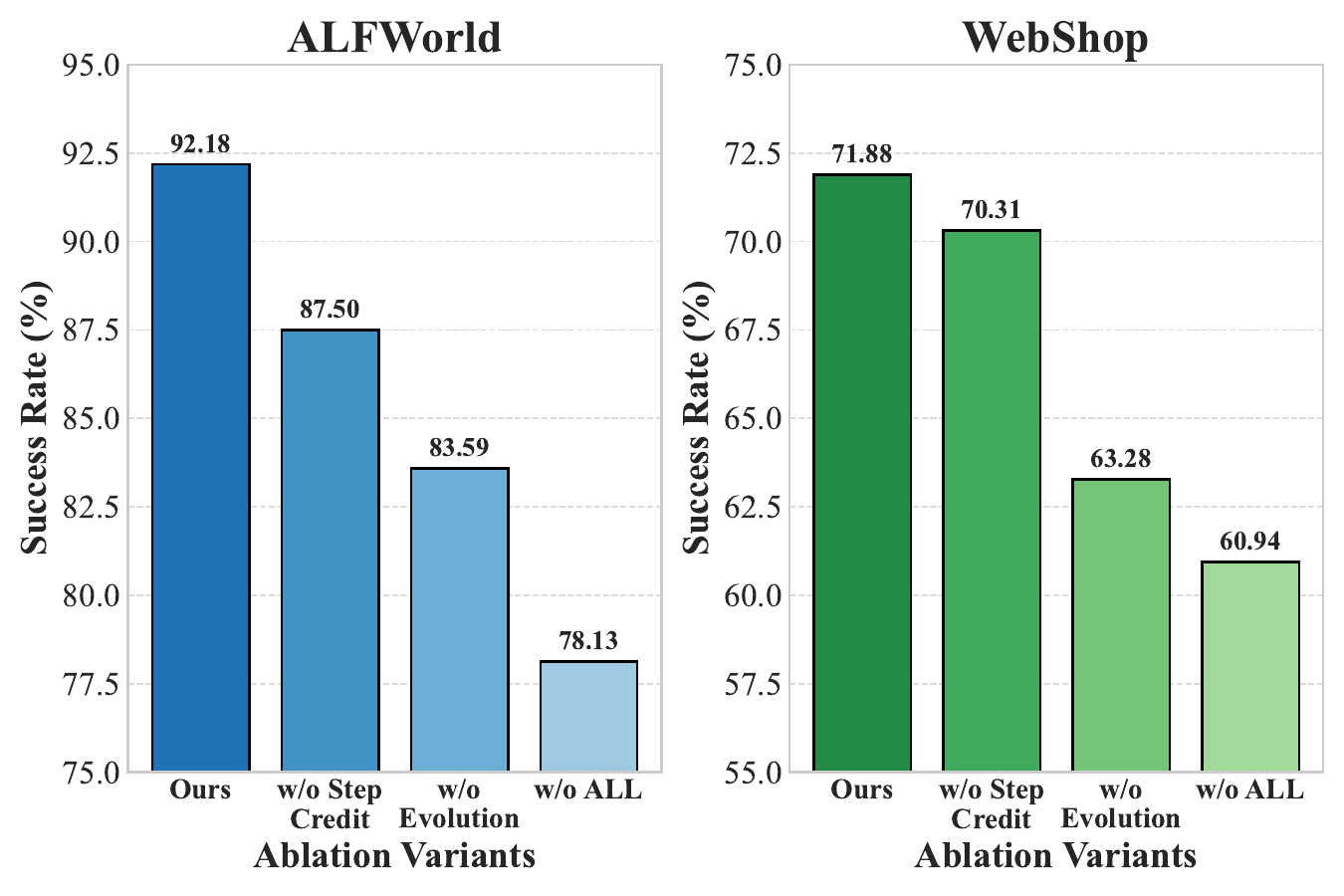}
    \caption{Ablation study on ALFWorld and Webshop benchmarks. Qwen2.5-1.5B-Instruct serves the base models. }
    \vspace{-1em}
    \label{fig:ablation}
\end{wrapfigure}
\vspace{-0.5 em}

\subsection{Ablation Study}
\vspace{-0.5em}
To investigate the individual contribution of each component in our framework, we conduct ablation experiments by systematically removing the step-level credit assignment (w/o Step Credit) and the trajectory evolution module (w/o Evolution). As shown in Figure~\ref{fig:ablation}, both components are essential for achieving optimal performance, with their combination yielding the best results across both benchmarks.
Notably, removing the evolution module causes a larger performance drop compared to removing step credit, indicating that actively constructing higher-quality trajectories provides more substantial benefits than refining credit assignment alone. Furthermore, the gains from both components are largely additive, confirming that trajectory-level optimization and step-level credit assignment address complementary aspects of the challenge.

Due to the space limits, more experiments about complex scenarios generalization, discriminator accuracy, off-policy analysis, and other ablation studies can be referred in Appendix~\ref{app:exp} and ~\ref{app:analysis} .

\vspace{-1 em}

\section{Related Work}
\vspace{-0.5em}
\paragraph{Reinforcement Learning for LLMs.}
Reinforcement learning (RL)~\citep{rl_dpo, song2024trial} has significantly advanced the reasoning capabilities of LLMs. As a representative RL method, PPO~\citep{rl_ppo} achieves desirable performance by enabling dynamic policy updates via environment feedback, yet requires additional critic models for advantage estimation, increasing computational overhead and training complexity. To address this, group-based methods including GRPO~\citep{rl_grpo}, Dr.GRPO~\citep{rl_drgrpo}, DAPO~\citep{rl_dapo}, and GSPO~\citep{rl_gspo} estimate advantages from response batches sampled under the same prompt, eliminating critic dependency while maintaining efficiency. Recent work ~\citep{rl_best} argues that RL primarily improves the sampling efficiency of reasoning paths already within the base model's repertoire, rather than eliciting fundamentally new ones. We take this observation as a starting point rather than a challenge: if the bottleneck of long-horizon RL is sampling efficiency over composed trajectories, then explicitly constructing high-quality compositions from existing rollouts is a principled response. \ourmodel operationalizes this view: every action in an evolved trajectory is drawn from the base policy's own outputs, but the composition is one that on-policy sampling would require exponentially many rollouts to discover (Theorem~\ref{thm:main}).
\vspace{-0.5em}
\paragraph{Credit Assignment in RL.}
Despite the success of group-based RL methods~\citep{rl_rloo}, they rely solely on outcome rewards without evaluating intermediate steps, motivating credit assignment research. Existing methods fall into two categories: i) Heuristic signals~\citep{rl_empg, rl_rlvmr, rl_igpo, hindsight} use LLM confidence or meta-reasoning formats, risking reinforcement of confident but incorrect outputs; ii) Outcome-derived signals~\citep{rl_vineppo, rl_tree-grpo, rl_spo, rl_gigpo} estimate step values via Monte Carlo rollouts but only identify success-leading steps, failing to detect redundant ones. 
\vspace{-0.5em}

\section{Conclusion}
\vspace{-0.5em}
We presented \ourmodel, a step-level credit assignment framework for long-horizon agent tasks. By reformulating credit assignment as binary necessity classification, \ourmodel enables reliable label acquisition through hindsight verification and trains a discriminator to guide both policy optimization and trajectory evolution. Experiments on ALFWorld and WebShop demonstrate that \ourmodel consistently outperforms existing RL methods. Future work includes extending our approach to more complex environments and exploring the integration of step-level credit with traditional search question-answer scenarios.

% \section*{Limitation}

\bibliography{neurips_2026}

@article{gpt-4o,
  title={Gpt-4o system card},
  author={Hurst, Aaron and Lerer, Adam and Goucher, Adam P and Perelman, Adam and Ramesh, Aditya and Clark, Aidan and Ostrow, AJ and Welihinda, Akila and Hayes, Alan and Radford, Alec and others},
  journal={arXiv preprint arXiv:2410.21276},
  year={2024}
}

@article{gemini2.5,
  title={Gemini 2.5: Pushing the frontier with advanced reasoning, multimodality, long context, and next generation agentic capabilities},
  author={Comanici, Gheorghe and Bieber, Eric and Schaekermann, Mike and Pasupat, Ice and Sachdeva, Noveen and Dhillon, Inderjit and Blistein, Marcel and Ram, Ori and Zhang, Dan and Rosen, Evan and others},
  journal={arXiv preprint arXiv:2507.06261},
  year={2025}
}

@article{qwen3,
  title={Qwen3 technical report},
  author={Yang, An and Li, Anfeng and Yang, Baosong and Zhang, Beichen and Hui, Binyuan and Zheng, Bo and Yu, Bowen and Gao, Chang and Huang, Chengen and Lv, Chenxu and others},
  journal={arXiv preprint arXiv:2505.09388},
  year={2025}
}

@article{rl_best,
  title={Does reinforcement learning really incentivize reasoning capacity in llms beyond the base model?},
  author={Yue, Yang and Chen, Zhiqi and Lu, Rui and Zhao, Andrew and Wang, Zhaokai and Song, Shiji and Huang, Gao},
  journal={arXiv preprint arXiv:2504.13837},
  year={2025}
}

@article{rl_dpo,
  title={Direct preference optimization: Your language model is secretly a reward model},
  author={Rafailov, Rafael and Sharma, Archit and Mitchell, Eric and Manning, Christopher D and Ermon, Stefano and Finn, Chelsea},
  journal={Advances in neural information processing systems},
  volume={36},
  pages={53728--53741},
  year={2023}
}

@article{rl_ppo,
  title={Proximal policy optimization algorithms},
  author={Schulman, John and Wolski, Filip and Dhariwal, Prafulla and Radford, Alec and Klimov, Oleg},
  journal={arXiv preprint arXiv:1707.06347},
  year={2017}
}

@article{rl_grpo,
  title={Deepseekmath: Pushing the limits of mathematical reasoning in open language models},
  author={Shao, Zhihong and Wang, Peiyi and Zhu, Qihao and Xu, Runxin and Song, Junxiao and Bi, Xiao and Zhang, Haowei and Zhang, Mingchuan and Li, YK and Wu, Yang and others},
  journal={arXiv preprint arXiv:2402.03300},
  year={2024}
}

@article{rl_gspo,
  title={Group sequence policy optimization},
  author={Zheng, Chujie and Liu, Shixuan and Li, Mingze and Chen, Xiong-Hui and Yu, Bowen and Gao, Chang and Dang, Kai and Liu, Yuqiong and Men, Rui and Yang, An and others},
  journal={arXiv preprint arXiv:2507.18071},
  year={2025}
}

@article{rl_dapo,
  title={Dapo: An open-source llm reinforcement learning system at scale},
  author={Yu, Qiying and Zhang, Zheng and Zhu, Ruofei and Yuan, Yufeng and Zuo, Xiaochen and Yue, Yu and Dai, Weinan and Fan, Tiantian and Liu, Gaohong and Liu, Lingjun and others},
  journal={arXiv preprint arXiv:2503.14476},
  year={2025}
}

@article{rl_drgrpo,
  title={Understanding r1-zero-like training: A critical perspective},
  author={Liu, Zichen and Chen, Changyu and Li, Wenjun and Qi, Penghui and Pang, Tianyu and Du, Chao and Lee, Wee Sun and Lin, Min},
  journal={arXiv preprint arXiv:2503.20783},
  year={2025}
}

@article{rl_gigpo,
  title={Group-in-group policy optimization for llm agent training},
  author={Feng, Lang and Xue, Zhenghai and Liu, Tingcong and An, Bo},
  journal={arXiv preprint arXiv:2505.10978},
  year={2025}
}

@article{rl_rlvmr,
  title={Rlvmr: Reinforcement learning with verifiable meta-reasoning rewards for robust long-horizon agents},
  author={Zhang, Zijing and Chen, Ziyang and Li, Mingxiao and Tu, Zhaopeng and Li, Xiaolong},
  journal={arXiv preprint arXiv:2507.22844},
  year={2025}
}

@article{rl_tree-grpo,
  title={Tree search for llm agent reinforcement learning},
  author={Ji, Yuxiang and Ma, Ziyu and Wang, Yong and Chen, Guanhua and Chu, Xiangxiang and Wu, Liaoni},
  journal={arXiv preprint arXiv:2509.21240},
  year={2025}
}

@article{rl_igpo,
  title={Information Gain-based Policy Optimization: A Simple and Effective Approach for Multi-Turn LLM Agents},
  author={Wang, Guoqing and Dai, Sunhao and Ye, Guangze and Gan, Zeyu and Yao, Wei and Deng, Yong and Wu, Xiaofeng and Ying, Zhenzhe},
  journal={arXiv preprint arXiv:2510.14967},
  year={2025}
}

@article{rl_gepo,
  title={Graph-Enhanced Policy Optimization in LLM Agent Training},
  author={Yuan, Jiazhen and Zhao, Wei and Bai, Zhengbiao},
  journal={arXiv preprint arXiv:2510.26270},
  year={2025}
}

@inproceedings{rl_rloo,
  title={Back to Basics: Revisiting REINFORCE-Style Optimization for Learning from Human Feedback in LLMs},
  author={Ahmadian, Arash and Cremer, Chris and Gall{\'e}, Matthias and Fadaee, Marzieh and Kreutzer, Julia and Pietquin, Olivier and {\"U}st{\"u}n, Ahmet and Hooker, Sara},
  booktitle={Proceedings of the 62nd Annual Meeting of the Association for Computational Linguistics (Volume 1: Long Papers)},
  pages={12248--12267},
  year={2024}
}

@article{rl_spo,
  title={Segment policy optimization: Effective segment-level credit assignment in rl for large language models},
  author={Guo, Yiran and Xu, Lijie and Liu, Jie and Ye, Dan and Qiu, Shuang},
  journal={arXiv preprint arXiv:2505.23564},
  year={2025}
}

@inproceedings{rl_vineppo,
  title={VinePPO: Refining Credit Assignment in RL Training of LLMs},
  author={Kazemnejad, Amirhossein and Aghajohari, Milad and Portelance, Eva and Sordoni, Alessandro and Reddy, Siva and Courville, Aaron and Le Roux, Nicolas},
  booktitle={Forty-second International Conference on Machine Learning}
}

@article{rl_off_policy_GRPO,
  title={Revisiting Group Relative Policy Optimization: Insights into On-Policy and Off-Policy Training},
  author={Mroueh, Youssef and Dupuis, Nicolas and Belgodere, Brian and Nitsure, Apoorva and Rigotti, Mattia and Greenewald, Kristjan and Navratil, Jiri and Ross, Jerret and Rios, Jesus},
  journal={arXiv preprint arXiv:2505.22257},
  year={2025}
}

@article{rl_arpo,
  title={ARPO: End-to-End Policy Optimization for GUI Agents with Experience Replay},
  author={Lu, Fanbin and Zhong, Zhisheng and Liu, Shu and Fu, Chi-Wing and Jia, Jiaya},
  journal={arXiv preprint arXiv:2505.16282},
  year={2025}
}

@article{rl_exgrpo,
  title={ExGRPO: Learning to Reason from Experience},
  author={Zhan, Runzhe and Li, Yafu and Wang, Zhi and Qu, Xiaoye and Liu, Dongrui and Shao, Jing and Wong, Derek F and Cheng, Yu},
  journal={arXiv preprint arXiv:2510.02245},
  year={2025}
}

@article{alfworld,
  title={Alfworld: Aligning text and embodied environments for interactive learning},
  author={Shridhar, Mohit and Yuan, Xingdi and C{\^o}t{\'e}, Marc-Alexandre and Bisk, Yonatan and Trischler, Adam and Hausknecht, Matthew},
  journal={arXiv preprint arXiv:2010.03768},
  year={2020}
}

@article{webshop,
  title={Webshop: Towards scalable real-world web interaction with grounded language agents},
  author={Yao, Shunyu and Chen, Howard and Yang, John and Narasimhan, Karthik},
  journal={Advances in Neural Information Processing Systems},
  volume={35},
  pages={20744--20757},
  year={2022}
}

@inproceedings{react,
  title={React: Synergizing reasoning and acting in language models},
  author={Yao, Shunyu and Zhao, Jeffrey and Yu, Dian and Du, Nan and Shafran, Izhak and Narasimhan, Karthik R and Cao, Yuan},
  booktitle={The eleventh international conference on learning representations},
  year={2022}
}

@article{reflexion,
  title={Reflexion: Language agents with verbal reinforcement learning},
  author={Shinn, Noah and Cassano, Federico and Gopinath, Ashwin and Narasimhan, Karthik and Yao, Shunyu},
  journal={Advances in Neural Information Processing Systems},
  volume={36},
  pages={8634--8652},
  year={2023}
}

@article{gpt4,
  title={Gpt-4 technical report},
  author={Achiam, Josh and Adler, Steven and Agarwal, Sandhini and Ahmad, Lama and Akkaya, Ilge and Aleman, Florencia Leoni and Almeida, Diogo and Altenschmidt, Janko and Altman, Sam and Anadkat, Shyamal and others},
  journal={arXiv preprint arXiv:2303.08774},
  year={2023}
}

@article{rl_istar,
  title={Agentic Reinforcement Learning with Implicit Step Rewards},
  author={Liu, Xiaoqian and Wang, Ke and Wu, Yuchuan and Huang, Fei and Li, Yongbin and Zhang, Junge and Jiao, Jianbin},
  journal={arXiv preprint arXiv:2509.19199},
  year={2025}
}

@article{song2024trial,
  title={Trial and error: Exploration-based trajectory optimization for llm agents},
  author={Song, Yifan and Yin, Da and Yue, Xiang and Huang, Jie and Li, Sujian and Lin, Bill Yuchen},
  journal={arXiv preprint arXiv:2403.02502},
  year={2024}
}

@article{ouyang2022training,
  title={Training language models to follow instructions with human feedback},
  author={Ouyang, Long and Wu, Jeffrey and Jiang, Xu and Almeida, Diogo and Wainwright, Carroll and Mishkin, Pamela and Zhang, Chong and Agarwal, Sandhini and Slama, Katarina and Ray, Alex and others},
  journal={Advances in neural information processing systems},
  volume={35},
  pages={27730--27744},
  year={2022}
}

@article{bai2022training,
  title={Training a helpful and harmless assistant with reinforcement learning from human feedback},
  author={Bai, Yuntao and Jones, Andy and Ndousse, Kamal and Askell, Amanda and Chen, Anna and DasSarma, Nova and Drain, Dawn and Fort, Stanislav and Ganguli, Deep and Henighan, Tom and others},
  journal={arXiv preprint arXiv:2204.05862},
  year={2022}
}

@article{guo2025deepseek,
  title={Deepseek-r1: Incentivizing reasoning capability in llms via reinforcement learning},
  author={Guo, Daya and Yang, Dejian and Zhang, Haowei and Song, Junxiao and Wang, Peiyi and Zhu, Qihao and Xu, Runxin and Zhang, Ruoyu and Ma, Shirong and Bi, Xiao and others},
  journal={arXiv preprint arXiv:2501.12948},
  year={2025}
}

@article{yang2024qwen2,
  title={Qwen2. 5-math technical report: Toward mathematical expert model via self-improvement},
  author={Yang, An and Zhang, Beichen and Hui, Binyuan and Gao, Bofei and Yu, Bowen and Li, Chengpeng and Liu, Dayiheng and Tu, Jianhong and Zhou, Jingren and Lin, Junyang and others},
  journal={arXiv preprint arXiv:2409.12122},
  year={2024}
}

@article{team2025kimi,
  title={Kimi k1. 5: Scaling reinforcement learning with llms},
  author={Team, Kimi and Du, Angang and Gao, Bofei and Xing, Bowei and Jiang, Changjiu and Chen, Cheng and Li, Cheng and Xiao, Chenjun and Du, Chenzhuang and Liao, Chonghua and others},
  journal={arXiv preprint arXiv:2501.12599},
  year={2025}
}

@article{yue2025vapo,
  title={Vapo: Efficient and reliable reinforcement learning for advanced reasoning tasks},
  author={Yue, Yu and Yuan, Yufeng and Yu, Qiying and Zuo, Xiaochen and Zhu, Ruofei and Xu, Wenyuan and Chen, Jiaze and Wang, Chengyi and Fan, TianTian and Du, Zhengyin and others},
  journal={arXiv preprint arXiv:2504.05118},
  year={2025}
}

@article{li2023remax,
  title={Remax: A simple, effective, and efficient reinforcement learning method for aligning large language models},
  author={Li, Ziniu and Xu, Tian and Zhang, Yushun and Lin, Zhihang and Yu, Yang and Sun, Ruoyu and Luo, Zhi-Quan},
  journal={arXiv preprint arXiv:2310.10505},
  year={2023}
}

@article{gulcehre2023reinforced,
  title={Reinforced self-training (rest) for language modeling},
  author={Gulcehre, Caglar and Paine, Tom Le and Srinivasan, Srivatsan and Konyushkova, Ksenia and Weerts, Lotte and Sharma, Abhishek and Siddhant, Aditya and Ahern, Alex and Wang, Miaosen and Gu, Chenjie and others},
  journal={arXiv preprint arXiv:2308.08998},
  year={2023}
}

@article{singh2023beyond,
  title={Beyond human data: Scaling self-training for problem-solving with language models},
  author={Singh, Avi and Co-Reyes, John D and Agarwal, Rishabh and Anand, Ankesh and Patil, Piyush and Garcia, Xavier and Liu, Peter J and Harrison, James and Lee, Jaehoon and Xu, Kelvin and others},
  journal={arXiv preprint arXiv:2312.06585},
  year={2023}
}

@article{dong2023raft,
  title={Raft: Reward ranked finetuning for generative foundation model alignment},
  author={Dong, Hanze and Xiong, Wei and Goyal, Deepanshu and Zhang, Yihan and Chow, Winnie and Pan, Rui and Diao, Shizhe and Zhang, Jipeng and Shum, Kashun and Zhang, Tong},
  journal={arXiv preprint arXiv:2304.06767},
  year={2023}
}

@article{yuan2024self,
  title={Self-rewarding language models},
  author={Yuan, Weizhe and Pang, Richard Yuanzhe and Cho, Kyunghyun and Li, Xian and Sukhbaatar, Sainbayar and Xu, Jing and Weston, Jason},
  journal={arXiv preprint arXiv:2401.10020},
  year={2024}
}

@article{chen2024self,
  title={Self-play fine-tuning converts weak language models to strong language models},
  author={Chen, Zixiang and Deng, Yihe and Yuan, Huizhuo and Ji, Kaixuan and Gu, Quanquan},
  journal={arXiv preprint arXiv:2401.01335},
  year={2024}
}

@article{roux2025tapered,
  title={Tapered off-policy reinforce: Stable and efficient reinforcement learning for llms},
  author={Roux, Nicolas Le and Bellemare, Marc G and Lebensold, Jonathan and Bergeron, Arnaud and Greaves, Joshua and Fr{\'e}chette, Alex and Pelletier, Carolyne and Thibodeau-Laufer, Eric and Toth, S{\'a}ndor and Work, Sam},
  journal={arXiv preprint arXiv:2503.14286},
  year={2025}
}

@article{yan2025learning,
  title={Learning to reason under off-policy guidance},
  author={Yan, Jianhao and Li, Yafu and Hu, Zican and Wang, Zhi and Cui, Ganqu and Qu, Xiaoye and Cheng, Yu and Zhang, Yue},
  journal={arXiv preprint arXiv:2504.14945},
  year={2025}
}

@article{bartoldson2025trajectory,
  title={Trajectory balance with asynchrony: Decoupling exploration and learning for fast, scalable llm post-training},
  author={Bartoldson, Brian and Venkatraman, Siddarth and Diffenderfer, James and Jain, Moksh and Ben-Nun, Tal and Lee, Seanie and Kim, Minsu and Obando-Ceron, Johan and Bengio, Yoshua and Kailkhura, Bhavya},
  journal={arXiv preprint arXiv:2503.18929},
  year={2025}
}

@article{zhou2024archer,
  title={Archer: Training language model agents via hierarchical multi-turn rl},
  author={Zhou, Yifei and Zanette, Andrea and Pan, Jiayi and Levine, Sergey and Kumar, Aviral},
  journal={arXiv preprint arXiv:2402.19446},
  year={2024}
}

@inproceedings{xi2025agentgym,
  title={Agentgym: Evaluating and training large language model-based agents across diverse environments},
  author={Xi, Zhiheng and Ding, Yiwen and Chen, Wenxiang and Hong, Boyang and Guo, Honglin and Wang, Junzhe and Guo, Xin and Yang, Dingwen and Liao, Chenyang and He, Wei and others},
  booktitle={Proceedings of the 63rd Annual Meeting of the Association for Computational Linguistics (Volume 1: Long Papers)},
  pages={27914--27961},
  year={2025}
}

@article{hu2025training,
  title={Training Task Reasoning LLM Agents for Multi-turn Task Planning via Single-turn Reinforcement Learning},
  author={Hu, Hanjiang and Liu, Changliu and Li, Na and Wang, Yebin},
  journal={arXiv preprint arXiv:2509.20616},
  year={2025}
}

@article{rl_rest,
  title={ResT: Reshaping Token-Level Policy Gradients for Tool-Use Large Language Models},
  author={Lin, Zihan and Wang, Xiaohan and Cao, Jie and Chai, Jiajun and Yin, Guojun and Lin, Wei and He, Ran},
  journal={arXiv preprint arXiv:2509.21826},
  year={2025}
}

@article{rl_portool,
  title={PORTool: Tool-Use LLM Training with Rewarded Tree},
  author={Wu, Feijie and Zhu, Weiwu and Zhang, Yuxiang and Chatterjee, Soumya and Zhu, Jiarong and Mo, Fan and Luo, Rodin and Gao, Jing},
  journal={arXiv preprint arXiv:2510.26020},
  year={2025}
}

@article{criticsearch,
  title={CriticSearch: Fine-Grained Credit Assignment for Search Agents via a Retrospective Critic},
  author={Zhang, Yaocheng and Huang, Haohuan and Song, Zijun and Zhu, Yuanheng and Zhang, Qichao and Zhao, Zijie and Zhao, Dongbin},
  journal={arXiv preprint arXiv:2511.12159},
  year={2025}
}

@article{prime,
  title={Process reinforcement through implicit rewards},
  author={Cui, Ganqu and Yuan, Lifan and Wang, Zefan and Wang, Hanbin and Zhang, Yuchen and Chen, Jiacheng and Li, Wendi and He, Bingxiang and Fan, Yuchen and Yu, Tianyu and others},
  journal={arXiv preprint arXiv:2502.01456},
  year={2025}
}

@article{pure,
  title={Stop summation: Min-form credit assignment is all process reward model needs for reasoning},
  author={Cheng, Jie and Qiao, Ruixi and Li, Lijun and Guo, Chao and Wang, Junle and Xiong, Gang and Lv, Yisheng and Wang, Fei-Yue},
  journal={arXiv preprint arXiv:2504.15275},
  year={2025}
}

@article{stepsearch,
  title={StepSearch: Igniting LLMs Search Ability via Step-Wise Proximal Policy Optimization},
  author={Wang, Ziliang and Zheng, Xuhui and An, Kang and Ouyang, Cijun and Cai, Jialu and Wang, Yuhang and Wu, Yichao},
  journal={arXiv preprint arXiv:2505.15107},
  year={2025}
}

@article{hindsight,
  title={Hindsight credit assignment},
  author={Harutyunyan, Anna and Dabney, Will and Mesnard, Thomas and Gheshlaghi Azar, Mohammad and Piot, Bilal and Heess, Nicolas and van Hasselt, Hado P and Wayne, Gregory and Singh, Satinder and Precup, Doina and others},
  journal={Advances in neural information processing systems},
  volume={32},
  year={2019}
}

@inproceedings{lare,
  title={Latent reward: Llm-empowered credit assignment in episodic reinforcement learning},
  author={Qu, Yun and Jiang, Yuhang and Wang, Boyuan and Mao, Yixiu and Wang, Cheems and Liu, Chang and Ji, Xiangyang},
  booktitle={Proceedings of the AAAI Conference on Artificial Intelligence},
  volume={39},
  number={19},
  pages={20095--20103},
  year={2025}
}

@article{Implicitprm,
  title={Free process rewards without process labels},
  author={Yuan, Lifan and Li, Wendi and Chen, Huayu and Cui, Ganqu and Ding, Ning and Zhang, Kaiyan and Zhou, Bowen and Liu, Zhiyuan and Peng, Hao},
  journal={arXiv preprint arXiv:2412.01981},
  year={2024}
}

@article{acpo,
  title={Pinpointing crucial steps: Attribution-based Credit Assignment for Verifiable Reinforcement Learning},
  author={Yin, Junxi and Luo, Haisen and Li, Zhenyu and Liu, Yihua and Liu, Dan and Li, Zequn and Xu, Xiaohang},
  journal={arXiv preprint arXiv:2510.08899},
  year={2025}
}

@article{carl,
  title={CARL: Critical Action Focused Reinforcement Learning for Multi-Step Agent},
  author={Shen, Leyang and Zhang, Yang and Ling, Chun Kai and Zhao, Xiaoyan and Chua, Tat-Seng},
  journal={arXiv preprint arXiv:2512.04949},
  year={2025}
}

@article{spa-rl,
  title={SPA-RL: Reinforcing LLM Agents via Stepwise Progress Attribution},
  author={Wang, Hanlin and Leong, Chak Tou and Wang, Jiashuo and Wang, Jian and Li, Wenjie},
  journal={arXiv preprint arXiv:2505.20732},
  year={2025}
}

@article{rl_rlver,
  title={Rlver: Reinforcement learning with verifiable emotion rewards for empathetic agents},
  author={Wang, Peisong and Ma, Ruotian and Zhang, Bang and Chen, Xingyu and He, Zhiwei and Luo, Kang and Lv, Qingsong and Jiang, Qingxuan and Xie, Zheng and Wang, Shanyi and others},
  journal={arXiv preprint arXiv:2507.03112},
  year={2025}
}

@article{rl_empg,
  title={Harnessing uncertainty: Entropy-modulated policy gradients for long-horizon llm agents},
  author={Wang, Jiawei and Liu, Jiacai and Fu, Yuqian and Li, Yingru and Wang, Xintao and Lin, Yuan and Yue, Yu and Zhang, Lin and Wang, Yang and Wang, Ke},
  journal={arXiv preprint arXiv:2509.09265},
  year={2025}
}

@article{hz_hindsight,
  title={Hindsight credit assignment for long-horizon llm agents},
  author={Tan, Hui-Ze and Yang, Xiao-Wen and Chen, Hao and Shao, Jie-Jing and Wen, Yi and Shen, Yuteng and Luo, Weihong and Du, Xiku and Guo, Lan-Zhe and Li, Yu-Feng},
  journal={arXiv preprint arXiv:2603.08754},
  year={2026}
}

@misc{qr_gca,
      title={Grounded Credit Assignment in Reinforcement Learning for Multi-Turn Search Agents}, 
      author={Qirui Liu and Hao Chen and Yi Wen and Weijie Shi and Yao Zhao and Xianhe Chen and Hao Zhao and Xing Tang and Weihong Luo and Yuyu Luo},
      year={2026},
      eprint={2610.23148},
      archivePrefix={arXiv},
      primaryClass={cs.LG},
      url={https://arxiv.org/abs/2610.23148}, 
}

@misc{yx_finhopper,
      title={FinHopper: A Progressive Multi-Hop Benchmark for Complex Financial Agents with Step-wise Evaluation}, 
      author={Yuxuan Wu and Miaopei Lin and Qiufang Ying and Lin Dan and Hao Zhao and Qirui Liu and Yuteng Shen and Shaohui Wu and Weihong Luo and Xiku Du and Xing Tang and Hao Chen},
      year={2026},
      eprint={2610.26036},
      archivePrefix={arXiv},
      primaryClass={cs.LG},
      url={https://arxiv.org/abs/2610.26036}, 
}

@misc{zs_r2-searcher,
      title={R$^2$-Searcher: Calibrating Retrieval and Reasoning Boundaries for Agentic Search}, 
      author={Sheng Zhang and Junyi Li and Wenlin Zhang and Xiaowei Qian and Yichao Wang and Yingyi Zhang and Maolin Wang and Yong Liu and Xiangyu Zhao},
      year={2026},
      eprint={2606.28566},
      archivePrefix={arXiv},
      primaryClass={cs.IR},
      url={https://arxiv.org/abs/2606.28566}, 
}

@inproceedings{xdr_harnessing,
  title={Harnessing large language models for knowledge graph question answering via adaptive multi-aspect retrieval-augmentation},
  author={Xu, Derong and Li, Xinhang and Zhang, Ziheng and Lin, Zhenxi and Zhu, Zhihong and Zheng, Zhi and Wu, Xian and Zhao, Xiangyu and Xu, Tong and Chen, Enhong},
  booktitle={Proceedings of the AAAI Conference on Artificial Intelligence},
  volume={39},
  number={24},
  pages={25570--25578},
  year={2025}
}
\bibliographystyle{plainnat}

%%%%%%%%%%%%%%%%%%%%%%%%%%%%%%%%%%%%%%%%%%%%%%%%%%%%%%%%%%%%

\appendix
\newpage
\appendix
\onecolumn
\begin{center}
    \LARGE \textbf{Technical Appendix}
    \vspace{1em}
\end{center}
\addcontentsline{toc}{part}{Appendix}  %

\begingroup
  \etocsetnexttocdepth{subsection}     %
  \etocsettocstyle{\section*{Table of Contents}}{} %
  \localtableofcontents                %
\endgroup
\clearpage

\section{Proof of Theorem~\ref{thm:main}}
\label{app:theorem}

\redundancyasmp*

\smoothness*

\begin{lemma}[Sampling Barrier]
\label{lem:sampling}
For any policy $\pi_\theta$ satisfying $P_{\pi_\theta}(r_t = 1 \mid s_t) \geq p$ at a $\beta$-fraction of the states along every rollout, the probability that a trajectory of length $T$ is clean satisfies:
\begin{equation}
    P_{\pi_\theta}(\tau_{\text{clean}}) \leq (1 - p)^{\beta T}.
\end{equation}
In a group of $N$ independent rollouts, the number of clean trajectories $m_{\textnormal{clean}}$ satisfies $E[m_{\textnormal{clean}}] \leq N (1 - p)^{\beta T}$, $P(m_{\textnormal{clean}} \geq 1) \leq N (1 - p)^{\beta T}$, and $P(m_{\textnormal{clean}} = N) \leq (1 - p)^{\beta T N}$.
\end{lemma}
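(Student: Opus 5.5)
We prove the single-trajectory bound first and then lift it to the group of $N$ rollouts.

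\emph{Single-trajectory bound.} A trajectory $\tau$ of length $T$ is clean only if every step is necessary, i.e.\ $r_t = 0$ for all $t$. It therefore suffices to bound the probability that $r_t=0$ at every redundancy-prone state along the rollout. By hypothesis there are at least $\beta T$ such states; if $\beta T$ is not an integer we use $\lceil \beta T\rceil \ge \beta T$, which only strengthens the bound. List these states in time order as $t_1<t_2<\dots$. Write the joint probability by the chain rule, conditioning each step on the history $\mathcal{F}_{t_i}$ up to time $t_i$:
\begin{equation*}
P\big(r_{t_1}=0,\dots,r_{t_n}=0\big)=\prod_{i=1}^{n} P\big(r_{t_i}=0 \mid \mathcal{F}_{t_i}, r_{t_1}=\dots=r_{t_{i-1}}=0\big).
\end{equation*}
The conditioning is Markov: given the current state $s_{t_i}$, we have $P(r_{t_i}=1\mid s_{t_i})\ge p$, so each factor is at most $1-p$. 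This gives $P(\tau_{\text{clean}})\le (1-p)^{\beta T}$.

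\textbf{Main obstacle.} The delicate step is that the set of redundancy-prone states is itself random and depends on earlier actions. We handle this by working with a stopping-time or filtration argument rather than assuming the positions are fixed. The hypothesis guarantees at least $\beta T$ such states along \emph{every} rollout, and on each of them the per-state bound holds conditionally on the past. Hence the product bound survives; formally, one inducts on the number of prone states visited so far.

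\emph{Group statements.} The $N$ rollouts are independent, so $m_{\text{clean}}=\sum_{i=1}^N \mathbb{I}[\tau_i \text{ clean}]$.
\begin{itemize}
\item Linearity of expectation gives $E[m_{\text{clean}}]\le N(1-p)^{\beta T}$.
\item The union bound (equivalently, Markov's inequality applied to the integer variable $m_{\text{clean}}$) gives $P(m_{\text{clean}}\ge1)\le N(1-p)^{\beta T}$.
\item Independence gives $P(m_{\text{clean}}=N)=\prod_i P(\tau_i\text{ clean})\le (1-p)^{\beta TN}$.
\end{itemize}
These three steps are routine.
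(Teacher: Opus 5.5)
Your proposal is correct and follows essentially the same route as the paper. You chain-rule bound over the (at least) $\beta T$ redundancy-prone steps, each contributing a conditional factor of at most $1-p$, then use linearity of expectation, Markov's inequality (equivalently, your union bound), and independence across rollouts for the three group statements. Your induction over prone steps visited just makes explicit what the paper compresses into ``conditional on the past.'' Like the paper, you tacitly rely on the per-step independence of the redundancy model, since $P(r_t=1\mid s_t)\ge p$ alone does not bound the probability conditional on the full history.
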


\begin{proof}
A clean trajectory requires, in particular, each of the (at least) $\beta T$ redundancy-prone steps to be non-redundant. Conditional on the past, each such step is non-redundant with probability at most $1 - p$, so by the chain rule $P(\text{clean}) \leq (1 - p)^{\beta T}$. Linearity of expectation gives the bound on $E[m_{\textnormal{clean}}]$, Markov's inequality gives $P(m_{\textnormal{clean}} \geq 1) \leq E[m_{\textnormal{clean}}]$, and independence across rollouts gives the bound on $P(m_{\textnormal{clean}} = N)$.
\end{proof}

\begin{lemma}[Self-Locking of the Redundancy Floor]
\label{lem:locking}
Under Assumptions~\ref{asmp:redundancy} and~\ref{asmp:smooth}, define the critical iteration count:
\begin{equation}
    K^* \triangleq \frac{p_0}{2 \, N h_{\max} \, \eta \, (1 - p_0/2)^{\beta T}}.
\end{equation}
For all $K \leq K^*$, the per-step redundancy probability remains above half its initial value throughout training:
\begin{equation}
    p_k \geq \frac{p_0}{2}, \quad \forall\, k \leq K.
\end{equation}
\end{lemma}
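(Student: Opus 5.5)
The plan is a forward induction on $k$ with a \emph{uniform per-iteration decrement bound}. The decrement bound is valid only while $p_k$ stays above $p_0/2$, which is exactly what the induction maintains. The bound comes from chaining Lemma~\ref{lem:sampling} with Assumption~\ref{asmp:smooth}. The delicate point is reading the statement pathwise, and I address it explicitly below.

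\textbf{Step 1 (drift bound).} Fix $k$ and suppose $p_k \ge p_0/2$. By Assumption~\ref{asmp:redundancy}, at a $\beta$-fraction of states the redundancy probability is $p_k \ge p_0/2$. Lemma~\ref{lem:sampling} with $p = p_0/2$ then gives $E[m^{(k)}_{\textnormal{clean}}] \le N(1-p_0/2)^{\beta T}$. Since $h(0)=0$, $h \le h_{\max}$ and $m$ is integer-valued, we have $h(m) \le h_{\max}\,\mathbb{I}[m\ge 1] \le h_{\max}\, m$. Assumption~\ref{asmp:smooth} therefore yields
\[
E\big[p_k - p_{k+1} \mid p_k\big] = \eta\, E\big[h(m^{(k)}_{\textnormal{clean}})\big] \le N h_{\max}\,\eta\,(1-p_0/2)^{\beta T} =: \Delta .
\]
By the definition of $K^*$, $\Delta = p_0/(2K^*)$.

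\textbf{Step 2 (pathwise reading and induction).} The statement asserts $p_k \ge p_0/2$ for every $k \le K$, not merely in expectation. I would therefore work in the stylized model's mean-field dynamics. Remark (i) after Assumption~\ref{asmp:smooth} describes the per-iteration update as a bounded, clipped gradient step. Accordingly, the update of $p_k$ is identified with its conditional mean, $p_{k+1} = p_k - \eta\,E[h(m^{(k)}_{\textnormal{clean}})\mid p_k]$, and the randomness of the group composition enters only through $E[h(m)]$. Under this reading, the induction hypothesis is $p_j \ge p_0/2$ for all $j < k$. Step 1 gives $p_j - p_{j+1} \le \Delta$ for each such $j$. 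Telescoping,
\[
p_k \;\ge\; p_0 - k\Delta \;\ge\; p_0 - K^*\Delta \;=\; p_0 - \tfrac{p_0}{2} \;=\; \tfrac{p_0}{2},
\]
which closes the induction for all $k \le K \le K^*$. The base case $k=0$ is trivial.

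\textbf{Main obstacle.} The crux is justifying the pathwise claim. If $m_{\textnormal{clean}}$ is treated as a genuinely random per-iteration draw, a run that repeatedly samples clean trajectories could drop by $h_{\max}\eta$ per step. Only the expected drop is controlled, so the pathwise statement can then hold only on an event of high probability. I would first make the mean-field identification of the update with its conditional expectation explicit as part of the stylized model, since this is what Assumption~\ref{asmp:smooth} is designed to encode. As a fallback that still yields a pathwise-type statement, I would introduce the stopping time $\kappa = \inf\{k : p_k < p_0/2\}$ and apply Step 1 to the stopped process. Optional stopping bounds $E[p_0 - p_{K\wedge\kappa}]$ by $K\Delta \le p_0/2$. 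A union bound with $P(m_{\textnormal{clean}}\ge1)\le N(1-p_0/2)^{\beta T}$ from Lemma~\ref{lem:sampling} then shows that, with probability at least $1 - K N(1-p_0/2)^{\beta T}$, no clean trajectory appears before $K$. On that event $h(m)=0$ at every step, so $p_k$ does not decrease in expectation and the floor $p_k \ge p_0/2$ holds along the run.
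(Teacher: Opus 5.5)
Your main argument is the paper's proof. The paper argues by contradiction at the first index $k^*$ with $p_{k^*}<p_0/2$, and for $k<k^*$ it bounds $E[\Delta p_k]\le h_{\max}\eta\,P(m_{\textnormal{clean}}^{(k)}\ge 1)\le N h_{\max}\eta(1-p_0/2)^{\beta T}$, exactly as your Step 1 does. It then sums these bounds directly into a bound on the realized drop $p_0-p_{k^*}$. That is the same mean-field identification of realized and expected decrements that you state explicitly, so your induction with telescoping is equivalent.

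Your fallback would not rescue a genuinely stochastic reading, for two reasons. First, zero conditional drift when $m_{\textnormal{clean}}=0$ does not freeze $p_k$ pathwise. Second, at $K=K^*$ your success probability is $1-K^*N(1-p_0/2)^{\beta T}=1-p_0/(2h_{\max}\eta)$, which is vacuous whenever $h_{\max}\eta\le p_0/2$, and that small-step regime is the one the model intends.
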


\begin{proof}
By contradiction. Suppose $k^*$ is the first iteration at which $p_{k^*} < p_0/2$. For all $k < k^*$, we have $p_k \geq p_0/2$, so by Lemma~\ref{lem:sampling} the probability that the group contains at least one clean trajectory satisfies $P(m_{\textnormal{clean}}^{(k)} \geq 1) \leq N (1 - p_0/2)^{\beta T}$. By Assumption~\ref{asmp:smooth}, using $h(0) = 0$ and $h \leq h_{\max}$:
\begin{equation}
    E[\Delta p_k] = \sum_{m=0}^{N} P\big(m_{\textnormal{clean}}^{(k)} = m\big)\, h(m)\, \eta
    \leq h_{\max}\, \eta \, P\big(m_{\textnormal{clean}}^{(k)} \geq 1\big)
    \leq N h_{\max}\, \eta \, (1 - p_0/2)^{\beta T}.
\end{equation}
The cumulative reduction over $k^*$ iterations satisfies:
\begin{equation}
    p_0 - p_{k^*} \leq k^* \cdot N h_{\max}\, \eta \, (1 - p_0/2)^{\beta T}.
\end{equation}
For $p_{k^*} < p_0/2$, the left-hand side exceeds $p_0/2$, requiring $k^* > K^*$. This contradicts $k^* \leq K \leq K^*$.
\end{proof}

\begin{remark}[Self-Locking Interpretation]
\label{rmk:locking}
Lemma~\ref{lem:locking} reveals a self-reinforcing mechanism: the redundancy probability $p_k$ decreases slowly \emph{because} $p_k$ is large enough to make clean trajectories exponentially rare, which in turn limits the useful gradient signal for reducing $p_k$. On-policy methods cannot escape this cycle.
\end{remark}

\redundancy*

\begin{proof}
By Lemma~\ref{lem:locking}, $p_K \geq p_0/2$ for all $K \leq K^*$. By Assumption~\ref{asmp:redundancy}, each of the $\beta T$ redundancy-prone steps of $\tau_K$ is redundant with probability $p_K$ and the remaining steps are never redundant, so
\begin{equation}
    E[\rho(\tau_K)] = \frac{\beta T \cdot p_K}{T} = \beta p_K \geq \frac{\beta p_0}{2}.
\end{equation}
\end{proof}

\trajevolve*

\begin{proof}
Since the group contains $m_{\textnormal{evolve}} \geq 1$ injected clean trajectories, $m_{\textnormal{clean}}^{(k)} \geq m_{\textnormal{evolve}}$ at every iteration, so the case $m_{\textnormal{clean}}^{(k)} = 0$ never occurs; moreover, since $h \geq 0$, $p_k$ is non-increasing in expectation.

Fix $\varepsilon \in (0, p_0)$ and consider an iteration $k$ with $p_k \geq \varepsilon$. By Assumption~\ref{asmp:smooth}, no progress is made only when $m_{\textnormal{clean}}^{(k)} = N$ (as $h(N) = 0$), i.e., only when all $N - m_{\textnormal{evolve}}$ on-policy rollouts are clean simultaneously, which by Lemma~\ref{lem:sampling} has probability at most $(1 - p_k)^{\beta T (N - m_{\textnormal{evolve}})} \leq (1 - \varepsilon)^{\beta T (N - m_{\textnormal{evolve}})}$. Otherwise, $m_{\textnormal{evolve}} \leq m_{\textnormal{clean}}^{(k)} \leq N - 1$, and the monotonicity of $h$ gives $h(m_{\textnormal{clean}}^{(k)}) \geq h(m_{\textnormal{evolve}})$. Hence
\begin{equation}
\begin{aligned}
    E[\Delta p_k]
    &= \sum_{m=0}^{N} P\big(m_{\textnormal{clean}}^{(k)} = m\big)\, h(m)\, \eta
    \;\geq\; h(m_{\textnormal{evolve}})\, \eta \, P\big(m_{\textnormal{clean}}^{(k)} \leq N-1\big) \\
    &\geq h(m_{\textnormal{evolve}})\, \eta \Big(1 - (1 - \varepsilon)^{\beta T (N - m_{\textnormal{evolve}})}\Big)
    = \delta_\varepsilon.
\end{aligned}
\end{equation}
Therefore, as long as $p_j \geq \varepsilon$ for all $j < k$, we have $p_k \leq p_0 - k \cdot \delta_\varepsilon$, so $p_k$ falls to at most $\varepsilon$ within $(p_0 - \varepsilon)/\delta_\varepsilon$ iterations and, being non-increasing, stays there thereafter. Finally, by Assumption~\ref{asmp:redundancy}, $E[\rho(\tau_K^{\textnormal{evolve}})] = \beta p_K \leq \beta \varepsilon$ for all $K \geq (p_0 - \varepsilon)/\delta_\varepsilon$.
\end{proof}

\section{Experiment Details}

\subsection{Pseudo Code}

We present \ourmodel training procedure in Algorithm~\ref{alg:trajevolve}, where the parts unique to \ourmodel are marked in \textit{italics}.
Beyond the base GRPO loop, \ourmodel introduces three key extensions:
(1) \emph{Redundancy-Aware Rewards} that penalize redundant steps in successful trajectories (Equation~\ref{equation:reward});
(2) \emph{Step-Level Advantage} via fine-grained credit assignment (Equation~\ref{equation:trajevolve}); and
(3) a periodic \emph{trajectory evolution phase} that actively refines historical trajectories to produce higher-quality training data, enabling the model to exceed its current inference limitations. The step-level credit labels used throughout the algorithm are obtained via the Hindsight Verification procedure (Algorithm~\ref{alg:forward_ablation}).
A lightweight step-level discriminator $D_\phi$ is trained via supervised fine-tuning on these credit labels and is used to identify error steps in failed trajectories during training.

\begin{algorithm}[t]
\caption{Training LLM Agents with TrajEvolve}
\label{alg:trajevolve}
\begin{algorithmic}[1]
\REQUIRE Initial policy $\pi_{\theta_{\text{old}}}$, base discriminator $D_\phi$, task distribution $p(X)$, clipping parameter $\epsilon$, group size $N$, \textit{evolution frequency $F$, credit adjustment value $\delta$, redundancy penalty weight $\alpha$}
\STATE {\color{gray}\textit{// Discriminator initialization}}
\STATE \textit{Fine-tune discriminator $D_\phi$ on credit labels obtained from Algorithm~\ref{alg:forward_ablation} with SFT}
\FOR{each training iteration $k$}
    \STATE Update the old policy model: $\theta_{\text{old}} \leftarrow \theta$
    \STATE {\color{gray}// Multi-step rollout phase}
    \STATE Sample task $x \sim p(X)$ and launch $N$ parallel environment instances
    \FOR{$t = 1$ to $T$}
        \STATE Generate actions $\{a_t^{(i)} \sim \pi_{\theta_{\text{old}}}(\cdot \mid s_t^{(i)}, x)\}_{i=1}^N$
        \STATE Execute actions, collect rewards $\{r_t^{(i)}\}_{i=1}^N$ and next states $\{s_{t+1}^{(i)}\}_{i=1}^N$
    \ENDFOR
    \STATE {\color{gray}\textit{// Credit analysis and discriminator update phase}}
    \STATE \textit{Analyze successful trajectories with Algorithm~\ref{alg:forward_ablation} to obtain credit labels}
    \STATE \textit{Fine-tune $D_\phi$ on obtained credit labels}
    \STATE \textit{Analyze failed trajectories with $D_\phi$ to identify first error step}
    \STATE {\color{gray}\textit{// Redundancy-aware rewards and Step-level advantage computation}}
    \STATE \textit{Compute credit-augmented rewards with Equation~\ref{equation:reward}}
    \STATE \textit{Compute advantages $A(a_t^{(i)})$ with Equation~\ref{equation:trajevolve}}
    \STATE {\color{gray}// Policy update phase}
    \STATE Update policy $\theta$ by maximizing objective $\mathcal{J}(\theta)$ with Equation \ref{equation:policy}
    \IF{\textit{$k \bmod F = 0$}}
        \STATE {\color{gray}\textit{// Trajectory evolution phase (Section~\ref{sec:evolution})}}
        \STATE \textit{Evolve trajectories with evolution strategies and re-execute in $\mathcal{E}$}
        \STATE \textit{Update policy $\theta$ on successful evolved trajectories}
    \ENDIF
\ENDFOR
\end{algorithmic}
\end{algorithm}

\subsection{Implementation Details}
\textbf{Hyperparameters for ALFWorld.}\quad
We keep the base RL hyperparameters consistent across all methods.
The maximum prompt length is 2048 tokens and the maximum response length is 512 tokens.
Each episode runs for up to 30 environment steps.
The actor learning rate is 1e-6.
A rule-based reward scheme assigns 10 for task completion and 0 otherwise, with an additional penalty of $-0.1$ for invalid actions.
All group-based RL methods adopt a group size of 8 with 16 groups sampled per rollout, yielding $16 \times 8 = 128$ parallel environments.
The rollout temperature is 1.0 and the validation temperature is 0.4.
The mini-batch size is set to 256.
For TrajEvolve hyperparameters, the redundancy penalty weight $\alpha$ is set to 1.0, the step-level credit adjustment value $\lambda$ is set to 0.5, and the trajectory evolution phase is triggered every $F = 5$ iterations.
The discriminator is initialized from Qwen2.5-1.5B-Instruct and fine-tuned with a learning rate of 1e-5 for 1 epoch per update.
All experiments are trained for 150 iterations on 8$\times$H20 GPUs.

\textbf{Hyperparameters for WebShop.}\quad
We keep the base RL hyperparameters consistent across all methods.
The maximum prompt length is 4096 tokens and the maximum response length is 512 tokens.
Each episode runs for up to 20 environment steps.
The actor learning rate is 1e-6.
A rule-based reward scheme assigns 10 for task completion and 0 otherwise, with an additional penalty of $-0.1$ for invalid actions.
All group-based RL methods adopt a group size of 8 with 16 groups sampled per rollout, yielding $16 \times 8 = 128$ parallel environments.
The rollout temperature is 1.0 and the validation temperature is 0.4.
The mini-batch size is set to 64.
For TrajEvolve hyperparameters, the redundancy penalty weight $\alpha$ is set to 1.0, the step-level credit adjustment value $\lambda$ is set to 0.5, and the trajectory evolution phase is triggered every 5 iterations.
The discriminator is initialized from Qwen2.5-1.5B-Instruct and fine-tuned with a learning rate of 1e-5 for 1 epoch per update.
All experiments are trained for 150 iterations on 8$\times$H20 GPUs.

\section{Additional Experiments}
\label{app:exp}

\begin{figure*}[t]
    \centering

    \begin{subfigure}[t]{0.43\linewidth}
        \centering
        \includegraphics[width=\linewidth]{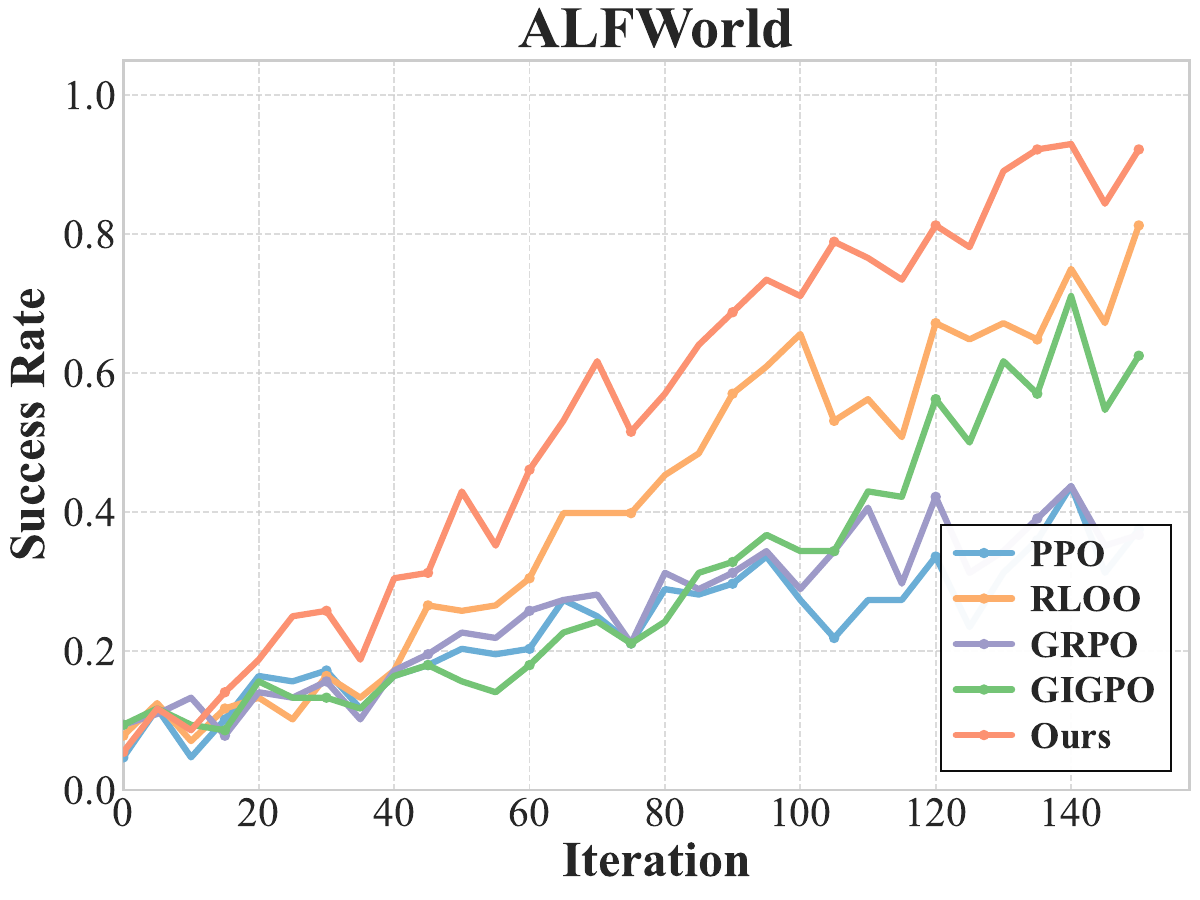}
    \end{subfigure}
    \begin{subfigure}[t]{0.43\linewidth}
        \centering
        \includegraphics[width=\linewidth]{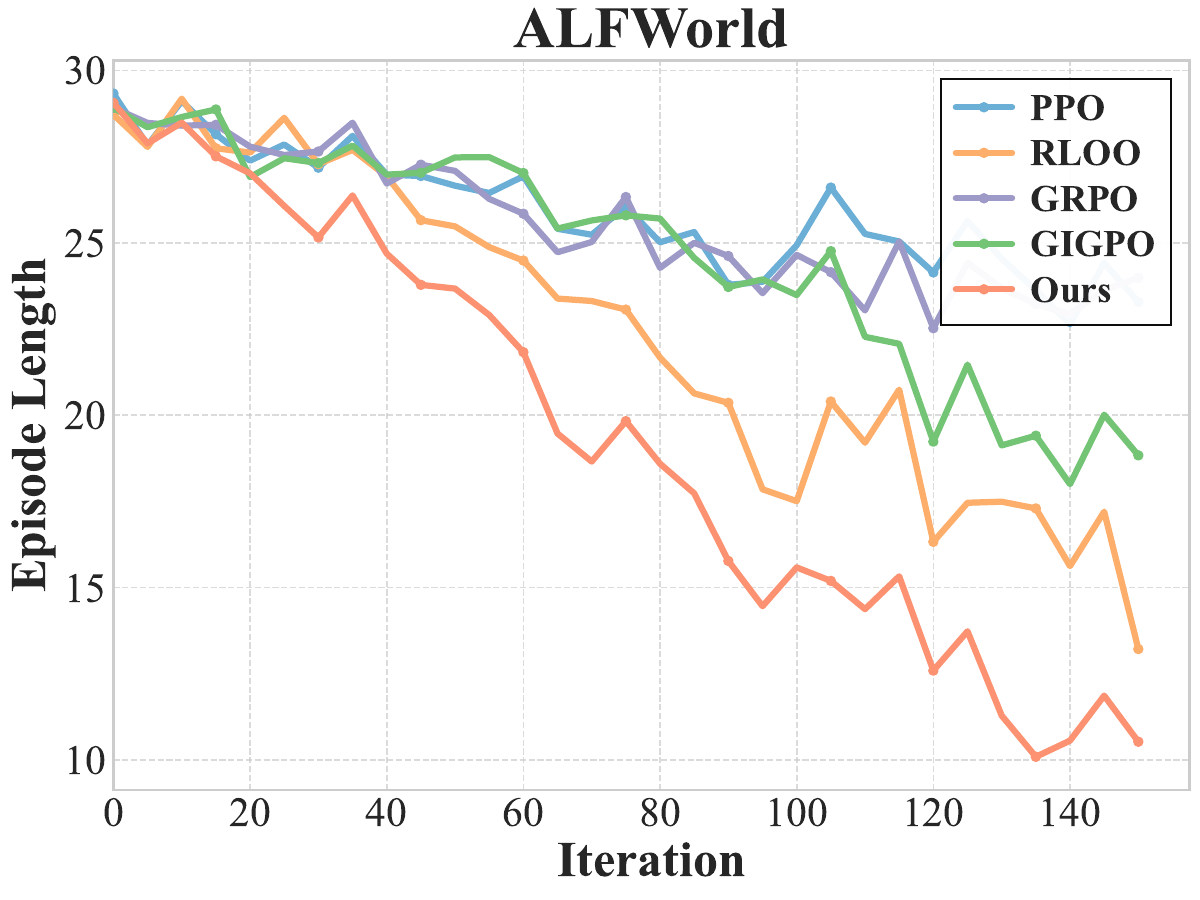}
    \end{subfigure}
    \caption{Training curves on ALFWorld with Llama-3.2-3B-Instruct. Left: validation success rate. Right: Average episode length. Our method consistently achieves a higher success rate and shorter, more efficient trajectories than all baselines.}
     \vspace{-1 em}
    \label{fig:alf-llama}
\end{figure*}

\begin{figure*}[t]
    \centering

    \begin{subfigure}[t]{0.43\linewidth}
        \centering
        \includegraphics[width=\linewidth]{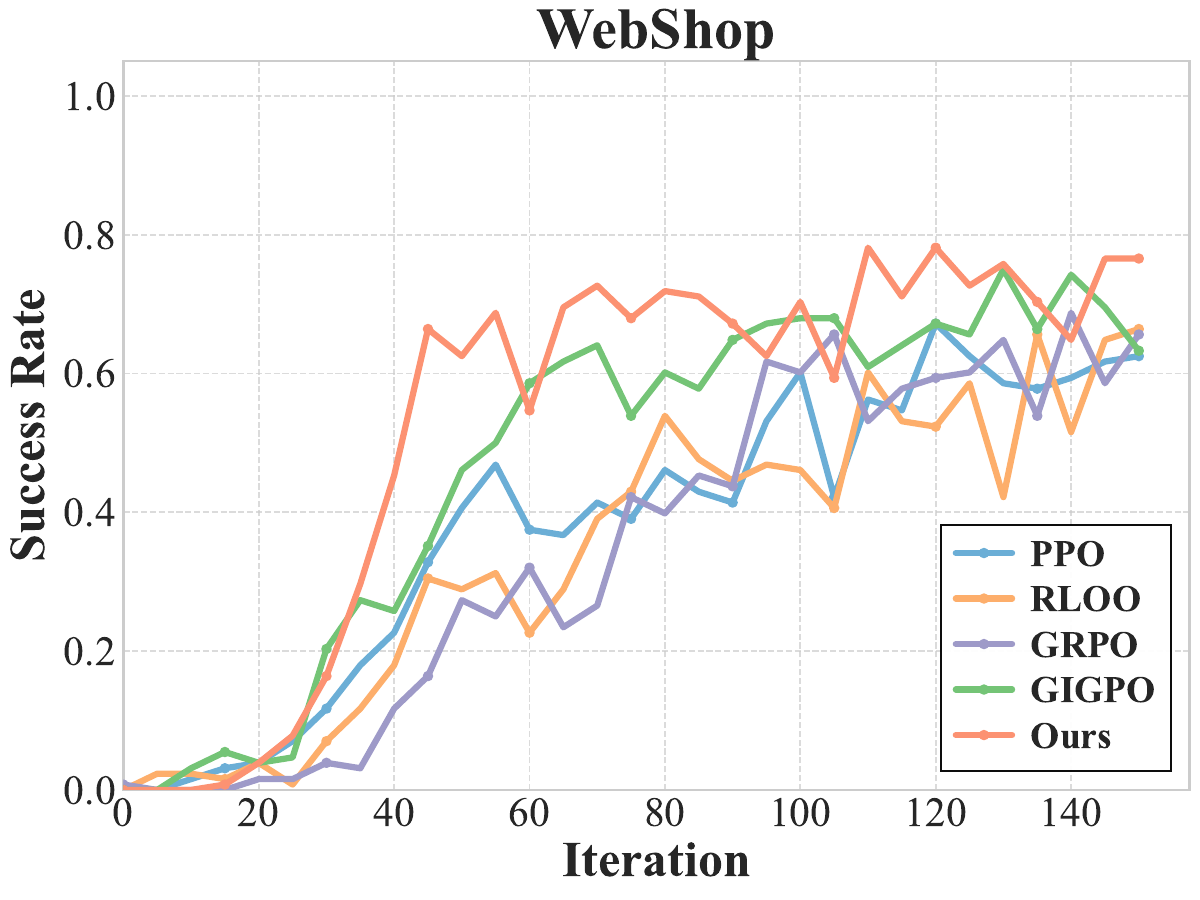}
    \end{subfigure}
    \begin{subfigure}[t]{0.43\linewidth}
        \centering
        \includegraphics[width=\linewidth]{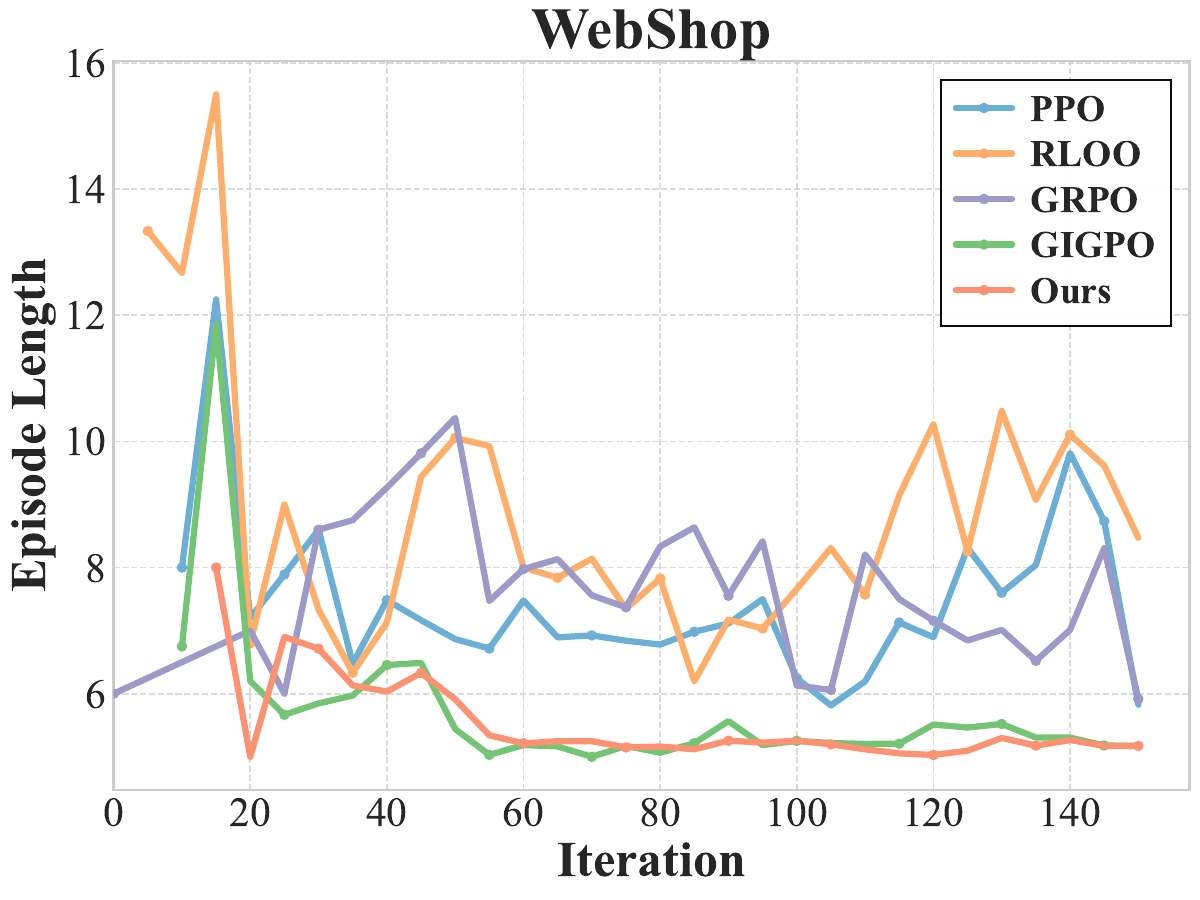}
    \end{subfigure}
    \caption{Training curves on Webshop with Llama-3.2-3B-Instruct. Left: validation success rate. Right: Average episode length. Our method consistently achieves a higher success rate and shorter, more efficient trajectories than all baselines.}
    \label{fig:web-llama}
\end{figure*}

\subsection{Base Model Ablation}
\label{sec:base_model_ablation}

To verify that TrajEvolve's effectiveness is not contingent on a specific model family, we conduct additional experiments using Llama-3.2-3B-Instruct as the base model on both ALFWorld and WebShop benchmarks. Since the main experiments in Section~\ref{sec:experiment} exclusively use the Qwen2.5 series, demonstrating consistent improvements across different model architectures strengthens the generalizability of our approach.

As shown in Table~\ref{table:llama}, our method achieves the best performance across nearly all task categories. On ALFWorld, TrajEvolve attains an overall success rate of 92.19\%, surpassing the strongest baseline RLOO at 81.25\% by a margin of 10.94\%. The improvements are particularly pronounced on task types that require longer interaction horizons, such as Cool and Heat, where our method reaches 89.47\% and 88.89\% respectively. These categories benefit most from redundancy elimination and trajectory evolution due to the larger room for refinement in longer action sequences. On WebShop, our method reaches 88.13 in score and 76.56\% in success rate, outperforming all baselines by a substantial margin. It is worth noting that PPO and GRPO perform considerably worse on ALFWorld with this base model, achieving only 37.50\% and 36.72\% overall success rates. This suggests that Llama-3.2-3B-Instruct has a weaker initial policy for embodied tasks, which exacerbates the sparse signal problem. Our method effectively alleviates this issue by injecting high-quality evolved trajectories into training.

Figure~\ref{fig:alf-llama} presents the training dynamics on ALFWorld. Our method exhibits a consistently steeper learning curve starting from around iteration 20 and maintains a clear advantage throughout training, ultimately converging to over 90\% success rate. In contrast, PPO and GRPO plateau below 50\%, while GIGPO shows unstable convergence behavior. The episode length plot further confirms that our method produces increasingly efficient trajectories, with the average episode length steadily decreasing to approximately 10 steps, the lowest among all methods. This indicates that the binary credit assignment mechanism successfully identifies and eliminates redundant actions even with a different base model.

Figure~\ref{fig:web-llama} shows analogous trends on WebShop. Our method achieves both the highest success rate and the shortest episode length. While RLOO exhibits competitive success rates, its episode length remains higher, indicating less efficient action sequences. GRPO shows notably longer episodes that peak above 13 steps in early training, consistent with the redundancy accumulation problem discussed in our theoretical analysis in Theorem~\ref{thm:main}. In contrast, our method rapidly reduces episode length to around 6 steps while simultaneously improving success rate, demonstrating that trajectory evolution and credit assignment work synergistically across model families.

Overall, these results confirm that TrajEvolve generalizes beyond the Qwen2.5 family and delivers consistent gains in both task success and trajectory efficiency when applied to Llama-3.2-3B-Instruct.

\begin{table*}[t]
\fontsize{8}{9}\selectfont 
\centering
\caption{Performance comparison on ALFWorld and WebShop with Llama-3.2-3B-Instruct, the best result in each column is highlighted in bold.} 
\label{table:llama}
\renewcommand\arraystretch{1}
\tabcolsep=0.2cm
\scalebox{1.1}{
\begin{tabular}{cccccccccc}
\toprule
\multirow{2}{*}{Method} & \multicolumn{7}{c}{ALFWorld}                           & \multicolumn{2}{c}{WebShop} \\ \cmidrule{2-8} \cmidrule{9-10}
                        & Pick  & Look  & Clean  & Heat  & Cool  & Pick2 & All   & Score        & Succ.        \\ \midrule
PPO                     & 57.14 & 25.00 & 29.17  & 27.78 & 21.05 & 41.67 & 37.50 & 86.25        & 62.50        \\
GRPO                    & 57.14 & 25.00 & 20.83  & 33.33 & 21.05 & 41.67 & 36.72 & 82.66        & 65.63        \\
RLOO                    & 94.29 & 62.50 & 95.83  & 66.67 & 52.63 & 87.50 & 81.25 & 79.81        & 66.41        \\
GIGPO                   & 65.71 & 37.50 & 66.67  & 77.78 & 52.63 & 58.33 & 62.50 & 83.04        & 63.28        \\
Ours                    & \textbf{97.14} & \textbf{62.50} & \textbf{100.00} & \textbf{88.89} & \textbf{89.47} & \textbf{91.67} & \textbf{92.19} & \textbf{88.13}        & \textbf{76.56}       \\ \bottomrule
\end{tabular}}
\vspace{-2 em}
\end{table*}

\subsection{Discriminator Accuracy Experiment}
A natural question is whether the discriminator can reliably identify necessary and redundant steps. To address this concern, we track the discriminator's classification accuracy throughout training on both benchmarks.

For successful trajectories, ground-truth credit labels are readily available through hindsight verification in the environment, so we directly compute accuracy against these labels. For failed trajectories, however, no ground-truth labels can be obtained since counterfactual verification requires a successful trajectory as input. We therefore employ GPT-5.2-2025-12-11 as an approximate judge to evaluate the discriminator's predictions on failed trajectories. To reduce computational cost, we report the accuracy over the first 50 training iterations.

As shown in Figure~\ref{fig:discriminator}, the discriminator achieves strong overall accuracy on both benchmarks, reaching 90.93\% on ALFWorld and 83.17\% on WebShop. The accuracy on successful trajectories is notably higher, at 95.06\% and 87.59\% respectively, while the accuracy on failed trajectories is lower, at 84.95\% and 77.26\%. The training curves in Figure~\ref{fig:discriminator} further show that the discriminator maintains stable and consistent accuracy throughout training, with no significant degradation as the policy evolves.

We emphasize that the discriminator's role in our framework is deliberately limited. For successful trajectories, credit assignment is performed entirely through hindsight counterfactual verification, which yields exact ground-truth labels without relying on the discriminator at all. The discriminator only serves to generalize credit signals from successful trajectories to failed ones and to guide the trajectory evolution module. This design choice explains why even a lightweight 1.5B-parameter discriminator is sufficient to achieve strong overall performance. Furthermore, we note that credit assignment for failed trajectories is an inherently difficult problem, and the GPT-5.2 judge itself may introduce evaluation noise. Therefore, the reported accuracy on failed trajectories should be interpreted as an approximate reference rather than a precise measurement. Nevertheless, the results confirm that the discriminator provides sufficiently reliable signals for trajectory evolution, and our framework's core performance is not bottlenecked by discriminator quality.

\begin{figure}[t]
    \centering
    \includegraphics[width=\linewidth]{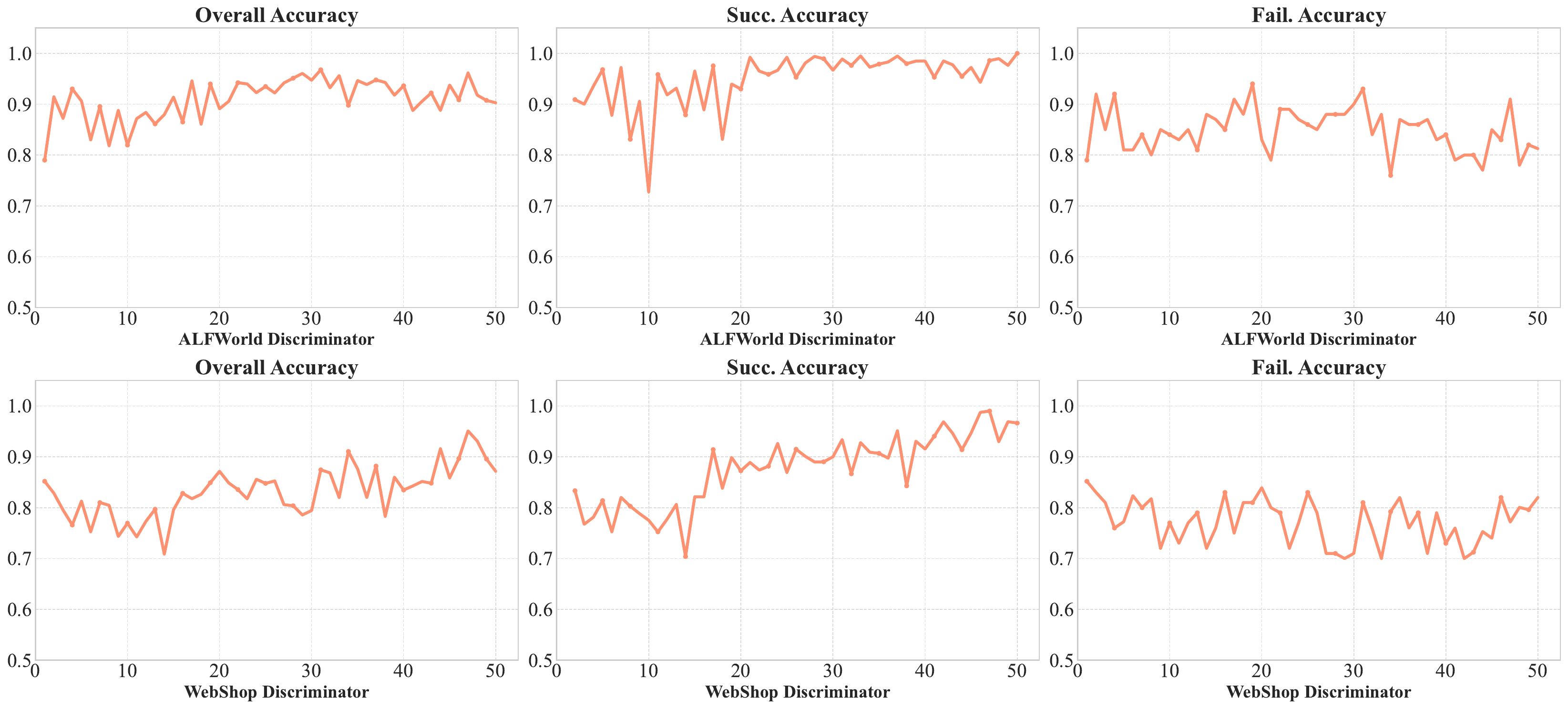}
    \caption{Training accuracy of our discriminator on ALFWorld (top) and WebShop (bottom), reporting overall, success, and failure accuracy across training iterations.}
    \label{fig:discriminator}
\end{figure}

\subsection{Computation Cost Analysis}
To assess the practical overhead introduced by TrajEvolve, we measure the per-step time cost of each training stage on both benchmarks. Understanding the computational budget is essential for evaluating whether our method can scale to larger settings without prohibitive overhead.

We profile all training stages on the same hardware configuration of 8$\times$H20 GPUs with Qwen2.5-1.5B-Instruct as the policy, and report the average per-step wall-clock time in seconds. The results are shown in Figure~\ref{fig:time}. Beyond the standard RL training pipeline that includes rollout, reference log-probability computation, and policy update, our method introduces two additional modules: hindsight verification and trajectory evolution. Note that in this setting the discriminator has the same scale as the policy, so the relative share of the discriminator-related stages reported here is an upper estimate; with a larger policy, this share shrinks accordingly.

As shown in Figure~\ref{fig:time}, the rollout stage dominates the total training time on both benchmarks, taking 117.4s on ALFWorld and 63.2s on WebShop per step. In comparison, the overhead introduced by our two additional modules is moderate. Hindsight verification costs 61.2s and 20.3s per step on ALFWorld and WebShop respectively. Although the number of verification calls scales linearly with trajectory length, each call only replays the recorded actions in the environment and checks whether the task still succeeds, without requiring any model inference. The entire process is therefore purely CPU-based and consumes no GPU resources. Moreover, the verification calls are independent of each other, so this stage is fully parallelizable across trajectories and across calls, and its wall-clock time scales down directly with the number of available CPU cores. Since our profiling uses a modest CPU budget, this overhead still has considerable room for reduction.

Trajectory evolution costs 31.6s and 27.8s per step on ALFWorld and WebShop respectively. This stage is an intrinsic, periodic cost of our method rather than a per-step tax: evolution is triggered every 5 iterations with only two trajectories selected per group, so over a 150-iteration run it adds the equivalent of roughly 30 additional steps of work in total, and in some iterations there may not be enough successful trajectories available for evolution. The evolution process is essentially equivalent to re-executing these refined trajectories in the environment, which is comparable to a small fraction of the standard rollout cost. We consider this cost well justified, as the evolved trajectories expose the policy to high-quality compositions outside its current sampling distribution, an effect that on-policy training alone cannot reproduce within a practical budget (Theorem~\ref{thm:main}) and that therefore cannot be substituted by simply allocating additional training steps.

The remaining components, discriminator training and credit labeling, introduce overhead of 18.5s and 27.2s on ALFWorld and 13.3s and 13.1s on WebShop, as the discriminator is a lightweight 1.5B-parameter model trained with LoRA for only 1 epoch per iteration. We emphasize that our method does not critically depend on a strong discriminator: credit labels for successful trajectories come entirely from counterfactual verification in the environment, which yields exact ground-truth labels without any learned estimator and thus provides abundant training signal regardless of discriminator quality. 
The discriminator can therefore be replaced with an even smaller backbone (e.g., Qwen2.5-0.5B-Instruct) to further reduce this portion of the overhead.

Overall, the total per-step time of TrajEvolve is approximately 1.5$\times$ that of standard GRPO training. A substantial part of this overhead is reducible through more CPU cores and a smaller discriminator, while the remaining periodic cost of evolution is intrinsic to the method. Given the substantial performance gains demonstrated in Section~\ref{sec:experiment}, we consider this overhead to be a favorable trade-off between computational cost and agent performance.
\begin{figure}[t]
    \centering
    \includegraphics[width=\linewidth]{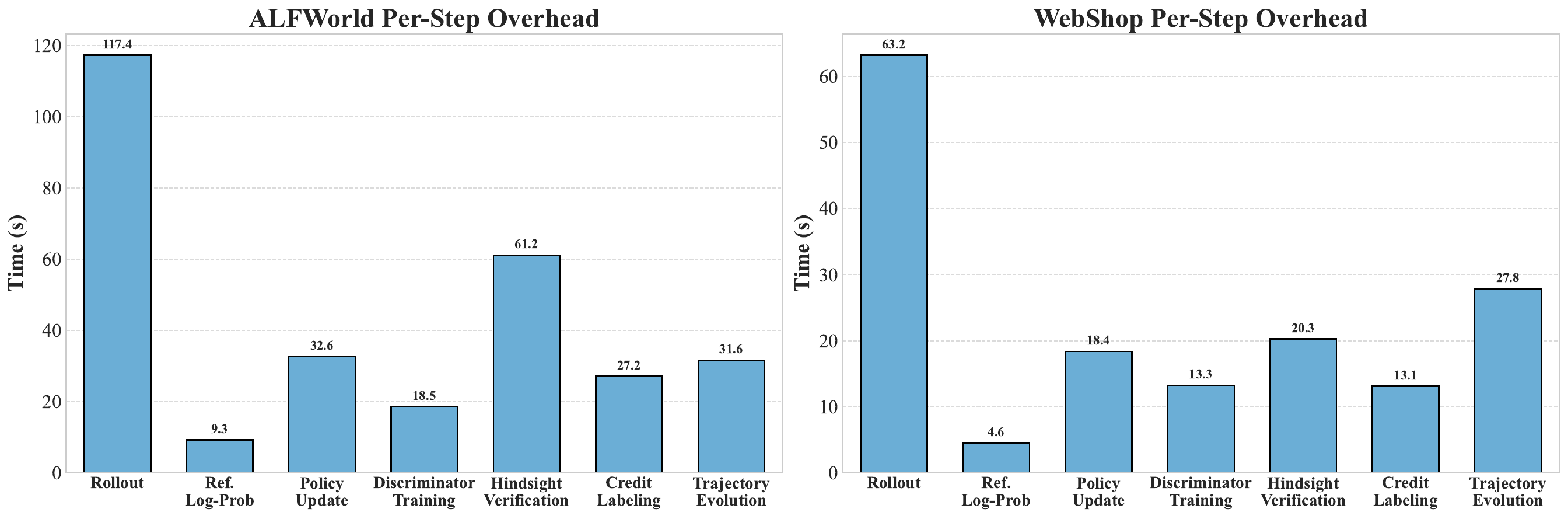}
    \caption{Per-step time overhead of each training stage on ALFWorld (left) and WebShop (right).}
    \label{fig:time}
\end{figure}

\subsection{TrajEvolve on Complex Scenarios}
\label{sec:scienceworld}
To evaluate whether TrajEvolve can generalize to more complex environments beyond ALFWorld and WebShop, we conduct additional experiments on ScienceWorld, a challenging text-based simulation environment that requires agents to perform scientific experiments through multi-step interactions. Unlike ALFWorld and WebShop where most steps directly alter the environment state, ScienceWorld involves a substantial proportion of information-gathering steps such as looking at objects, reading instructions, and using thermometers to take measurements. These steps do not change the environment or affect task completion, yet they may be essential for the agent to make informed decisions in subsequent steps.

The primary challenge in transferring TrajEvolve to this setting lies in determining the dependent steps during hindsight verification. Since information-gathering steps have no observable effect on the environment state or task outcome, removing them does not cause subsequent actions to become infeasible, nor does it directly lead to task failure. Therefore, the standard counterfactual verification procedure cannot determine whether these steps are truly necessary. To address this, we adopt a simple two-category strategy. We classify all steps into information-gathering steps and environment-changing steps based on the action verb, which provides a reliable and low-cost distinction. We retain all information-gathering steps unconditionally and apply hindsight verification only to environment-changing steps. Under this setting, our method can still effectively identify and eliminate redundant environment-changing actions, although some unnecessary information-gathering steps may be preserved. This simple adaptation allows TrajEvolve to transfer to complex scenarios with minimal modification.

\begin{wraptable}{r}{0.42\textwidth}
    \vspace{-12pt}
    \centering
    \caption{Final success rate (\%) on ScienceWorld with Qwen2.5-7B-Instruct after 150 iterations. IID: seen task variants; OOD: unseen variants within seen categories.}
    \label{tab:scienceworld}
    \small
    \begin{tabular}{lcc}
        \toprule
        Method & IID & OOD \\
        \midrule
        GRPO   & 92.19 & 86.72 \\
        GiGPO  & 92.97 & 87.50 \\
        Ours   & \textbf{95.31} & \textbf{89.06} \\
        \bottomrule
    \end{tabular}
    \vspace{-10pt}
\end{wraptable}
We train all methods using Qwen2.5-7B-Instruct for 150 iterations and evaluate on two settings: IID, which tests on seen task variants and categories, and OOD, which tests on unseen task variants within seen categories. As shown in Table~\ref{tab:scienceworld}, our method achieves the highest performance on both settings, reaching 95.31\% on IID and 89.06\% on OOD, compared with 92.19\% and 86.72\% for GRPO and 92.97\% and 87.50\% for GiGPO. The improvements on the OOD setting are particularly encouraging, as they suggest that the more efficient trajectories learned through our evolution mechanism also benefit generalization to unseen task variants. These results confirm that TrajEvolve can be effectively transferred to more complex environments through lightweight adaptation of the verification procedure, without requiring fundamental changes to the framework.

\section{Method Analysis}
\label{app:analysis}
\subsection{Method Transferability Analysis}
We discuss the applicability of TrajEvolve to different task settings and clarify the conditions under which our method is most effective.

The core requirement of TrajEvolve is that the environment supports trajectory replay, meaning that a sequence of actions can be deterministically re-executed to reproduce the same state transitions. This property is widely satisfied across many practical domains. Task execution environments such as ALFWorld and WebShop are inherently replayable since the environment state is fully determined by the action history. Code generation and debugging scenarios also naturally support replay, as programs can be re-executed with the same inputs to verify correctness. Web search and retrieval tasks similarly allow re-issuing queries and re-processing results in a deterministic manner. Despite this broad applicability, existing RL methods for LLM agents have largely overlooked the replay property of these environments. TrajEvolve explicitly leverages this property to construct higher-quality trajectories through hindsight verification and trajectory evolution, providing additional training signals that go beyond what on-policy sampling alone can offer.

Our method is particularly advantageous in environments where environment-changing actions constitute a large proportion of the trajectory. For such actions, counterfactual verification can directly determine whether each step is genuinely necessary for task completion by observing changes in the environment state and task outcome. This yields ground-truth credit labels without relying on any learned estimator, which is a unique strength of our approach. The strong experimental results on ALFWorld, where most steps involve physical state changes, exemplify this advantage.

However, this does not mean that TrajEvolve lacks transferability to more complex scenarios involving substantial information-gathering steps. As demonstrated in Section~\ref{sec:scienceworld}, a simple two-category adaptation that retains information-gathering steps and applies verification only to environment-changing steps allows our method to achieve consistent improvements on ScienceWorld. Although this adaptation is conservative in that it preserves some potentially redundant information-gathering steps, it maintains the core benefits of trajectory evolution while requiring minimal modification to the framework. Exploring more fine-grained verification strategies for information-gathering steps remains a promising direction for future work.

\subsection{Off-Policy Analysis}
\begin{figure}[t]
    \centering
    \includegraphics[width=0.85\linewidth]{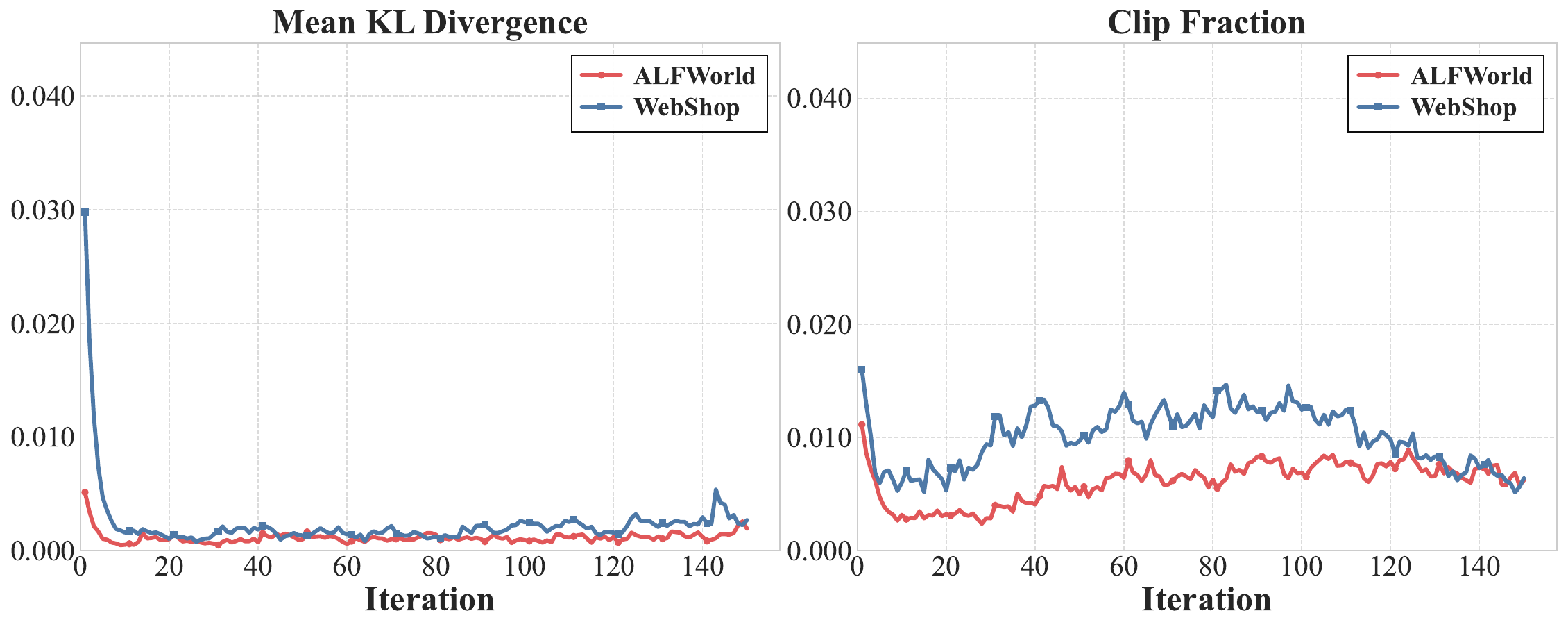}
    \caption{Mean KL divergence (left) and clip fraction (right) during training on ALFWorld and WebShop, indicating stable policy updates.}
    \label{fig:off-training}
\end{figure}

\begin{figure}[t]
    \centering
    \includegraphics[width=0.85\linewidth]{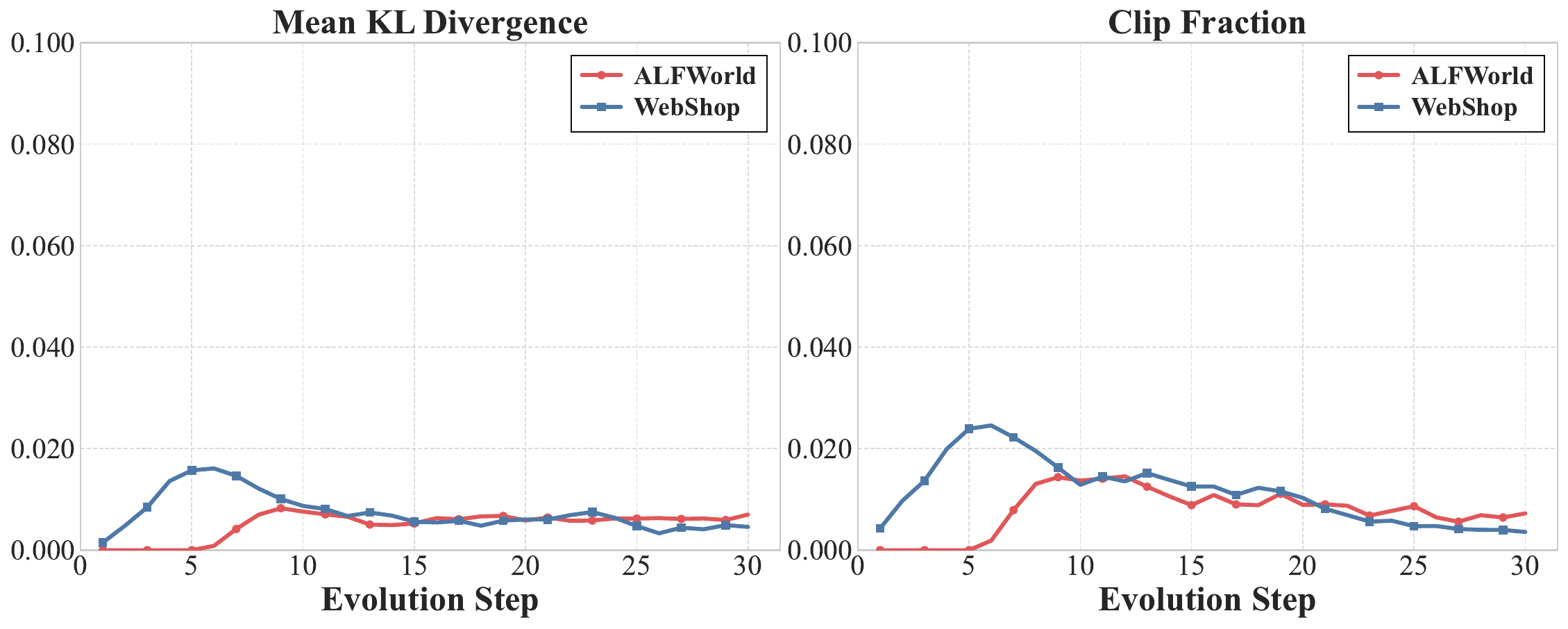}
    \caption{Mean KL divergence (left) and clip fraction (right) during the trajectory evolution stage on ALFWorld and WebShop.}
    \label{fig:off-evolution}
\end{figure}
Since evolved trajectories are generated by refining historical rollouts rather than sampled from the current policy, they are inherently off-policy. A natural concern is whether incorporating such trajectories into PPO-style optimization introduces excessive distributional shift that could destabilize training. To investigate this, we monitor two standard trust-region diagnostics throughout training on both benchmarks.

The first metric is the approximate KL divergence between the updated policy and the reference policy, which measures the magnitude of per-update policy shift. The second metric is the clip fraction, which measures the proportion of tokens whose importance sampling ratio falls outside the clipping range $[1-\epsilon, 1+\epsilon]$. A high clip fraction indicates that many tokens experience large probability changes, suggesting significant distributional mismatch. We note that before each evolution update, we recompute log-probabilities under the current policy as the baseline for the importance sampling ratio. This ensures that the clipping mechanism correctly constrains the policy shift regardless of how the trajectories were originally generated.

Figure~\ref{fig:off-training} presents the two metrics during the standard training phase across 150 iterations. On both benchmarks, the KL divergence drops rapidly within the first 20 iterations and stabilizes at very low values below 0.005 for the remainder of training. The clip fraction on ALFWorld remains consistently low throughout, while on WebShop it stabilizes around 0.01 after an initial transient. These patterns indicate that the overall training process maintains stable policy updates even with the periodic injection of evolved trajectories.

Figure~\ref{fig:off-evolution} focuses specifically on the trajectory evolution phases, showing the two metrics across the 30 evolution steps that occur once every 5 training iterations. Both KL divergence and clip fraction exhibit a brief peak during the early evolution steps when the policy is changing most rapidly, and then steadily decrease over subsequent phases. By the later stages, both metrics converge below 0.005 and 0.01 respectively, reaching levels comparable to standard on-policy PPO updates. The average KL divergence across all evolution phases is 0.0054 on ALFWorld and 0.0075 on WebShop, while the average clip fraction is 0.0082 and 0.0115 respectively. These results confirm that the distributional shift introduced by trajectory evolution is bounded and small, and that the shift diminishes as training progresses since evolved trajectories become increasingly aligned with the current policy.

We note that leveraging off-policy successful trajectories has emerged as a promising direction in recent RL research. For instance, ExGRPO~\citep{rl_exgrpo} replays successful rollouts for additional training updates. Our empirical analysis provides evidence that such off-policy utilization can be safely combined with PPO-style optimization through proper importance sampling and clipping. Exploring more efficient use of off-policy data in long-horizon agent training remains a valuable direction for future work.

\section{Case Study}

\begin{table*}[t]
\centering
\small
\caption{\textbf{Case study of counterfactual credit analysis on a successful ALFWorld trajectory.}
The agent is asked to \emph{put a clean knife in sidetable}. Each step $t$ is
removed from the trajectory (together with steps already identified as redundant)
and the remaining actions are replayed in the environment: $t$ is labeled
\necTag{} if the task then fails, and \redTag{} otherwise. $\mathrm{Dep}(t)$ is the
set of later steps whose actions become inexecutable (\textit{``Nothing happens.''})
in this replay. When $t$ is redundant, all steps in $\mathrm{Dep}(t)$ are marked
redundant as well without further replay ($^\dagger$). Steps 0--3 are unsuccessful
drawer searches and are all redundant; steps 4--9 form the necessary chain
(\textit{locate $\rightarrow$ grasp $\rightarrow$ clean $\rightarrow$ deliver}).
These labels supervise the training of our step discriminator.}
\label{tab:case_study_alfworld}
\resizebox{\linewidth}{!}{
\renewcommand\arraystretch{1}
\tabcolsep=0.2cm
\scalebox{0.8}{
\begin{tabular}{@{}c l p{6.6cm} c c@{}}
\toprule
\textbf{Step} & \textbf{Action} & \textbf{Observation} & \textbf{Credit} & $\mathrm{Dep}(t)$ \\
\midrule
0 & go to drawer 1                & The drawer 1 is closed.                                                                                       & \redTag & \{1\} \\
1 & open drawer 1                 & The drawer 1 is open. In it, you see nothing.                                                                 & \redTag & $\emptyset$ \\
2 & go to drawer 2                & The drawer 2 is closed.                                                                                       & \redTag & \{3\} \\
3 & open drawer 2                 & The drawer 2 is open. In it, you see a spoon 1.                                                               & \redTag & $\emptyset$ \\
4 & go to countertop 1            & On the countertop 1, you see \ldots a knife 1 \ldots (and 18 other objects).                                  & \necTag & \{5,7,9\} \\
5 & take knife 1 from countertop 1 & You pick up the knife 1 from the countertop 1.                                                               & \necTag & \{7,9\} \\
6 & go to sinkbasin 1             & On the sinkbasin 1, you see a potato 2 and a tomato 3.                                                        & \necTag & \{7\} \\
7 & clean knife 1 with sinkbasin 1 & You clean the knife 1 using the sinkbasin 1.                                                                 & \necTag & $\emptyset$ \\
8 & go to sidetable 1             & On the sidetable 1, you see a mug 1, a papertowelroll 1, and a saltshaker 1.                                  & \necTag & \{9\} \\
9 & move knife 1 to sidetable 1   & You move the knife 1 to the sidetable 1.\hfill\textbf{[task done]}                                            & \necTag & $\emptyset$ \\
\bottomrule
\end{tabular}}}
\end{table*}

Table~\ref{tab:case_study_alfworld} illustrates how step-level credit labels are
obtained for a successful ALFWorld trajectory via counterfactual verification.
For each step $t$, we remove it from the trajectory, replay the remaining actions
in the environment, and label it \necTag{} if the task can no longer be completed
and \redTag{} otherwise. The same replay also yields $\mathrm{Dep}(t)$, the set of
subsequent steps whose actions become inexecutable once step $t$ is removed.
Three observations are worth highlighting:
(i) The agent first explores two unrelated drawers (steps~0--3). Removing step~0
or step~2 leaves the task completable, so both are redundant; their dependents,
the corresponding \texttt{open} actions (steps~1 and~3), are thereby marked
redundant without additional replays. Hence four perfectly executable actions
receive no credit, a distinction that the outcome reward alone cannot make.
(ii) Removing any of the remaining six steps causes the task to fail, so all of
them are necessary and together form a compact success chain
(\textit{locate} $\to$ \textit{grasp} $\to$ \textit{clean} $\to$ \textit{deliver}).
(iii) $\mathrm{Dep}(t)$ exposes a dependency structure aligned with the sub-goals of
the task: removing the visit to the countertop (step~4) invalidates taking the
knife (step~5), cleaning it (step~7), and the final placement (step~9), whereas the
sink-basin visit (step~6) only enables the cleaning step. Notably, steps~7 and~9
have empty dependency sets yet are still necessary: removing them leaves every
later action executable but the goal unsatisfied. Environment-level success
verification and action-level validity checks are therefore complementary.
The resulting labels are used directly for credit assignment on successful
trajectories, and serve as supervision for the step discriminator, which is
applied to failed trajectories, where counterfactual verification is not
applicable, to locate the first erroneous step.

\section{Extended Related Work}
RL fine-tuning of large language models~\citep{rl_rlvmr, rl_arpo, rl_spo, rl_rlver, rl_istar, criticsearch, acpo, carl, hz_hindsight, qr_gca, yx_finhopper, zs_r2-searcher, xdr_harnessing} was popularized by RLHF~\citep{prime, pure, rl_spo, stepsearch, hindsight} with PPO~\citep{ouyang2022training,bai2022training}, where on-policy actor-critic updates are used to optimize against a learned reward model. To remove the value network and reduce the memory and compute footprint, GRPO~\citep{rl_grpo} estimates advantages from groups of responses sharing the same prompt, and has since become the de facto recipe behind reasoning-oriented LLMs such as DeepSeek-R1~\citep{guo2025deepseek}, Qwen2.5-Math~\citep{yang2024qwen2}, and Kimi-K1.5~\citep{team2025kimi}. A growing line of work refines GRPO by addressing its bias and stability issues: Dr.\,GRPO removes length and difficulty biases in advantage normalization~\citep{rl_drgrpo}, DAPO introduces decoupled clipping and dynamic sampling for long-CoT training~\citep{rl_dapo}, GSPO performs sequence-level importance weighting to reduce variance~\citep{rl_gspo}, and VinePPO and VAPO revisit token-level credit assignment with Monte-Carlo value estimates~\citep{rl_vineppo,yue2025vapo}. RLOO and ReMax similarly exploit multi-sample baselines to obtain low-variance, value-network-free updates~\citep{rl_rloo,li2023remax}. Because each rollout from an LLM is expensive, recent work increasingly emphasizes \emph{reusing off-policy experience}, which can be viewed as the LLM analogue of experience replay. ReST, ReST-EM, and RAFT iteratively collect, filter, and replay self-generated trajectories for reward-weighted updates~\citep{gulcehre2023reinforced,singh2023beyond,dong2023raft, lare, Implicitprm}, while Self-Rewarding LMs and SPIN bootstrap a replay-style buffer of model completions for iterative DPO-style training~\citep{yuan2024self,chen2024self}. Several works push GRPO itself toward the off-policy regime: Tapered Off-Policy REINFORCE and LUFFY incorporate stale or expert trajectories with importance correction and clipping~\citep{roux2025tapered,yan2025learning}, and asynchronous trajectory-balance training decouples data collection from policy updates to better utilize a replay buffer~\citep{bartoldson2025trajectory}. In multi-turn agentic RL~\citep{hu2025training, rl_tree-grpo, rl_rest, rl_igpo, rl_portool, rl_gepo, spa-rl}, replay buffers of past trajectories have been shown to stabilize long-horizon credit assignment in environments such as ALFWorld and WebShop~\citep{zhou2024archer,song2024trial,xi2025agentgym}. These developments collectively position experience replay as a key lever for sample-efficient and scalable RL post-training of LLMs.

To improve sample efficiency in online RL, replay-based methods~\citep{rl_exgrpo, rl_arpo, rl_off_policy_GRPO} store and reuse historical rollouts from replay buffers, reducing redundant sampling. These approaches typically prioritize valuable experiences based on correctness or entropy.

\section{Prompts}

We adopt the same prompt design as GiGPO~\citep{rl_gigpo} for both ALFWorld and WebShop agents.
As shown in Figure~\ref{fig:prompt_alfworld} and Figure~\ref{fig:prompt_webshop}, each prompt provides the agent with the task description, recent interaction history, current observation, and the set of admissible actions.
The agent is instructed to first produce step-by-step reasoning within \texttt{\color{teal}<think>} \texttt{\color{teal}</think>} tags, followed by the selected action within \texttt{\color{violet}<action>} \texttt{\color{violet}</action>} tags.
We set the history length to 5 for both environments.

In addition to the standard prompts, we introduce \emph{think-only} prompt templates (Figure~\ref{fig:prompt_alfworld_think_only} and Figure~\ref{fig:prompt_webshop_think_only}) specifically for the trajectory evolution phase.
During evolution, certain steps are assigned predetermined actions that are either retained from the original trajectory or corrected via format mapping, and the model only needs to generate the corresponding reasoning.
The think-only prompt replaces the original action instruction block with the predetermined action and asks the model to produce only the reasoning content, without outputting any action tags.
This ensures that evolved trajectories maintain coherent thought processes for every step, even when the actions are externally specified rather than generated by the policy.

\clearpage

\vspace{10 em}
\begin{figure}[t]
\begin{tcolorbox}[
  enhanced,
  title={\textbf{Prompt Template for ALFWorld}},
  colback=white,
  colframe=black!80,
  coltitle=white,
  colbacktitle=black!80,
  fonttitle=\sffamily\small,
  boxrule=0.8pt,
  arc=3mm,
  left=6pt, right=6pt, top=4pt, bottom=4pt,
  breakable
]
\small
You are an expert agent operating in the ALFRED Embodied Environment. Your task is to: \texttt{\color{brown}\{task\_description\}}. Prior to this step, you have already taken \texttt{\color{brown}\{step\_count\}} step(s). Below are the most recent \texttt{\color{brown}\{history\_length\}} observations and the corresponding actions you took: \texttt{\color{brown}\{action\_history\}}. You are now at step \texttt{\color{brown}\{current\_step\}} and your current observation is: \texttt{\color{brown}\{current\_observation\}}. Your admissible actions of the current situation are: [\texttt{\color{brown}\{admissible\_actions\}}].

Now it's your turn to take an action. You should first reason step-by-step about the current situation. This reasoning process MUST be enclosed within \texttt{\color{teal}<think>} \texttt{\color{teal}</think>} tags. Once you've finished your reasoning, you should choose an admissible action for current step and present it within \texttt{\color{violet}<action>} \texttt{\color{violet}</action>} tags.
\end{tcolorbox}
\caption{The prompt template of ALFWorld agents.}
\label{fig:prompt_alfworld}
\end{figure}

\vspace{10 em}

\begin{figure}[t]
\begin{tcolorbox}[
  enhanced,
  title={\textbf{Prompt Template for WebShop}},
  colback=white,
  colframe=black!80,
  coltitle=white,
  colbacktitle=black!80,
  fonttitle=\sffamily\small,
  boxrule=0.8pt,
  arc=3mm,
  left=6pt, right=6pt, top=4pt, bottom=4pt,
  breakable
]
\small
You are an expert autonomous agent operating in the WebShop e-commerce environment. Your task is to: \texttt{\color{brown}\{task\_description\}}. Prior to this step, you have already taken \texttt{\color{brown}\{step\_count\}} step(s). Below are the most recent \texttt{\color{brown}\{history\_length\}} observations and the corresponding actions you took: \texttt{\color{brown}\{action\_history\}}. You are now at step \texttt{\color{brown}\{current\_step\}} and your current observation is: \texttt{\color{brown}\{current\_observation\}}. Your admissible actions of the current situation are: [\texttt{\color{brown}\{available\_actions\}}].

Now it's your turn to take one action for the current step. You should first reason step-by-step about the current situation, then think carefully which admissible action best advances the shopping goal. This reasoning process MUST be enclosed within \texttt{\color{teal}<think>} \texttt{\color{teal}</think>} tags. Once you've finished your reasoning, you should choose an admissible action for current step and present it within \texttt{\color{violet}<action>} \texttt{\color{violet}</action>} tags.
\end{tcolorbox}
\caption{The prompt template of WebShop agents.}
\label{fig:prompt_webshop}
\end{figure}

\definecolor{varcolor}{RGB}{30, 95, 200}

\begin{figure}[t]
\begin{tcolorbox}[
  enhanced,
  title={\textbf{Think-Only Prompt Template for ALFWorld}},
  colback=white,
  colframe=black!80,
  coltitle=white,
  colbacktitle=black!80,
  fonttitle=\sffamily\small,
  boxrule=0.8pt,
  arc=3mm,
  left=6pt, right=6pt, top=4pt, bottom=4pt,
  breakable
]
\small
You are an expert agent operating in the ALFRED Embodied Environment. Your task is to: \texttt{\color{brown}\{task\_description\}}. Prior to this step, you have already taken \texttt{\color{brown}\{step\_count\}} step(s). Below are the most recent \texttt{\color{brown}\{history\_length\}} observations and the corresponding actions you took: \texttt{\color{brown}\{action\_history\}}. You are now at step \texttt{\color{brown}\{current\_step\}} and your current observation is: \texttt{\color{brown}\{current\_observation\}}. Your admissible actions of the current situation are: [\texttt{\color{brown}\{admissible\_actions\}}].

The action to take is: \texttt{\color{varcolor}\{evolution\_action\}}.
Generate only your reasoning for why this action is appropriate. Do not output any action tags, just the reasoning content.
\end{tcolorbox}
\caption{The think-only prompt template used during the trajectory evolution phase in ALFWorld. The predetermined action is provided and the model is asked to generate only the reasoning.}
\label{fig:prompt_alfworld_think_only}
\end{figure}

\begin{figure}[t]
\begin{tcolorbox}[
  enhanced,
  title={\textbf{Think-Only Prompt Template for WebShop}},
  colback=white,
  colframe=black!80,
  coltitle=white,
  colbacktitle=black!80,
  fonttitle=\sffamily\small,
  boxrule=0.8pt,
  arc=3mm,
  left=6pt, right=6pt, top=4pt, bottom=4pt,
  breakable
]
\small
You are an expert autonomous agent operating in the WebShop e-commerce environment. Your task is to: \texttt{\color{brown}\{task\_description\}}. Prior to this step, you have already taken \texttt{\color{brown}\{step\_count\}} step(s). Below are the most recent \texttt{\color{brown}\{history\_length\}} observations and the corresponding actions you took: \texttt{\color{brown}\{action\_history\}}. You are now at step \texttt{\color{brown}\{current\_step\}} and your current observation is: \texttt{\color{brown}\{current\_observation\}}. Your admissible actions of the current situation are: [\texttt{\color{brown}\{available\_actions\}}].

The action to take is: \texttt{\color{varcolor}\{evolution\_action\}}. Generate only your reasoning for why this action is appropriate. Do not output any action tags, just the reasoning content.
\end{tcolorbox}
\caption{The think-only prompt template used during the trajectory evolution phase in WebShop. The predetermined action is provided and the model is asked to generate only the reasoning.}
\label{fig:prompt_webshop_think_only}
\end{figure}

\clearpage

%%%%%%%%%%%%%%%%%%%%%%%%%%%%%%%%%%%%%%%%%%%%%%%%%%%%%%%%%%%%

\newpage
\section*{NeurIPS Paper Checklist}

\begin{enumerate}

\item {\bf Claims}
    \item[] Question: Do the main claims made in the abstract and introduction accurately reflect the paper's contributions and scope?
    \item[] Answer: \answerYes{} % Replace by \answerYes{}, \answerNo{}, or \answerNA{}.
    \item[] Justification: The main claims are stated in the abstract and introduction
    \item[] Guidelines:
    \begin{itemize}
        \item The answer \answerNA{} means that the abstract and introduction do not include the claims made in the paper.
        \item The abstract and/or introduction should clearly state the claims made, including the contributions made in the paper and important assumptions and limitations. A \answerNo{} or \answerNA{} answer to this question will not be perceived well by the reviewers. 
        \item The claims made should match theoretical and experimental results, and reflect how much the results can be expected to generalize to other settings. 
        \item It is fine to include aspirational goals as motivation as long as it is clear that these goals are not attained by the paper. 
    \end{itemize}

\item {\bf Limitations}
    \item[] Question: Does the paper discuss the limitations of the work performed by the authors?
    \item[] Answer: \answerYes{} % Replace by \answerYes{}, \answerNo{}, or \answerNA{}.
    \item[] Justification: Please refer to Section 5.
    \item[] Guidelines:
    \begin{itemize}
        \item The answer \answerNA{} means that the paper has no limitation while the answer \answerNo{} means that the paper has limitations, but those are not discussed in the paper. 
        \item The authors are encouraged to create a separate ``Limitations'' section in their paper.
        \item The paper should point out any strong assumptions and how robust the results are to violations of these assumptions (e.g., independence assumptions, noiseless settings, model well-specification, asymptotic approximations only holding locally). The authors should reflect on how these assumptions might be violated in practice and what the implications would be.
        \item The authors should reflect on the scope of the claims made, e.g., if the approach was only tested on a few datasets or with a few runs. In general, empirical results often depend on implicit assumptions, which should be articulated.
        \item The authors should reflect on the factors that influence the performance of the approach. For example, a facial recognition algorithm may perform poorly when image resolution is low or images are taken in low lighting. Or a speech-to-text system might not be used reliably to provide closed captions for online lectures because it fails to handle technical jargon.
        \item The authors should discuss the computational efficiency of the proposed algorithms and how they scale with dataset size.
        \item If applicable, the authors should discuss possible limitations of their approach to address problems of privacy and fairness.
        \item While the authors might fear that complete honesty about limitations might be used by reviewers as grounds for rejection, a worse outcome might be that reviewers discover limitations that aren't acknowledged in the paper. The authors should use their best judgment and recognize that individual actions in favor of transparency play an important role in developing norms that preserve the integrity of the community. Reviewers will be specifically instructed to not penalize honesty concerning limitations.
    \end{itemize}

\item {\bf Theory assumptions and proofs}
    \item[] Question: For each theoretical result, does the paper provide the full set of assumptions and a complete (and correct) proof?
    \item[] Answer: \answerNA{} % Replace by \answerYes{}, \answerNo{}, or \answerNA{}.
    \item[] Justification: We have given the full assumptions and complete proofs.
    \item[] Guidelines:
    \begin{itemize}
        \item The answer \answerNA{} means that the paper does not include theoretical results. 
        \item All the theorems, formulas, and proofs in the paper should be numbered and cross-referenced.
        \item All assumptions should be clearly stated or referenced in the statement of any theorems.
        \item The proofs can either appear in the main paper or the supplemental material, but if they appear in the supplemental material, the authors are encouraged to provide a short proof sketch to provide intuition. 
        \item Inversely, any informal proof provided in the core of the paper should be complemented by formal proofs provided in appendix or supplemental material.
        \item Theorems and Lemmas that the proof relies upon should be properly referenced. 
    \end{itemize}

    \item {\bf Experimental result reproducibility}
    \item[] Question: Does the paper fully disclose all the information needed to reproduce the main experimental results of the paper to the extent that it affects the main claims and/or conclusions of the paper (regardless of whether the code and data are provided or not)?
    \item[] Answer: \answerYes{} % Replace by \answerYes{}, \answerNo{}, or \answerNA{}.
    \item[] Justification: We have included all the information needed to reproduce the results.
    \item[] Guidelines:
    \begin{itemize}
        \item The answer \answerNA{} means that the paper does not include experiments.
        \item If the paper includes experiments, a \answerNo{} answer to this question will not be perceived well by the reviewers: Making the paper reproducible is important, regardless of whether the code and data are provided or not.
        \item If the contribution is a dataset and\slash or model, the authors should describe the steps taken to make their results reproducible or verifiable. 
        \item Depending on the contribution, reproducibility can be accomplished in various ways. For example, if the contribution is a novel architecture, describing the architecture fully might suffice, or if the contribution is a specific model and empirical evaluation, it may be necessary to either make it possible for others to replicate the model with the same dataset, or provide access to the model. In general. releasing code and data is often one good way to accomplish this, but reproducibility can also be provided via detailed instructions for how to replicate the results, access to a hosted model (e.g., in the case of a large language model), releasing of a model checkpoint, or other means that are appropriate to the research performed.
        \item While NeurIPS does not require releasing code, the conference does require all submissions to provide some reasonable avenue for reproducibility, which may depend on the nature of the contribution. For example
        \begin{enumerate}
            \item If the contribution is primarily a new algorithm, the paper should make it clear how to reproduce that algorithm.
            \item If the contribution is primarily a new model architecture, the paper should describe the architecture clearly and fully.
            \item If the contribution is a new model (e.g., a large language model), then there should either be a way to access this model for reproducing the results or a way to reproduce the model (e.g., with an open-source dataset or instructions for how to construct the dataset).
            \item We recognize that reproducibility may be tricky in some cases, in which case authors are welcome to describe the particular way they provide for reproducibility. In the case of closed-source models, it may be that access to the model is limited in some way (e.g., to registered users), but it should be possible for other researchers to have some path to reproducing or verifying the results.
        \end{enumerate}
    \end{itemize}

\item {\bf Open access to data and code}
    \item[] Question: Does the paper provide open access to the data and code, with sufficient instructions to faithfully reproduce the main experimental results, as described in supplemental material?
    \item[] Answer: \answerYes{} % Replace by \answerYes{}, \answerNo{}, or \answerNA{}.
    \item[] Justification: We have released the code. In addition, the datasets used in our work are all public datasets.
    \item[] Guidelines:
    \begin{itemize}
        \item The answer \answerNA{} means that paper does not include experiments requiring code.
        \item Please see the NeurIPS code and data submission guidelines (\url{https://neurips.cc/public/guides/CodeSubmissionPolicy}) for more details.
        \item While we encourage the release of code and data, we understand that this might not be possible, so \answerNo{} is an acceptable answer. Papers cannot be rejected simply for not including code, unless this is central to the contribution (e.g., for a new open-source benchmark).
        \item The instructions should contain the exact command and environment needed to run to reproduce the results. See the NeurIPS code and data submission guidelines (\url{https://neurips.cc/public/guides/CodeSubmissionPolicy}) for more details.
        \item The authors should provide instructions on data access and preparation, including how to access the raw data, preprocessed data, intermediate data, and generated data, etc.
        \item The authors should provide scripts to reproduce all experimental results for the new proposed method and baselines. If only a subset of experiments are reproducible, they should state which ones are omitted from the script and why.
        \item At submission time, to preserve anonymity, the authors should release anonymized versions (if applicable).
        \item Providing as much information as possible in supplemental material (appended to the paper) is recommended, but including URLs to data and code is permitted.
    \end{itemize}

\item {\bf Experimental setting/details}
    \item[] Question: Does the paper specify all the training and test details (e.g., data splits, hyperparameters, how they were chosen, type of optimizer) necessary to understand the results?
    \item[] Answer: \answerYes{} % Replace by \answerYes{}, \answerNo{}, or \answerNA{}.
    \item[] Justification:  Please refer to the Implementation Details Section.
    \item[] Guidelines: 
    \begin{itemize}
        \item The answer \answerNA{} means that the paper does not include experiments.
        \item The experimental setting should be presented in the core of the paper to a level of detail that is necessary to appreciate the results and make sense of them.
        \item The full details can be provided either with the code, in appendix, or as supplemental material.
    \end{itemize}

\item {\bf Experiment statistical significance}
    \item[] Question: Does the paper report error bars suitably and correctly defined or other appropriate information about the statistical significance of the experiments?
    \item[] Answer: \answerYes{} % Replace by \answerYes{}, \answerNo{}, or \answerNA{}.
    \item[] Justification: We conduct the two-sided t-test on our results. 
    \item[] Guidelines:
    \begin{itemize}
        \item The answer \answerNA{} means that the paper does not include experiments.
        \item The authors should answer \answerYes{} if the results are accompanied by error bars, confidence intervals, or statistical significance tests, at least for the experiments that support the main claims of the paper.
        \item The factors of variability that the error bars are capturing should be clearly stated (for example, train/test split, initialization, random drawing of some parameter, or overall run with given experimental conditions).
        \item The method for calculating the error bars should be explained (closed form formula, call to a library function, bootstrap, etc.)
        \item The assumptions made should be given (e.g., Normally distributed errors).
        \item It should be clear whether the error bar is the standard deviation or the standard error of the mean.
        \item It is OK to report 1-sigma error bars, but one should state it. The authors should preferably report a 2-sigma error bar than state that they have a 96\% CI, if the hypothesis of Normality of errors is not verified.
        \item For asymmetric distributions, the authors should be careful not to show in tables or figures symmetric error bars that would yield results that are out of range (e.g., negative error rates).
        \item If error bars are reported in tables or plots, the authors should explain in the text how they were calculated and reference the corresponding figures or tables in the text.
    \end{itemize}

\item {\bf Experiments compute resources}
    \item[] Question: For each experiment, does the paper provide sufficient information on the computer resources (type of compute workers, memory, time of execution) needed to reproduce the experiments?
    \item[] Answer: \answerYes{} % Replace by \answerYes{}, \answerNo{}, or \answerNA{}.
    \item[] Justification: Please refer to Implementation Details.
    \item[] Guidelines:
    \begin{itemize}
        \item The answer \answerNA{} means that the paper does not include experiments.
        \item The paper should indicate the type of compute workers CPU or GPU, internal cluster, or cloud provider, including relevant memory and storage.
        \item The paper should provide the amount of compute required for each of the individual experimental runs as well as estimate the total compute. 
        \item The paper should disclose whether the full research project required more compute than the experiments reported in the paper (e.g., preliminary or failed experiments that didn't make it into the paper). 
    \end{itemize}
    
\item {\bf Code of ethics}
    \item[] Question: Does the research conducted in the paper conform, in every respect, with the NeurIPS Code of Ethics \url{https://neurips.cc/public/EthicsGuidelines}?
    \item[] Answer: \answerYes{} % Replace by \answerYes{}, \answerNo{}, or \answerNA{}.
    \item[] Justification: We conform to the NeurIPS Code of Ethics.
    \item[] Guidelines:
    \begin{itemize}
        \item The answer \answerNA{} means that the authors have not reviewed the NeurIPS Code of Ethics.
        \item If the authors answer \answerNo, they should explain the special circumstances that require a deviation from the Code of Ethics.
        \item The authors should make sure to preserve anonymity (e.g., if there is a special consideration due to laws or regulations in their jurisdiction).
    \end{itemize}

\item {\bf Broader impacts}
    \item[] Question: Does the paper discuss both potential positive societal impacts and negative societal impacts of the work performed?
    \item[] Answer: \answerYes{} % Replace by \answerYes{}, \answerNo{}, or \answerNA{}.
    \item[] Justification: We discuss the impacts in the Introduction section.
    \item[] Guidelines:
    \begin{itemize}
        \item The answer \answerNA{} means that there is no societal impact of the work performed.
        \item If the authors answer \answerNA{} or \answerNo, they should explain why their work has no societal impact or why the paper does not address societal impact.
        \item Examples of negative societal impacts include potential malicious or unintended uses (e.g., disinformation, generating fake profiles, surveillance), fairness considerations (e.g., deployment of technologies that could make decisions that unfairly impact specific groups), privacy considerations, and security considerations.
        \item The conference expects that many papers will be foundational research and not tied to particular applications, let alone deployments. However, if there is a direct path to any negative applications, the authors should point it out. For example, it is legitimate to point out that an improvement in the quality of generative models could be used to generate Deepfakes for disinformation. On the other hand, it is not needed to point out that a generic algorithm for optimizing neural networks could enable people to train models that generate Deepfakes faster.
        \item The authors should consider possible harms that could arise when the technology is being used as intended and functioning correctly, harms that could arise when the technology is being used as intended but gives incorrect results, and harms following from (intentional or unintentional) misuse of the technology.
        \item If there are negative societal impacts, the authors could also discuss possible mitigation strategies (e.g., gated release of models, providing defenses in addition to attacks, mechanisms for monitoring misuse, mechanisms to monitor how a system learns from feedback over time, improving the efficiency and accessibility of ML).
    \end{itemize}
    
\item {\bf Safeguards}
    \item[] Question: Does the paper describe safeguards that have been put in place for responsible release of data or models that have a high risk for misuse (e.g., pre-trained language models, image generators, or scraped datasets)?
    \item[] Answer: \answerNA{} % Replace by \answerYes{}, \answerNo{}, or \answerNA{}.
    \item[] Justification: All datasets used in our work are public datasets.
    \item[] Guidelines:
    \begin{itemize}
        \item The answer \answerNA{} means that the paper poses no such risks.
        \item Released models that have a high risk for misuse or dual-use should be released with necessary safeguards to allow for controlled use of the model, for example by requiring that users adhere to usage guidelines or restrictions to access the model or implementing safety filters. 
        \item Datasets that have been scraped from the Internet could pose safety risks. The authors should describe how they avoided releasing unsafe images.
        \item We recognize that providing effective safeguards is challenging, and many papers do not require this, but we encourage authors to take this into account and make a best faith effort.
    \end{itemize}

\item {\bf Licenses for existing assets}
    \item[] Question: Are the creators or original owners of assets (e.g., code, data, models), used in the paper, properly credited and are the license and terms of use explicitly mentioned and properly respected?
    \item[] Answer: \answerYes{} % Replace by \answerYes{}, \answerNo{}, or \answerNA{}.
    \item[] Justification: We properly cite the dataset and models used in our work.
    \item[] Guidelines:
    \begin{itemize}
        \item The answer \answerNA{} means that the paper does not use existing assets.
        \item The authors should cite the original paper that produced the code package or dataset.
        \item The authors should state which version of the asset is used and, if possible, include a URL.
        \item The name of the license (e.g., CC-BY 4.0) should be included for each asset.
        \item For scraped data from a particular source (e.g., website), the copyright and terms of service of that source should be provided.
        \item If assets are released, the license, copyright information, and terms of use in the package should be provided. For popular datasets, \url{paperswithcode.com/datasets} has curated licenses for some datasets. Their licensing guide can help determine the license of a dataset.
        \item For existing datasets that are re-packaged, both the original license and the license of the derived asset (if it has changed) should be provided.
        \item If this information is not available online, the authors are encouraged to reach out to the asset's creators.
    \end{itemize}

\item {\bf New assets}
    \item[] Question: Are new assets introduced in the paper well documented and is the documentation provided alongside the assets?
    \item[] Answer: \answerYes{} % Replace by \answerYes{}, \answerNo{}, or \answerNA{}.
    \item[] Justification: We have given the descriptions in our paper.
    \item[] Guidelines:
    \begin{itemize}
        \item The answer \answerNA{} means that the paper does not release new assets.
        \item Researchers should communicate the details of the dataset\slash code\slash model as part of their submissions via structured templates. This includes details about training, license, limitations, etc. 
        \item The paper should discuss whether and how consent was obtained from people whose asset is used.
        \item At submission time, remember to anonymize your assets (if applicable). You can either create an anonymized URL or include an anonymized zip file.
    \end{itemize}

\item {\bf Crowdsourcing and research with human subjects}
    \item[] Question: For crowdsourcing experiments and research with human subjects, does the paper include the full text of instructions given to participants and screenshots, if applicable, as well as details about compensation (if any)? 
    \item[] Answer: \answerNA{} % Replace by \answerYes{}, \answerNo{}, or \answerNA{}.
    \item[] Justification: No crowdsourcing experiments are conducted in this work.
    \item[] Guidelines:
    \begin{itemize}
        \item The answer \answerNA{} means that the paper does not involve crowdsourcing nor research with human subjects.
        \item Including this information in the supplemental material is fine, but if the main contribution of the paper involves human subjects, then as much detail as possible should be included in the main paper. 
        \item According to the NeurIPS Code of Ethics, workers involved in data collection, curation, or other labor should be paid at least the minimum wage in the country of the data collector. 
    \end{itemize}

\item {\bf Institutional review board (IRB) approvals or equivalent for research with human subjects}
    \item[] Question: Does the paper describe potential risks incurred by study participants, whether such risks were disclosed to the subjects, and whether Institutional Review Board (IRB) approvals (or an equivalent approval/review based on the requirements of your country or institution) were obtained?
    \item[] Answer: \answerNA{} % Replace by \answerYes{}, \answerNo{}, or \answerNA{}.
    \item[] Justification: No potential risk is found in this work.
    \item[] Guidelines:
    \begin{itemize}
        \item The answer \answerNA{} means that the paper does not involve crowdsourcing nor research with human subjects.
        \item Depending on the country in which research is conducted, IRB approval (or equivalent) may be required for any human subjects research. If you obtained IRB approval, you should clearly state this in the paper. 
        \item We recognize that the procedures for this may vary significantly between institutions and locations, and we expect authors to adhere to the NeurIPS Code of Ethics and the guidelines for their institution. 
        \item For initial submissions, do not include any information that would break anonymity (if applicable), such as the institution conducting the review.
    \end{itemize}

\item {\bf Declaration of LLM usage}
    \item[] Question: Does the paper describe the usage of LLMs if it is an important, original, or non-standard component of the core methods in this research? Note that if the LLM is used only for writing, editing, or formatting purposes and does \emph{not} impact the core methodology, scientific rigor, or originality of the research, declaration is not required.
    %this research? 
    \item[] Answer: \answerNA{} % Replace by \answerYes{}, \answerNo{}, or \answerNA{}.
    \item[] Justification: LLM usage does not impact the core methodology, scientific rigorousness, or originality of the research.
    \item[] Guidelines:
    \begin{itemize}
        \item The answer \answerNA{} means that the core method development in this research does not involve LLMs as any important, original, or non-standard components.
        \item Please refer to our LLM policy in the NeurIPS handbook for what should or should not be described.
    \end{itemize}

\end{enumerate}

\end{document}